\newtheorem{proposition}{Proposition}
\newtheorem{theorem}{Theorem}
\newtheorem{lemma}{Lemma}
\newtheorem{assumption}{Assumption}
\DeclareMathOperator*{\diag}{diag}
\DeclareMathOperator*{\sign}{sign}
\icmltitlerunning{Adaptive Inertia}
\begin{document}

\twocolumn[
\icmltitle{Adaptive Inertia: \\ Disentangling the Effects of Adaptive Learning Rate and Momentum}

\begin{icmlauthorlist}
\icmlauthor{Zeke Xie}{utokyo,riken}
\icmlauthor{Xinrui Wang}{utokyo}
\icmlauthor{Huishuai Zhang}{msra}
\icmlauthor{Issei Sato}{utokyo}
\icmlauthor{Masashi Sugiyama}{riken,utokyo}
\end{icmlauthorlist}

\icmlaffiliation{utokyo}{The University of Tokyo}
\icmlaffiliation{riken}{RIKEN Center for AIP}
\icmlaffiliation{msra}{Microsoft Research Asia}

\icmlcorrespondingauthor{Zeke Xie}{zekexie16@gmail.com}

\icmlkeywords{Deep Learning, Optimization, Langevin Dynamics, Adam}

\vskip 0.3in
]



\printAffiliationsAndNotice{}  

\begin{abstract}

Adaptive Moment Estimation (Adam), which combines Adaptive Learning Rate and Momentum, would be the most popular stochastic optimizer for accelerating the training of deep neural networks. However, it is empirically known that Adam often generalizes worse than Stochastic Gradient Descent (SGD). The purpose of this paper is to unveil the mystery of this behavior in the diffusion theoretical framework. Specifically, we disentangle the effects of Adaptive Learning Rate and Momentum of the Adam dynamics on saddle-point escaping and flat minima selection. We prove that Adaptive Learning Rate can escape saddle points efficiently, but cannot select flat minima as SGD does. In contrast, Momentum provides a drift effect to help the training process pass through saddle points, and almost does not affect flat minima selection. This partly explains why SGD (with Momentum) generalizes better, while Adam generalizes worse but converges faster. Furthermore, motivated by the analysis, we design a novel adaptive optimization framework named {\it Adaptive Inertia}, which uses parameter-wise adaptive inertia to accelerate the training and provably favors flat minima as well as SGD. Our extensive experiments demonstrate that the proposed adaptive inertia method can generalize significantly better than SGD and conventional adaptive gradient methods.

\end{abstract}

\section{Introduction}
\label{sec:intro}

Adam \citep{kingma2014adam}, which combines Adaptive Learning Rate and Momentum, would be the most popular optimizer for accelerating the training of deep networks. However, Adam often generalizes worse and finds sharper minima than SGD \citep{wilson2017marginal} for popular convolutional neural networks, where the flat minima have been argued to be closely related with good generalization \citep{hochreiter1995simplifying,hochreiter1997flat,hardt2016train,zhang2017understanding,jiang2019fantastic,xie2022power}.

Meanwhile, the diffusion theory has been used as a tool to study how SGD selects minima \citep{jastrzkebski2017three,li2017stochastic,wu2018sgd,xu2018global,hu2019diffusion,nguyen2019first,zhu2019anisotropic,xie2020diffusion,li2021validity}. This line of research also suggests that injecting or enhancing SGD-like sharpness-dependent gradient noise may effectively help find flatter minima \citep{an1996effects,neelakantan2015adding,zhou2019toward,xie2020artificial,xie2021positive,haochen2021shape}. Especially, \citet{zhou2020towards} argued the better generalization performance of SGD over Adam by showing that SGD enjoys smaller escaping time than Adam from a basin of the same local minimum. However, this argument does not reflect the whole picture of the dynamics of SGD and Adam. Empirically, it does not explain why Adam converges faster than SGD. Moreover, theoretically, all previous works have not touched the saddle-point escaping property of the dynamics, which is considered as an important challenge of efficiently training deep networks \citep{dauphin2014identifying,staib2019escaping,jin2017escape,reddi2018generic}.

Our work mainly has two contributions. \textbf{1)} We disentangle the effects of Adaptive Learning Rate and Momentum in Adam learning dynamics and characterize their behaviors in terms of saddle-point escaping and flat minima selection, which explains why Adam usually converges fast but does not generalize well. Particularly, we prove that, Adaptive Learning Rate is good at escaping saddle points but not good at selecting flat minima, while Momentum helps escape saddle point and matters little to escaping sharp minima\footnote{When we talk about saddle points, we mean strict saddle points in this paper. }. \textbf{2)} Our theoretical analysis motivated us to propose a novel Adaptive Inertia  (Adai) optimization framework which is conceptually orthogonal to the existing adaptive gradient framework. Adai does not parameter-wisely adjust learning rates, but parameter-wisely adjusts the momentum hyperparameter, called {\it inertia}. Theoretically, Adai can provably escape saddle points fast while learning flat minima well, summarized in Table \ref{table:adai}. The extensive empirical results fully support the theoretical advantage of Adai.

 \begin{table}
\caption{Adaptive Learning Rate versus Adaptive Inertia. We denote advantages as \CheckmarkBold and disadvantages as \XSolidBold.}
\label{table:adai}
\begin{center}
\begin{small}
\resizebox{\columnwidth}{!}{%
\begin{tabular}{l| l| l| l}
\toprule
           & SGD  & Adaptive Learning Rate & Adaptive Inertia  \\
\midrule
Saddle-Escaping  & Slow \XSolidBold  & Fast \CheckmarkBold & Fast  \CheckmarkBold \\
\midrule
Minima Selection & Flat  \CheckmarkBold   & Sharp \XSolidBold &  Flat \CheckmarkBold \\
\bottomrule
\end{tabular}
}
\end{small}
\end{center}
\end{table}


The paper is organized as follows. In Section \ref{sec:diffusion}, we first introduce the dynamics of SGD and analyze its behavior on saddle-point escaping, which serves as a basis to compare with the behavior of Adaptive Learning Rate and Momentum. In Section \ref{sec:momentum}, we analyze the behavior of Momentum. In Section \ref{sec:adam}, we analyze the dynamics of Adam. In Section \ref{sec:adai}, we introduce the new optimizer Adai. In Section \ref{sec:empirical}, we empirically compare the performance of Adai, Adam variants, and SGD. Section \ref{sec:conclusion} concludes the paper with remarks.

\section{SGD and Diffusion}
\label{sec:diffusion}

In this section, we introduce some preliminaries about the SGD diffusion and then present the saddle-point escaping property of SGD dynamics.

\subsection{Prerequisites for SGD Diffusion}

We first review the SGD diffusion theory for escaping minima proposed by \citet{xie2020diffusion}. We denote the model parameters as $\theta$, the learning rate as $\eta$, the batch size as $B$, and the loss function over one minibatch and the whole training dataset as $\hat{L}(\theta)$ and $L(\theta)$, respectively. A typical optimization problem can be formulated as $\min_{\theta} L(\theta) $. We may write the stochastic differential equation/Langevin Equation that approximates SGD dynamics \citep{mandt2017stochastic,li2019stochastic} as
\begin{align}
\label{eq:GLD}
d \theta = - \nabla L(\theta) dt +  [ \eta C(\theta)]^{\frac{1}{2}}  dW_{t},
\end{align}
where $d W_{t} \sim  \mathcal{N}(0,Idt)$, $I$ is the identity matrix, and $C(\theta)$ is the gradient noise covariance matrix. The gradient noise is defined through the difference of the stochastic gradient over one minibatch and the true gradient over the whole training dataset, $\xi = \nabla \hat{L}(\theta) - \nabla L(\theta) $.
It is well known that the Fokker-Planck Equation describes the probability density governed by Langevin Equation \citep{risken1996fokker,sato2014approximation}. The Fokker-Planck Equation is
{\small
\begin{align}
 \frac{\partial P(\theta, t)}{\partial t}  =    \nabla \cdot [P(\theta, t) \nabla L(\theta) ]  + \nabla \cdot \nabla D(\theta)  P(\theta, t)  ,
\label{eq:FP}
\end{align}}
where $\nabla \cdot$ is the divergence operator and $D(\theta)= \frac{\eta C(\theta)}{2}$ is the diffusion matrix \citep{xie2020diffusion}. We note that the dynamical time $t$ is equal to the product of the number of iterations $T$ and the learning rate $\eta$: $t = \eta T$.

As the gradient variance dominates the gradient expectation near critical points, we have
\begin{align}
\label{eq:DCH}
D(\theta) = & \frac{\eta C(\theta)}{2}  \approx  \frac{\eta}{2B} \left[\frac{1}{N} \sum_{j=1}^{N}  \nabla L_{j}(\theta)  \nabla L_{j}(\theta)^{\top} \right]  \nonumber \\
= & \frac{\eta}{2B} \mathrm{FIM}(\theta) \approx \frac{\eta}{2B}[H(\theta)]^{+}  
\end{align}
near a critical point $c$, where $N$ is the number of training samples, $L_{j}(\theta)$ is the loss function of the $j$-th training sample, $H(\theta)$ is the Hessian of the loss function at $\theta$, and $\mathrm{FIM}(\theta)$ is the observed Fisher Information matrix, referring to Chapter 8 of \citet{pawitan2001all} and \citet{zhu2019anisotropic}. We further verified that Equation \eqref{eq:DCH} approximately holds even not around critical points in Figure \ref{fig:DCH}. Equation \eqref{eq:DCH} was also proposed by \citet{jastrzkebski2017three} and \citet{zhu2019anisotropic} and verified by \citet{xie2020diffusion} and \citet{daneshmand2018escaping}. Please refer to Appendix \ref{sec:noise} for the detailed analysis of the stochastic gradient noise.  

Given $H= U \diag(H_{1}, \ldots, H_{n-1}, H_{n})U^{\top}$, we use $[\cdot]^{+}$ to denote the transformation that $  [ H ]^{+} =  U\diag(|H_{1}|, \ldots, |H_{n-1}|, |H_{n}|)U^{\top} $. The $i$-th column vector of $U$ is the eigenvector corresponding to $H_{i}$. 

In the following analysis, we use the second-order Taylor approximation near critical points. This assumption is common and mild, when we focus on the behaviors near critical points \citep{mandt2017stochastic,zhang2019algorithmic,xie2020diffusion}. Note that, by Equation \eqref{eq:DCH} and Assumption \ref{as:taylor}, the diffusion matrix $D$ is independent of $\theta$ near critical points.
\begin{assumption}
 \label{as:taylor}
 The loss function around the critical point $c$ can be approximately written as 
 \[ L(\theta) = L(c) + \frac{1}{2}(\theta -c)^{\top} H(c) (\theta -c).\]
\end{assumption} 

\begin{figure}
 \center
\subfigure{\includegraphics[width =0.49\columnwidth ]{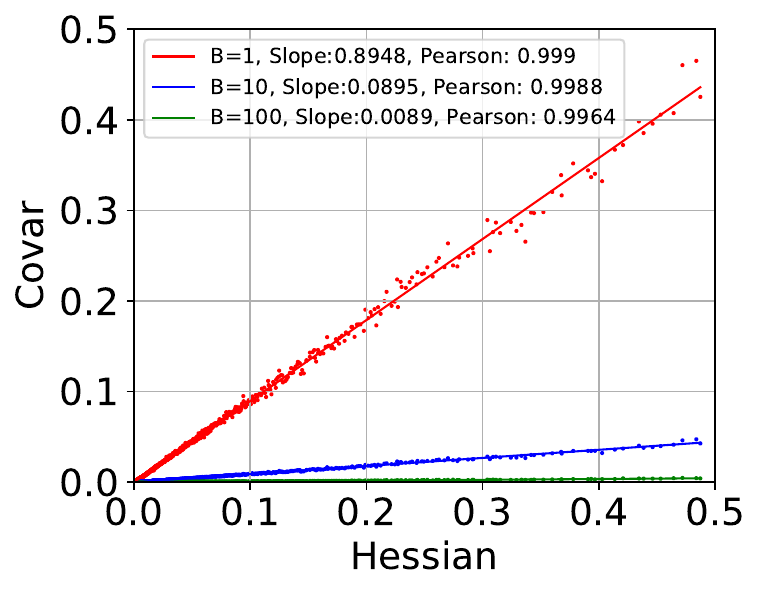}} 
\subfigure{\includegraphics[width =0.49\columnwidth ]{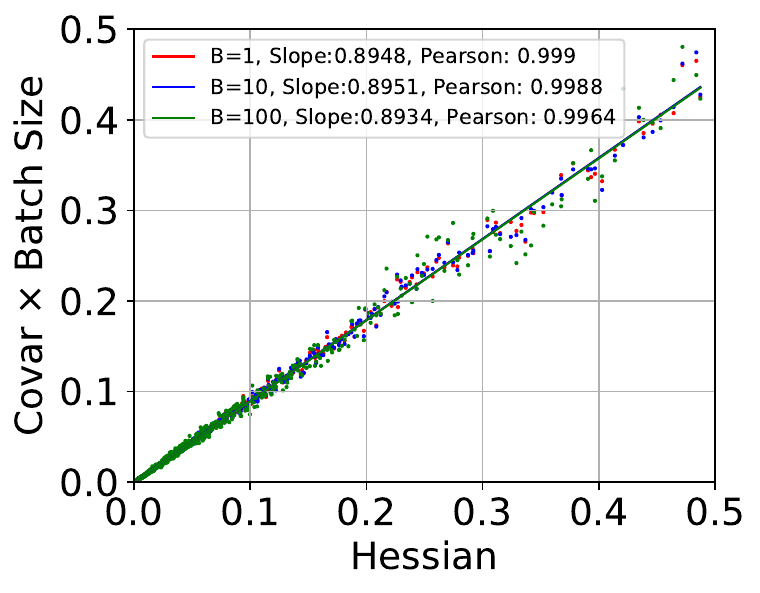}}   \\ 
\subfigure{\includegraphics[width =0.49\columnwidth ]{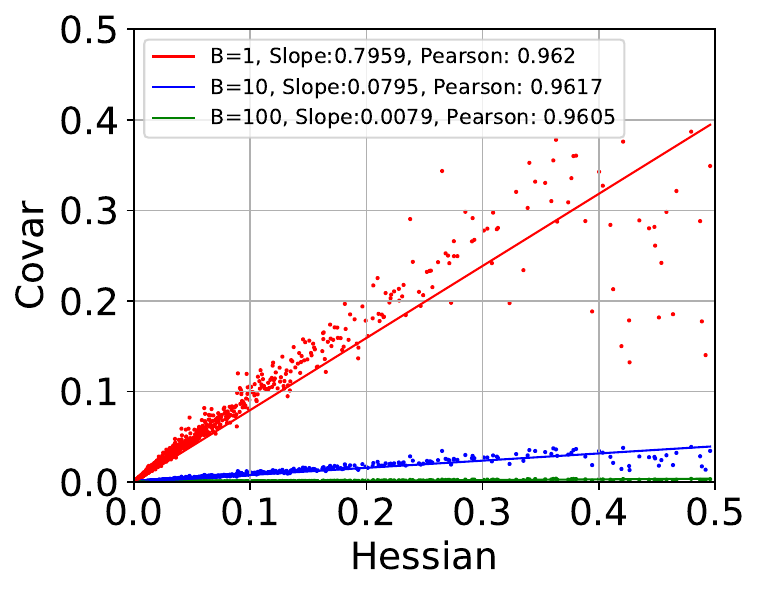}} 
\subfigure{\includegraphics[width =0.49\columnwidth ]{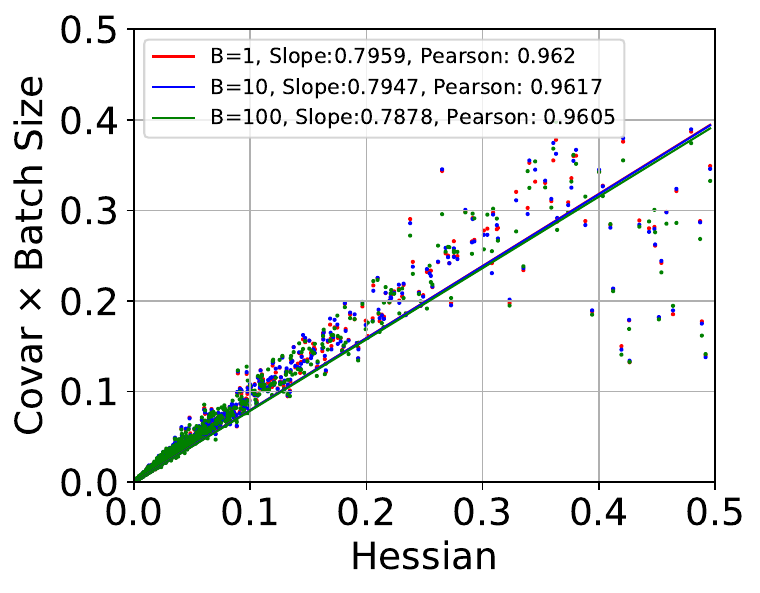}}  
\caption{We plot the elements of $H$ and $C$ to verfiy Equation \eqref{eq:DCH} using pretrained and random three-layer fully-connected networks on MNIST. Top: Pretrained Models. Bottom: Random Models.}
\label{fig:DCH}
\end{figure}

\subsection{SGD Diffusion near Saddle Points}

In this subsection, we establish the saddle-point escaping property of SGD diffusion as Theorem \ref{pr:ddiffusion}.
 
\begin{theorem}[SGD Escapes Saddle Points]
 \label{pr:ddiffusion}
Suppose $c$ is a critical point, Assumption \ref{as:taylor} and Equation \eqref{eq:DCH} hold, the dynamics is governed by Equation \eqref{eq:GLD}, and the initial parameter is at the saddle point $\theta = c$. Then the probability density function of $\theta$ after time $t$ is given by the Gaussian distribution $\theta \sim \mathcal{N}(c, \Sigma(t))$, where $\Sigma(t) = U \diag(\sigma_{1}^{2}, \ldots, \sigma_{n-1}^{2},\sigma_{n}^{2}) U^{\top}$ and 
 \[ \sigma_{i}^{2}(t) =  \frac{D_{i}}{H_{i}} [1 - \exp(-2H_{i}t)], \]
where $D_{i}$ is the $i$-th eigenvalue of the diffusion matrix $D$ and $H_{i}$ is the $i$-th eigenvalue of the Hessian matrix $H$ at $c$. The column vectors of $U$ are exactly the eigenvectors of $H$. The dynamical time $t = \eta T$. 
 In terms of SGD notations and $|H_{i}|\eta T \ll 1$ near saddle points, we have 
  \[ \sigma_{i}^{2}(T) = \frac{|H_{i}| \eta^{2}T}{B} + \mathcal{O}( B^{-1} H_{i}^{2} \eta^{3} T^{2}). \]
\end{theorem}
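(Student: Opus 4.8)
The plan is to localize the dynamics near $c$, recognize the resulting equation as a multidimensional Ornstein--Uhlenbeck process, solve it in closed form, and then Taylor-expand in the regime $|H_i|\eta T\ll 1$.

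\textbf{Step 1 (Localization).} Under Assumption~\ref{as:taylor} we have $\nabla L(\theta)=H(\theta-c)$, and by Equation~\eqref{eq:DCH} together with Assumption~\ref{as:taylor} the diffusion matrix $D=\frac{\eta}{2B}[H]^{+}$ is independent of $\theta$ in a neighborhood of $c$. Setting $x=\theta-c$, Equation~\eqref{eq:GLD} becomes the linear SDE $dx = -Hx\,dt + (2D)^{\frac12}dW_t$ with deterministic initial condition $x(0)=0$.

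\textbf{Step 2 (Diagonalization and explicit solution).} Since $D$ shares the eigenbasis $U$ of $H$, put $y=U^{\top}x$; because $U^{\top}W_t$ is again a standard Brownian motion, the coordinates decouple into $n$ scalar processes $dy_i = -H_i y_i\,dt + \sqrt{2D_i}\,dW^{(i)}_t$ with $y_i(0)=0$, whose solution $y_i(t)=\sqrt{2D_i}\int_0^t e^{-H_i(t-s)}\,dW^{(i)}_s$ is a centered Gaussian. Hence $y(t)\sim\mathcal{N}(0,\diag(\sigma_1^2(t),\dots,\sigma_n^2(t)))$ and $\theta=c+Uy\sim\mathcal{N}(c,\Sigma(t))$ with $\Sigma(t)=U\diag(\sigma_i^2(t))U^{\top}$. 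By the It\^o isometry, $\sigma_i^2(t)=2D_i\int_0^t e^{-2H_i(t-s)}\,ds = \frac{D_i}{H_i}\,[1-\exp(-2H_i t)]$, which is positive and finite for every sign of $H_i$ (in particular for the negative, escaping directions $H_i<0$). This gives the first display. Equivalently, substituting a Gaussian ansatz $P=\mathcal{N}(c,\Sigma(t))$ into the Fokker--Planck equation~\eqref{eq:FP} yields the Lyapunov ODE $\dot\Sigma = -H\Sigma-\Sigma H+2D$ with $\Sigma(0)=0$, whose diagonalized form $\dot{\sigma_i^2}=-2H_i\sigma_i^2+2D_i$ integrates to the same expression.

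\textbf{Step 3 (Expansion near saddle points).} For $|H_i|\eta T=|H_i|t\ll1$ use $1-\exp(-2H_i t)=2H_i t-2H_i^2 t^2+\mathcal{O}(H_i^3 t^3)$, so $\sigma_i^2(t)=2D_i t-2D_i H_i t^2+\mathcal{O}(D_i H_i^2 t^3)$. Substituting $D_i=\frac{\eta|H_i|}{2B}$, $t=\eta T$, and $|H_i|^2=H_i^2$ gives $\sigma_i^2(T)=\frac{|H_i|\eta^2 T}{B}+\mathcal{O}(B^{-1}H_i^2\eta^3 T^2)$, as claimed.

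\textbf{Main obstacle.} There is no deep difficulty: once the problem is localized, everything reduces to a standard OU computation. The point that needs care is the bookkeeping for the directions with $H_i<0$ — there the process is unstable (it drifts away from $c$), yet the variance $\frac{D_i}{H_i}[1-\exp(-2H_i t)]$ is still the correct, strictly positive, finite quantity for all finite $t$, and it is precisely this uniform treatment of stable and unstable directions that makes the statement informative for saddle-point escaping. A secondary caveat worth stating explicitly is that the Gaussian picture is valid only while $\theta$ remains inside the Taylor neighborhood of $c$, i.e., exactly in the short-time window $|H_i|\eta T\ll1$ in which the expanded formula is asserted.
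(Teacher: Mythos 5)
Your proof is correct and in substance the same as the paper's: the paper verifies the Gaussian ansatz term by term in the Fokker--Planck equation \eqref{eq:FP} to arrive at the variance ODE $\frac{\partial \sigma_i^{2}}{\partial t} = 2D_i - 2H_i\sigma_i^{2}$, which is exactly the diagonalized Lyapunov equation you record at the end of your Step 2, and your Step 3 expansion with $D_i=\frac{\eta|H_i|}{2B}$, $t=\eta T$ reproduces the paper's final substitution. Your primary route---solving the localized SDE as an Ornstein--Uhlenbeck process via the It\^o isometry---is a constructive derivation of the same solution rather than a verification, and your explicit sign bookkeeping for the unstable directions $H_i<0$ corresponds to the paper writing its final answer with a $\sign(H_i)$ factor.
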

The proof is relegated to Appendix \ref{pf:ndiffusion}.
We note that 1) if $H_{i} > 0$, the distribution of $\theta$ along the direction $i$ converges to a Gaussian distribution with constant variance, $\mathcal{N}\left(c_{i}, \frac{\eta}{2B}\right)$; 2) if $H_{i} < 0$, the distribution is Gaussian with the variance exponentially increasing with time $t$. 

As the displacement $\Delta \theta_{i} $ from the saddle point $c$ can be modeled as a center-fixed Gaussian distribution, the mean squared displacement is equivalent to the variance, namely $ \langle \Delta \theta_{i}^{2}(t) \rangle = \sigma_{i}^{2}(t)$. The result means that SGD escapes saddle points very slowly ($\langle \Delta \theta_{i}^{2} \rangle = \mathcal{O}(|H_{i}|)$) if $H_{i}$ is close to zero. Note that, in the diffusion analysis, the direction $i$ denotes the direction of an eigenvector instead of a coordinate's direction. While SGD updates model parameters along the coordinates, we do not need to treat the coordinates' directions specially in the continuous-time analysis.

\section{Analysis of Momentum Dynamics}
\label{sec:momentum}

In this section, we analyze Momentum in terms of saddle-point escaping and flat minima selection. 

\textbf{The continuous-time Momentum dynamics.} The Heavy Ball Method \citep{zavriev1993heavy} can be written as
\begin{align}
\label{eq:m-rules}
\begin{cases}
      & m_{t} = \beta_{1} m_{t-1} + \beta_{3} g_{t}, \\
      & \theta_{t+1} = \theta_{t} -  \eta m_{t},
 \end{cases}  
\end{align}
where $\beta_{1}$ and $\beta_{3}$ are the hyperparameters. We note there are two popular choices, which are, respectively, $\beta_{3} = 1$ corresponding to SGD-style Momentum and $\beta_{3} = 1- \beta_{1}$ corresponding to Adam-style Momentum, namely the exponentially moving average. 

We can write the motion equation in physics with the mass $M$ and the damping coefficient $\gamma$ as
\begin{align}
\label{eq:motion-equation}
\begin{cases}
      &  r_{t} =  (1 -  \gamma  dt ) r_{t-1} + \frac{F}{M} dt \\
      & \theta_{t+1} = \theta_{t} +  r_{t} dt,
 \end{cases}  
\end{align}
where $r_{t} = - m_{t}$, $F = g_{t}$, $dt = \eta$, $1 - \gamma dt = \beta_{1}$, and $\frac{dt}{M} = \beta_{3}$.
Thus, we obtain the differential form of the motion equation as
\begin{align}
M \ddot{\theta}  = - \gamma M \dot{\theta} + F, 
\label{eq:MomentumMotion}
\end{align}
where $\ddot{\theta}=\frac{d^{2}\theta}{dt^{2}}$ and $\dot{\theta}=\frac{d\theta}{dt}$. The left-hand side as the inertia force is equal to the mass $M$ times the acceleration $\ddot{\theta}$, the first term in the right-hand side as the damping force is equal to the damping coefficient $\gamma$ times the physical momentum $M\dot{\theta}$, and the second term in the right-hand side is equal to the external force $F$ in physics. We can easily obtain the mass $M = \frac{\eta }{\beta_{3}}$ and the damping coefficient $\gamma = \frac{1-\beta_{1}}{\eta }$ by comparing \eqref{eq:motion-equation} and \eqref{eq:MomentumMotion}. 

As $F$ corresponds to the stochastic gradient term, we obtain
\begin{align}
M d \dot{\theta} & = - \gamma M d\theta   -  \frac{\partial L(\theta)}{\partial \theta} dt +  [2 D ]^{\frac{1}{2}}  dW_{t}.
\label{eq:MomentumLD}
\end{align}
Its Fokker-Planck Equation in the phase space (the $\theta$-$\dot{\theta}$ space) is well known as
\begin{align}
\frac{\partial P(\theta, r ,t)}{\partial t}  =  & -   \nabla_{\theta} \cdot [ r P(\theta,r,t)] + \nonumber \\ &  \nabla_{r}  \cdot  \left[ \gamma  r + M^{-1} \nabla_{\theta} L(\theta) \right]  P(\theta, r, t) + \nonumber \\ &  \nabla_{r} \cdot M^{-2}D \cdot \nabla_{r} P(\theta,r,t) 
\label{eq:MomentumFP}
\end{align}
where $r = \dot{\theta}$. Equation \eqref{eq:MomentumFP} is not specialized for learning dynamics but a popular result in Langevin Dynamics literatures \citep{risken1996fokker,risken1985fokker,zhou2010rate,kalinay2012phase}. Equation (57) of \citet{radpay2020langevin} gives an exactly same form for describing finite-inertia Langevin Dynamics, Equation \eqref{eq:MomentumLD}. We contribution is the first to apply it to deep learning dynamics.

\textbf{Momentum escapes saddle points.} We formulate how Momentum escapes saddle points as Theorem \ref{pr:Momentum}, whose proof is relegated to Appendix \ref{pf:Momentum}. 
 
  \begin{theorem}[Momentum Escapes Saddle Points]
 \label{pr:Momentum}
Suppose $c$ is a critical point, Assumption \ref{as:taylor} holds, the dynamics is governed by Momentum, and the initial parameter is at the saddle point $\theta = c$. Then the mean squared displacement near saddle points is 
\begin{align}
\label{eq:momentumsaddle}
 \langle \Delta \theta_{i}^{2}(t) \rangle = & \frac{D_{i}}{\gamma^{3}M^{2}} [1-\exp(-\gamma t)]^{2} + \nonumber \\ 
   & \frac{ D_{i}}{\gamma M H_{i}} [1- \exp(- \frac{2 H_{i} t}{\gamma M})] ,
\end{align}
where $\Delta \theta(t) = \theta(t) - \theta(0)$ is the displacement of $\theta$, and $\langle \cdot \rangle$ denote the mean value. This first term is the momentum drift effect, and the second term is the diffusion effect. 
As $|H_{i}|\eta T \ll 1$ near ill-conditioned saddle points, it can be written in terms of Momentum notations as
\begin{align*} 
\langle \Delta \theta_{i}^{2} \rangle = & \frac{|H_{i}| \beta_{3}^{2} \eta^{2} }{2(1-\beta_{1})^{3}B} \left[1-\exp\left(- (1- \beta_{1})T\right)\right]^{2}  + \\ 
&  \frac{|H_{i}| \beta_{3}^{2} \eta^{2}T}{B(1-\beta_{1})^{2}}  + \mathcal{O}( B^{-1} H_{i}^{2} \eta^{3} T^{2}) .
\end{align*}
 \end{theorem}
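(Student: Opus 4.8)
The plan is to diagonalize the dynamics using Assumption~\ref{as:taylor}, reduce to one inertial Langevin equation per eigendirection, and then split the resulting displacement into an initial-momentum contribution and a noise-driven contribution whose mean squared displacements can be read off separately and added.

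First, I would rotate into the eigenbasis $U$ of $H(c)$. By Assumption~\ref{as:taylor} the drift is $\nabla L(\theta)=H(c)(\theta-c)$, and by Equation~\eqref{eq:DCH} together with Assumption~\ref{as:taylor} the diffusion matrix $D$ is diagonal in the same basis with entries $D_i$, so Equation~\eqref{eq:MomentumLD} decouples coordinatewise into $n$ independent scalar equations $M\ddot\theta_i=-\gamma M\dot\theta_i-H_i(\theta_i-c_i)+\sqrt{2D_i}\,\xi_i(t)$, i.e.\ a damped harmonic oscillator (an inverted one when $H_i<0$) driven by white noise. Equivalently this is a two-dimensional Ornstein--Uhlenbeck process in $(\theta_i-c_i,\dot\theta_i)$, so $\theta_i(t)$ is Gaussian with mean $c_i$ and the displacement $\Delta\theta_i(t)=\theta_i(t)-c_i$ has zero mean.

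Second, I would write $\Delta\theta_i(t)=\Delta\theta_i^{\mathrm{mom}}(t)+\Delta\theta_i^{\mathrm{noise}}(t)$, the displacement produced by the velocity the particle already carries at $t=0$ plus the displacement generated by the white noise on $[0,t]$; these are independent, so their mean squared displacements add. For $\Delta\theta_i^{\mathrm{mom}}$: the velocity thermalizes on the fast timescale $1/\gamma$ to the law $\dot\theta_i(0)\sim\mathcal N\!\big(0,\tfrac{D_i}{\gamma M^2}\big)$ dictated by $P_{eq}(r)$ in Equation~\eqref{eq:MomentumFP} (inverse temperature $\beta=\gamma M D^{-1}$), and since the restoring term is negligible over $1/\gamma$ near an ill-conditioned saddle, a momentum decays as $e^{-\gamma s}$ and hence contributes $\dot\theta_i(0)\,\tfrac{1-e^{-\gamma t}}{\gamma}$, whose variance is $\tfrac{D_i}{\gamma^3 M^2}[1-e^{-\gamma t}]^2$ --- the first (momentum-drift) term. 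For $\Delta\theta_i^{\mathrm{noise}}$: on the slow timescale inertia is negligible, so the noise-driven motion obeys the overdamped equation $\gamma M\,\dot\theta_i=-H_i(\theta_i-c_i)+\sqrt{2D_i}\,\xi_i$, i.e.\ the SGD-type process of Theorem~\ref{pr:ddiffusion} with effective Hessian eigenvalue $H_i/(\gamma M)$ and effective diffusion $D_i/(\gamma M)^2$; Theorem~\ref{pr:ddiffusion} then gives variance $\tfrac{D_i}{\gamma M H_i}\big[1-\exp(-\tfrac{2H_i t}{\gamma M})\big]$ --- the second (diffusion) term. Summing the two yields Equation~\eqref{eq:momentumsaddle}, and the argument covers $H_i<0$ verbatim (the slow exponential then grows). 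Third, I would substitute the dictionary $M=\eta/\beta_3$, $\gamma=(1-\beta_1)/\eta$ (hence $\gamma M=(1-\beta_1)/\beta_3$ and $\gamma^3 M^2=(1-\beta_1)^3/(\eta\beta_3^2)$), $t=\eta T$, and $D_i=\tfrac{\eta|H_i|}{2B}$ from Equation~\eqref{eq:DCH}: the first term of \eqref{eq:momentumsaddle} becomes $\tfrac{|H_i|\beta_3^2\eta^2}{2(1-\beta_1)^3 B}[1-e^{-(1-\beta_1)T}]^2$ and $\gamma t=(1-\beta_1)T$; for the second term the slow argument $\tfrac{2H_i t}{\gamma M}=\tfrac{2\beta_3 H_i\eta T}{1-\beta_1}$ is small because $|H_i|\eta T\ll1$, so expanding to first order gives $\tfrac{D_i}{\gamma M H_i}\big[1-\exp(-\tfrac{2H_i t}{\gamma M})\big]=\tfrac{|H_i|\beta_3^2\eta^2 T}{(1-\beta_1)^2 B}+\mathcal O(B^{-1}H_i^2\eta^3 T^2)$, and collecting the two pieces gives the stated expansion.

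The hard part is the second step: justifying the split into the momentum-carried and freshly-diffused contributions, the fast thermalization of $\dot\theta_i(0)$, and the neglect of the inertial corrections to the noise-driven piece --- those corrections are cross terms between the two contributions and are only subleading in the friction-dominated, small-$|H_i|t/(\gamma M)$ regime relevant here. This is essentially the two-timescale reduction of the Kramers equation used by \citet{risken1996fokker,zhou2010rate}; the diagonalization and the change of notation are routine.
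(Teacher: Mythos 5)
Your proposal is correct and follows essentially the same route as the paper's proof: decompose the phase-space solution into (i) a drift contribution from the velocity equilibrated to $P_{eq}(r)$ with the restoring force neglected near ill-conditioned saddles, and (ii) a diffusion contribution from the overdamped (Smoluchowski) reduction $\gamma M\,d\theta = -\nabla L\,dt + [2D]^{1/2}dW_t$ handled via Theorem~\ref{pr:ddiffusion}, then add the two mean squared displacements and substitute $M=\eta/\beta_3$, $\gamma=(1-\beta_1)/\eta$, $t=\eta T$, $D_i=\eta|H_i|/(2B)$. The only cosmetic difference is that the paper writes the drift term as the squared deterministic displacement $(c_i(t)-c_i)^2$ of the distribution's center with $\dot c(0)=r_{eq}$, whereas you compute it as the variance of the displacement induced by a random thermalized initial velocity; the two coincide numerically, and your explicit acknowledgment of the neglected cross terms is the same approximation the paper invokes implicitly as its ``math trick.''
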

 
By comparing Theorems \ref{pr:ddiffusion} and \ref{pr:Momentum}, we notice that, SGD escapes saddle points only due to the diffusion effect (similar to the second term in Equation \eqref{eq:momentumsaddle}), but Momentum provides an additional momentum drift effect (the first term in Equation \eqref{eq:momentumsaddle}) for passing through saddle points \citep{wang2019escaping}. This momentum drift effect has not been mathematically revealed before, to the best of our knowledge.

\textbf{Momentum escapes minima.} We first introduce two classical assumptions.  

 \begin{assumption}[Quasi-Equilibrium Approximation]
  \label{as:quasi}
The system is in quasi-equilibrium near minima.
 \end{assumption}
  \begin{assumption}[Low Temperature Approximation]
   \label{as:low}
The system is under low temperature (small gradient noise). 
 \end{assumption}
 
\citet{xie2020diffusion} modeled the process of SGD escaping minima as a Kramers Escape Problem in deep learning. Recent machine learning papers \citep{jastrzkebski2017three,xie2020diffusion,zhou2020towards} also used Assumptions \ref{as:quasi} and \ref{as:low} as the background implicitly or explicitly. Quasi-Equilibrium Approximation and Low Temperature Approximation have been widely used in many fields' Kramers Escape Problems for state transition/minima selection, including statistical physics\citep{kramers1940brownian,hanggi1986escape}, chemistry\citep{eyring1935activated,hanggi1990reaction}, biology\citep{zhou2010rate}, electrical engineering\citep{coffey2012langevin}, and stochastic process\citep{van1992stochastic,berglund2013kramers}. 
 
Assumptions \ref{as:quasi} and \ref{as:low} mean that the diffusion theory is good at modeling ``slow'' escape processes that cost more iterations. As this class of ``slow'' escape processes takes main computational time compared with ``fast'' escape processes, this class of ``slow'' escape process is more interesting for training of deep neural networks. Empirically, \citet{xie2020diffusion} reported that the escape processes in the wide range of iterations (50 to 100,000 iterations) can be modeled as a Kramers Escape Problem very well. Our empirical results in Section \ref{sec:empirical} support this point again. Thus, Assumptions \ref{as:quasi} and \ref{as:low} are reasonable in practice (see more discussions in Appendix \ref{sec:assumption}).

We next formulate how Momentum escapes minima as Theorem \ref{pr:Momentumtransition}, which is based on Theorem 3.2 in \citet{xie2020diffusion} and the effective diffusion correction for the phase-space Fokker-Planck Equation \eqref{eq:MomentumFP} in \citet{kalinay2012phase}. 
We analyze the mean escape time $\tau$ \citep{van1992stochastic,xie2020diffusion} required for the escaping process from a loss valley. Suppose that the saddle point $b$ is the exit of Loss Valley $a$, and $\Delta L = L(b) - L(a)$ is the barrier height(See more details in Appendix \ref{sec:escape}.).

\begin{theorem}[Momentum Escapes Minima]\label{pr:Momentumtransition}

Suppose Assumptions \ref{as:taylor}, \ref{as:quasi}, and \ref{as:low} hold, and the dynamics is governed by Momentum. Then the mean escape time from Loss Valley a to the outside of Loss Valley a is given by
{\small
\begin{align*} 
\tau =  &\pi  \left[ \sqrt{1 + \frac{4|H_{be}| }{\gamma^{2}M } } +  1  \right] \frac{1}{ |H_{be}|} \cdot \\
& \exp\left[ \frac{2 \gamma M B \Delta L}{\eta } \left(\frac{s }{H_{ae}} + \frac{(1-s)}{|H_{be}|}\right)\right]  ,
\end{align*}}
where the subscript $e$ indicates the escape direction, $s \in (0,1)$ is a path-dependent parameter, and $H_{ae}$ and $H_{be}$ are the eigenvalues of the Hessians of the loss function at the minimum $a$ and the saddle point $b$ along the escape direction $e$. When $\frac{4|H_{be}| }{\gamma^{2}M} \ll 1$, it reduces to SGD. In terms of Momentum notations, we have $\log(\tau) = \mathcal{O}\left( \frac{2 (1-\beta_{1})B \Delta L}{\beta_{3} \eta H_{ae} } \right) $.
 \end{theorem}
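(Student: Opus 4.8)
The statement is a Kramers barrier-crossing estimate, but for Momentum it lives in the extended phase space $(\theta, r)$ governed by \eqref{eq:MomentumFP} rather than in configuration space, so the plan is to glue two ingredients together. The first is the density-diffusion escape-rate analysis of \citet{xie2020diffusion} (their Theorem 3.2): it already handles the anisotropic, $\theta$-dependent noise $D\approx\frac{\eta}{2B}[H]^{+}$ of \eqref{eq:DCH} and, for an overdamped configuration-space diffusion $d\theta=-\nabla L\,dt+[2D]^{1/2}dW_{t}$ as in \eqref{eq:GLD}, yields a mean escape time of the form $\frac{2\pi}{|H_{be}|}\exp[\frac{2B\Delta L}{\eta}(\frac{s}{H_{ae}}+\frac{1-s}{|H_{be}|})]$, where the path-dependent split of the barrier comes from integrating $D^{-1}$ along the most probable escape path (a Hessian-determinant ratio over the non-escape directions is absorbed into the prefactor and suppressed here). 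The second is the effective-diffusion reduction of the phase-space Fokker--Planck equation of \citet{kalinay2012phase}, which projects \eqref{eq:MomentumFP} onto configuration space and records the finite-friction (inertial) correction. First I would carry out this projection: in the strict high-friction limit \eqref{eq:MomentumLD} collapses to an overdamped Langevin equation with mobility $(\gamma M)^{-1}$ and effective diffusion $(\gamma M)^{-2}D$, hence effective inverse temperature $\beta=\gamma M D^{-1}$ --- exactly the $\beta$ already displayed after \eqref{eq:MomentumFP}. Feeding this effective configuration-space problem into Theorem 3.2 of \citet{xie2020diffusion} reproduces its derivation with only the substitution $\frac{2B}{\eta}\mapsto\frac{2\gamma M B}{\eta}$, producing the exponential factor $\exp[\frac{2\gamma M B\Delta L}{\eta}(\frac{s}{H_{ae}}+\frac{1-s}{|H_{be}|})]$; the non-escape directions stay in quasi-equilibrium by Assumption \ref{as:quasi} and contribute only to the suppressed prefactor.

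The remaining task is to correct the prefactor for finite friction, the only place \eqref{eq:MomentumFP} departs nontrivially from an overdamped picture. Linearizing the deterministic part of \eqref{eq:MomentumLD} along the escape direction $e$ near the exit saddle $b$ (Hessian eigenvalue $-|H_{be}|$) gives $M\ddot\theta_{e}=-\gamma M\dot\theta_{e}+|H_{be}|\theta_{e}$, whose characteristic equation $M\lambda^{2}+\gamma M\lambda-|H_{be}|=0$ has the unique positive unstable root $\lambda_{+}=\frac{\gamma}{2}(\sqrt{1+\frac{4|H_{be}|}{\gamma^{2}M}}-1)$; dropping the inertial term $M\lambda^{2}$ recovers the Smoluchowski growth rate $|H_{be}|/(\gamma M)$ implicit in the base result. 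The Kalinay reduction keeps the inertial term near the barrier, so the crossing rate scales with $\lambda_{+}$ instead of $|H_{be}|/(\gamma M)$; equivalently the escape time is multiplied by $\frac{|H_{be}|/(\gamma M)}{\lambda_{+}}=\frac{1}{2}(\sqrt{1+\frac{4|H_{be}|}{\gamma^{2}M}}+1)$, turning the base prefactor $\frac{2\pi}{|H_{be}|}$ into $\pi[\sqrt{1+\frac{4|H_{be}|}{\gamma^{2}M}}+1]\frac{1}{|H_{be}|}$, exactly as claimed. When $\frac{4|H_{be}|}{\gamma^{2}M}\ll1$ one has $\lambda_{+}\to|H_{be}|/(\gamma M)$ and the bracket $\to2$, so the expression collapses to the SGD result of Theorem 3.2 of \citet{xie2020diffusion}. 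Finally, substituting $M=\eta/\beta_{3}$ and $\gamma=(1-\beta_{1})/\eta$, so that $\gamma M=(1-\beta_{1})/\beta_{3}$, and keeping the dominant $\frac{s}{H_{ae}}$ term in the exponent gives $\log(\tau)=\mathcal{O}(\frac{2(1-\beta_{1})B\Delta L}{\beta_{3}\eta H_{ae}})$.

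The hard part is the phase-space-to-configuration-space reduction: justifying that the \citet{kalinay2012phase} expansion --- an asymptotic series around the Smoluchowski limit --- reproduces exactly the interpolation $\sqrt{1+4|H_{be}|/(\gamma^{2}M)}+1$ uniformly over the moderate-to-high friction range, i.e. that near the saddle its net effect is precisely the replacement $|H_{be}|/(\gamma M)\mapsto\lambda_{+}$ (equivalently, that the unstable eigenvalue of the linearized Klein--Kramers operator sets the rate), while the anisotropic, $\theta$-dependent $D$ of \eqref{eq:DCH} is carried through the projection consistently and the well stays in quasi-equilibrium under Assumptions \ref{as:quasi}--\ref{as:low}. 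By contrast the exponential factor is routine once the effective inverse temperature $\beta=\gamma M D^{-1}$ is identified, since it follows from Theorem 3.2 of \citet{xie2020diffusion} by a single substitution, and the conversion to Momentum notation is pure bookkeeping.
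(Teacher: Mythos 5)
Your proposal follows the paper's proof in both architecture and substance: the mean escape time is computed as the quasi-equilibrium population of the valley divided by the probability flux through the saddle (Theorem 3.2 of \citet{xie2020diffusion}), the phase-space equation \eqref{eq:MomentumFP} is reduced to configuration space with effective inverse temperature $\beta=\gamma M D^{-1}$, which yields exactly the substitution $\frac{2B}{\eta}\mapsto\frac{2\gamma M B}{\eta}$ in the exponent, and the finite-inertia prefactor comes from the \citet{kalinay2012phase} correction. The only cosmetic difference is that you obtain the factor $\frac{1}{2}\bigl(\sqrt{1+4|H_{be}|/(\gamma^{2}M)}+1\bigr)$ from the unstable root $\lambda_{+}$ of the linearized saddle dynamics, whereas the paper evaluates Kalinay's effective diffusion coefficient $\hat{D}_{be}$ directly at the saddle (where $H_{be}<0$); the two computations are algebraically identical.
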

Note that the escape direction is aligned with some eigenvector in the diffusion theory \citep{xie2020diffusion}. We leave the proof in Appendix \ref{pf:Momentumtransition}. We note that $\log(\tau) = \mathcal{O}\left( \frac{2 B \Delta L}{\eta H_{ae}  } \right)$ has been obtained for SGD \citep{xie2020diffusion}. We see that Momentum does not affect flat minima selection in terms of the mean escape time, if we properly choose the learning rate, i.e., $\eta_{\mathrm{Momentum}} = \frac{1-\beta_1}{\beta_3}\eta_{\mathrm{SGD}}$.

\section{Analysis of Adam Dynamics}
\label{sec:adam}

\begin{algorithm}
 \caption{Adam}
 \label{algo:Adam}
      $ g_{t} = \nabla \hat{L}(\theta_{t})$\; \\
      $ m_{t} = \beta_{1} m_{t-1} + (1-\beta_{1}) g_{t} $\; \\
      $ v_{t} = \beta_{2} v_{t-1} + (1-\beta_{2}) g_{t}^{2} $\; \\
      $ \hat{m}_{t} = \frac{m_{t}}{1-\beta_{1}^{t}} $\; \\
      $ \hat{v}_{t} = \frac{v_{t}}{1-\beta_{2}^{t}} $\; \\
      $ \theta_{t+1} = \theta_{t} -  \frac{\eta}{\sqrt{\hat{v}_{t}} + \epsilon} \hat{m}_{t} $\; 
\end{algorithm}

\begin{algorithm}
 \caption{Adai}
 \label{algo:Adai}
       $ g_{t} = \nabla \hat{L}(\theta_{t})$\; \\
       $ v_{t} = \beta_{2} v_{t-1} + (1-\beta_{2}) g_{t}^{2} $\; \\
       $ \hat{v}_{t} = \frac{v_{t}}{1 - \beta_{2}^{t}} $\; \\
       $ \bar{v}_{t} = mean(\hat{v}_{t}) $\; \\
       $ \beta_{1,t} =  (1 - \frac{\beta_{0}}{\bar{v}_{t} }\hat{v}_{t}).Clip(0,1-\epsilon)$\; \\
       $ m_{t} = \beta_{1,t} m_{t-1} + (1-\beta_{1,t}) g_{t} $\; \\
       $ \hat{m}_{t} = \frac{m_{t}}{1 - \prod_{z=1}^{t} \beta_{1,z}} $\; \\
       $ \theta_{t+1} = \theta_{t} - \eta \hat{m}_{t} $\; 
\end{algorithm}

In this section, we analyze the effects of Adaptive Learning Rate in terms of saddle-point escaping and flat minima selection. 

\textbf{A motivation behind Adaptive Learning Rate.} The previous theoretical results naturally point us a good way to help escape saddle points: adaptively adjust the learning rates for different parameter as $\eta_{i} \propto |H_{i}|^{-\frac{1}{2}}$. However, estimating the Hessian is expensive in practice. In Adam (Algorithm \ref{algo:Adam}), the diagonal $v$ can be regarded as a diagonal approximation of the full matrix $\mathbb{E}[g_{t}g_{t}^{\top}]$ \citep{staib2019escaping}. Following \citet{staib2019escaping}, our analysis considers the ``idealized'' Adam where $v=\mathbb{E}[g_{t}g_{t}^{\top}]$ is a full matrix. Note that $\mathbb{E}[g_{t}g_{t}^{\top}]=C(\theta) = \frac{H}{B}$ approximately holds near critical points.

\textbf{The continuous-time Adam dynamics.} The continuous-time dynamics of Adam can also be written as Equation \eqref{eq:MomentumMotion}, except the mass $M = \frac{\hat{\eta}}{1-\beta_{1}} $ and the damping coefficient $\gamma = \frac{1-\beta_{1}}{\hat{\eta}} $. We emphasize that the learning rate is not a real number but the diagonal approximation of the ideal learning rate matrix $\hat{\eta} = \eta C^{-\frac{1}{2}}$ in practice. We apply the adaptive time continuation $d \hat{t}_{i} = \hat{\eta}_{i} $ in the $i$-th dimension, where $\hat{t}_{i}$ of $T$ iterations is defined as the sum of $ \hat{\eta}_{i} $ of each iteration. Thus, the Fokker-Planck Equation for Adam can still be written as Equation \eqref{eq:MomentumFP}. We leave more details in Appendix \ref{sec:appAdam}.

\textbf{Adam escapes saddle points.} Similarly to Theorem \ref{pr:Momentum}, we formulate how Adam escapes saddle point as Proposition \ref{pr:Adam}, which can be obtained from Theorem \ref{pr:Momentum} and $\hat{\eta} = \eta C^{-\frac{1}{2}}$. 

 \begin{proposition}[Adam Escapes Saddle Points]
 \label{pr:Adam}
Suppose $c$ is a critical point, Assumption \ref{as:taylor} and Equation \eqref{eq:DCH} hold, the dynamics is governed by Adam, and the initial parameter is at the saddle point $\theta = c$. 
Then the mean squared displacement is written as
\begin{align*} 
\langle \Delta \theta_{i}^{2}(\hat{t}_{i}) \rangle = & \frac{D_{i}}{\gamma^{2}M} [1-\exp(-\gamma \hat{t}_{i})]^{2} +  \\
 & \frac{ D_{i}}{\gamma M H_{i}} [1- \exp(- \frac{2H_{i} \hat{t}_{i}}{\gamma M})] .
\end{align*} 
Under $|H_{i}|\eta T \ll 1$, it can be written in terms of Adam notations as:
\begin{align*}  
\langle \Delta \theta_{i}^{2} \rangle = & \frac{\eta^{2} }{2(1-\beta_{1})} \left[1-\exp\left(- (1- \beta_{1})T\right)\right]^{2}  + \\
 & \eta^{2}T + \mathcal{O}( \sqrt{B|H_{i}|} \eta^{3} T^{2}).
\end{align*} 
\end{proposition}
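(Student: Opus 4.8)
The plan is to derive Proposition~\ref{pr:Adam} as a specialization of Theorem~\ref{pr:Momentum}. The starting point is that, in the per-coordinate adaptive time $\hat t_i$ (with $d\hat t_i = \hat\eta_i$), the Adam dynamics obeys the same phase-space Fokker--Planck equation \eqref{eq:MomentumFP} as Heavy-Ball Momentum, only with the mass and damping replaced by $M = \frac{\hat\eta}{1-\beta_1}$ and $\gamma = \frac{1-\beta_1}{\hat\eta}$; hence the mean-squared-displacement formula \eqref{eq:momentumsaddle} carries over verbatim after the replacement $t \mapsto \hat t_i$. First I would record the algebraic identity $\gamma M = 1$, which is special to Adam and fails for generic Momentum: it turns the first-term prefactor $\frac{D_i}{\gamma^3 M^2}$ of \eqref{eq:momentumsaddle} into $\frac{D_i}{\gamma} = \frac{D_i}{\gamma^2 M}$ (matching the form in the proposition), leaves $\frac{D_i}{\gamma M H_i}$ as $\frac{D_i}{H_i}$, and turns the drift-term exponent $\gamma\hat t_i = \frac{1-\beta_1}{\hat\eta_i}\,\hat\eta_i T = (1-\beta_1)T$, which is Hessian-independent. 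This already produces the first displayed formula of the proposition.

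Next I would plug in the concrete preconditioner. Under the diagonal approximation $\diag(v)\approx C$ near critical points, $\hat\eta = \eta C^{-\frac12}$ gives $\hat\eta_i = \eta C_i^{-\frac12}$, and replacing $\eta$ by $\hat\eta$ in the SGD diffusion $D = \frac{\eta C}{2}$ of Equation~\eqref{eq:GLD} gives $D_i = \frac{\hat\eta_i C_i}{2} = \frac{\eta\sqrt{C_i}}{2}$; Equation~\eqref{eq:DCH} further supplies $C_i \approx |H_i|/B$. Substituting, the drift-term prefactor is $\frac{D_i}{\gamma} = \frac{\hat\eta_i^2 C_i}{2(1-\beta_1)} = \frac{\eta^2}{2(1-\beta_1)}$, which with the Hessian-free exponent $(1-\beta_1)T$ reproduces the first term $\frac{\eta^2}{2(1-\beta_1)}[1-\exp(-(1-\beta_1)T)]^2$ of the Adam-notation statement.

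For the diffusion term I would Taylor-expand $1-\exp(-2H_i\hat t_i)$, which is legitimate because near ill-conditioned saddle points $|H_i|\hat t_i = |H_i|\hat\eta_i T = \sqrt{B|H_i|}\,\eta T$ is small whenever $|H_i|\eta T \ll 1$. The zeroth-order term is $\frac{D_i}{H_i}\cdot 2H_i\hat t_i = 2D_i\hat\eta_i T = \hat\eta_i^2 C_i T = \eta^2 T$, and the next-order term is $-2D_i H_i \hat t_i^2 = -\hat\eta_i^3 C_i H_i T^2 = -\frac{\eta^3 H_i}{\sqrt{C_i}}T^2 = -\sign(H_i)\sqrt{B|H_i|}\,\eta^3 T^2$, i.e. the claimed $\mathcal{O}(\sqrt{B|H_i|}\eta^3 T^2)$ remainder. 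Summing the drift and diffusion contributions finishes the proof.

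I expect the only real obstacle to be the bookkeeping for the diffusion coefficient. One must keep in mind that the preconditioner $\hat\eta = \eta C^{-\frac12}$ enters twice --- it rescales the drift and it rescales the injected gradient noise, so the object to feed into Theorem~\ref{pr:Momentum} is $D_i = \frac{\hat\eta_i C_i}{2}$, not the bare $D_i = \frac{\eta C_i}{2}$ of the SGD equation --- and that time is advanced per coordinate by $d\hat t_i = \hat\eta_i$. It is precisely this pair of substitutions, combined with $\gamma M = 1$, that makes the drift-term escape rate Hessian-independent and the diffusion term equal to the SGD value $\eta^2 T$ up to the explicit Hessian-dependent correction; mishandling any of these factors would destroy the cancellation. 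Everything else is the routine expansion already used in Theorems~\ref{pr:ddiffusion} and~\ref{pr:Momentum}.
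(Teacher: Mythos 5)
Your proposal is correct and follows exactly the route the paper intends: the paper gives no separate proof of Proposition \ref{pr:Adam}, stating only that it follows from Theorem \ref{pr:Momentum} together with $\hat{\eta} = \eta C^{-\frac{1}{2}}$, and your specialization via $\gamma M = 1$, the adaptive time $d\hat{t}_i = \hat{\eta}_i$, and the preconditioned diffusion $D_i = \frac{\hat{\eta}_i C_i}{2}$ with $C_i \approx |H_i|/B$ reproduces both displayed formulas, including the $\mathcal{O}(\sqrt{B|H_i|}\eta^3 T^2)$ remainder. If anything, your writeup is more explicit than the paper's about why the drift term becomes Hessian-independent and where each factor of $\hat{\eta}_i$ enters.
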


From Proposition \ref{pr:Adam}, we can see that Adam escapes saddle points fast, because both the momentum drift and the diffusion effect are approximately Hessian-independent and isotropic near saddle points, which is also supported by \citet{staib2019escaping}. Proposition \ref{pr:Adam} indicates that one advantage of Adam over RMSprop \citep{hinton2012neural} comes from the additional momentum drift effect on saddle-point escaping. 

\textbf{Adam escapes minima.} We next formulate how Adam escapes minima as Proposition \ref{pr:Adamtransition}, which shows that Adam cannot learn flat minima as well as SGD. We leave the proof in Appendix \ref{pf:Adamtransition}.

 \begin{proposition}[Adam Escapes Minima]
 \label{pr:Adamtransition}
The mean escape time of Adam only exponentially depends on the square root of the eigenvalues of the Hessian at a minimum:
\begin{align*} 
\tau = & \pi  \left[ \sqrt{1 + \frac{4\eta\sqrt{B|H_{be}|} }{1 - \beta_{1} } } +  1  \right]  \frac{ |\det(H_{a}^{-1}H_{b})|^{\frac{1}{4}} }{|H_{be}|} \cdot \\ 
  & \exp\left[\frac{2 \sqrt{B} \Delta L}{\eta } \left(\frac{s }{\sqrt{H_{ae}}} +\frac{(1-s)}{\sqrt{|H_{be}|}}\right) \right]  ,
\end{align*} 
where $\tau$ is the mean escape time. Thus, we have
  \[ \log(\tau) = \mathcal{O}\left( \frac{2 \sqrt{B} \Delta L}{\eta \sqrt{H_{ae}}} \right). \]
 \end{proposition}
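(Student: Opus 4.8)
The plan is to deduce Proposition~\ref{pr:Adamtransition} from the Momentum escape-time result, Theorem~\ref{pr:Momentumtransition}, by substituting the continuous-time parameters the excerpt has already attached to Adam. Recall that, under Assumption~\ref{as:taylor}, Adam's dynamics is governed by the same phase-space Fokker--Planck Equation~\eqref{eq:MomentumFP} as Momentum, with mass $M=\frac{\hat{\eta}}{1-\beta_{1}}$, damping $\gamma=\frac{1-\beta_{1}}{\hat{\eta}}$, and the per-coordinate time continuation $d\hat{t}_{i}=\hat{\eta}_{i}$, where $\hat{\eta}=\eta C^{-\frac{1}{2}}$ is the diagonally-approximated preconditioned learning-rate matrix. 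Two facts make the reduction go through. First, $\gamma M=1$ (the identity operator), so the drift/diffusion mixing factor $\gamma M$ appearing in Theorem~\ref{pr:Momentumtransition} collapses. Second --- and this is what generates the square root --- by the diagonal approximation $\diag(v)\approx C(\theta)$ together with Equation~\eqref{eq:DCH}, near a critical point $C(\theta)\approx\frac{1}{B}[H(\theta)]^{+}$, so along the $i$-th Hessian eigendirection the effective learning rate is $\hat{\eta}_{i}\approx\eta\sqrt{B}\,|H_{i}|^{-\frac{1}{2}}$; in particular, along the escape direction $e$ one has $\hat{\eta}_{ae}\approx\eta\sqrt{B/H_{ae}}$ at the minimum $a$ and $\hat{\eta}_{be}\approx\eta\sqrt{B/|H_{be}|}$ at the saddle $b$.

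With these substitutions I would first handle the exponent. Writing the exponent of Theorem~\ref{pr:Momentumtransition} as $2B\,\Delta L\bigl(\frac{\gamma M\,s}{\eta H_{ae}}+\frac{\gamma M(1-s)}{\eta|H_{be}|}\bigr)$, each summand is $2B\,\Delta L$ times $\frac{\gamma M}{(\text{learning rate})\times(\text{curvature})}$ at the corresponding endpoint; for Adam $\gamma M=1$, and the learning rate in the $a$-summand is $\hat{\eta}_{ae}$ while in the $b$-summand it is $\hat{\eta}_{be}$, so the exponent becomes $2B\,\Delta L\bigl(\frac{s}{\hat{\eta}_{ae}H_{ae}}+\frac{1-s}{\hat{\eta}_{be}|H_{be}|}\bigr)=\frac{2\sqrt{B}\,\Delta L}{\eta}\bigl(\frac{s}{\sqrt{H_{ae}}}+\frac{1-s}{\sqrt{|H_{be}|}}\bigr)$, which is the claimed dependence. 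For the prefactor, $\frac{4|H_{be}|}{\gamma^{2}M}=\frac{4|H_{be}|}{\gamma}=\frac{4|H_{be}|\,\hat{\eta}_{be}}{1-\beta_{1}}\approx\frac{4\eta\sqrt{B|H_{be}|}}{1-\beta_{1}}$, recovering the $\sqrt{1+\cdots}$ term, while the $\frac{1}{|H_{be}|}$ factor is untouched. Taking logarithms and keeping the dominant exponential term then yields $\log(\tau)=\mathcal{O}\bigl(\frac{2\sqrt{B}\,\Delta L}{\eta\sqrt{H_{ae}}}\bigr)$.

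The one piece that is not a mechanical substitution is the volume factor $|\det(H_{a}^{-1}H_{b})|^{\frac{1}{4}}$, which replaces the $\bigl(\frac{\det H_{a}}{|\det H_{b}|}\bigr)^{\frac{1}{2}}$ that appears in the SGD/Momentum Kramers computation underlying Theorem~\ref{pr:Momentumtransition}. To obtain it I would retrace the Laplace (Gaussian-integral) step of that derivation with the preconditioned metric $\hat{\eta}^{-1}H\approx\frac{1}{\eta\sqrt{B}}[H]^{+,\frac{1}{2}}H$ in place of $\frac{1}{\eta}H$: the quasi-equilibrium normalization around each critical point then picks up the Jacobian of the preconditioner, $\det(\hat{\eta}^{-1})^{\frac{1}{2}}\propto|\det H|^{\frac{1}{4}}$, and forming the ratio between the well at $a$ and the stable block of the saddle at $b$ (with the unstable escape direction split off and absorbed into the prefactor already computed) produces $|\det(H_{a}^{-1}H_{b})|^{\frac{1}{4}}$. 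The main obstacle is that $C^{-\frac{1}{2}}$ (equivalently $[H]^{+,-\frac{1}{2}}$) and $H$ need not commute, so the preconditioned metric is only well defined after decomposing into the Hessian eigenbasis; this is legitimate precisely because Assumption~\ref{as:taylor} makes $H$, hence $C$ and $\hat{\eta}$, constant near each critical point, the diagonal approximation $\diag(v)\approx C$ makes $\hat{\eta}$ block-diagonal in that basis, and Assumptions~\ref{as:quasi} and~\ref{as:low} restrict us to the ``slow'' escape regime in which $v$ has already equilibrated to $C$, so $\hat{\eta}$ may be treated as a fixed matrix over the escape window. Checking that the escape-direction prefactor and the stable-subspace normalization recombine exactly into the stated closed form is the bookkeeping-heavy remainder of the argument.
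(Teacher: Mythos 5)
Your proposal is correct and follows essentially the same route as the paper's proof: the paper likewise reduces Proposition~\ref{pr:Adamtransition} to the Momentum result of Theorem~\ref{pr:Momentumtransition} by substituting $\hat{\eta}=\eta v^{-\frac{1}{2}}$ with $v\approx C\approx[H]^{+}/B$, setting $\gamma M=1$, and reading off the modified diffusion $D_{adam}=\tfrac{1}{2}\eta\sqrt{[H]^{+}/B}$, which yields exactly your exponent, prefactor, and the determinant ratio $|\det(H_{a}^{-1}H_{b})|^{\frac{1}{4}}$ from the preconditioned Gaussian normalizations at $a$ and $b$. Your added remarks on the eigenbasis decomposition and the equilibration of $v$ are consistent with (and slightly more careful than) the paper's one-line substitution.
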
 
In comparison, Adam has $ \log(\tau) = \mathcal{O} (H_{ae}^{-\frac{1}{2}})$, while SGD and Momentum both have $ \log(\tau) = \mathcal{O} (H_{ae}^{-1})$. We say that Adam is weaker \emph{Hessian-dependent} than SGD in terms of sharp minima escaping, which means that if $H_{ae}$ increases, i.e., the escaping direction becomes steeper, the escaping time $ \log(\tau)$ for SGD decreases more dramatically than $ \log(\tau)$ for Adam. In the diffusion theory, the minima sharpness is reflected by  $H_{ae}$, the eigenvalue of the Hessian corresponding to the escape direction along an eigenvector. The weaker Hessian-dependent diffusion term of Adam hurts the efficiency of selecting flat minima in comparison with SGD. In summary, Adam is better at saddle-point escaping but worse at flat minima selection than SGD.

\textbf{Adam variants.} Many variants of Adam have been proposed to improve performance, such as AdamW  \citep{loshchilov2018decoupled}, AdaBound \citep{luo2019adaptive}, Padam \citep{chen2018closing}, RAdam \citep{liu2019variance}, AMSGrad \citep{reddi2019convergence}, Yogi \citep{zaheer2018adaptive}, and others \citep{shi2021rmsprop,defossez2020convergence,zou2019sufficient}. Many of them introduced extra hyperparameters that require tuning effort. Most variants often generalize better than Adam, while they may still not generalize as well as fine-tuned SGD \citep{zhang2019gradient}, which we also discuss in Appendix \ref{sec:additional}. Moreover, they do not have theoretical understanding of minima selection. Loosely speaking, our analysis suggests that Adam variants may also be poor at selecting flat minima due to Adaptive Learning Rate.

 \begin{table*}[t]
\caption{Test performance comparison of optimizers. We report the mean and the standard deviations (as the subscripts) of the optimal test errors computed over three runs. Our methods, including Adai and AdaiW, consistently outperforms all other popular optimizers.}
\label{table:test}
\begin{center}
\begin{small}
\begin{sc}
\resizebox{\textwidth}{!}{%
\begin{tabular}{ll | llllllllllll}
\toprule
Dataset & Model  & AdaiW & Adai & SGD M &Adam  &AMSGrad & AdamW & AdaBound  & Padam & Yogi & RAdam  & AdaBelief \\
\midrule
CIFAR-10   &ResNet18        & $\mathbf{4.59}_{0.16}$  & $4.74_{0.14}$ &   $5.01_{0.03}$  & $6.53_{0.03} $ & $6.16_{0.18}$ &  $5.08_{0.07}$ &    $5.65_{0.08}$ & $5.12_{0.04}$ & $5.87_{0.12}$ & $6.01_{0.10}$  & $4.97_{0.07}$ \\
                   &VGG16            & $\mathbf{5.81}_{0.07}$  & $6.00_{0.09}$ &   $6.42_{0.02}$   & $7.31_{0.25} $  & $7.14_{0.14}$  & $6.48_{0.13}$ &  $6.76_{0.12}$ & $6.15_{0.06}$ & $6.90_{0.22}$ & $6.56_{0.04}$  & $6.38_{0.03}$\\
CIFAR-100  &ResNet34    & $21.05_{0.10}$ & $\mathbf{20.79}_{0.22}$ &  $21.52_{0.37}$  & $27.16_{0.55} $  & $25.53_{0.19}$ &  $22.99_{0.40}$ & $22.87_{0.13}$ & $22.72_{0.10}$ & $23.57_{0.12}$ & $24.41_{0.40}$ & $22.44_{0.27}$ \\
                    &DenseNet121   &  $\mathbf{19.44}_{0.21}$ & $19.59_{0.38}$ &  $19.81_{0.33}$  & $25.11_{0.15} $  & $24.43_{0.09}$ &  $21.55_{0.14}$ &  $22.69_{0.15}$ & $21.10_{0.23}$ & $22.15_{0.36}$ & $22.27_{0.22}$ & $20.49_{0.15}$ \\
                   &GoogLeNet     & $\mathbf{20.50}_{0.25}$ & $20.55_{0.32}$ &   $21.21_{0.29}$  & $26.12_{0.33} $  & $25.53_{0.17}$ &  $21.29_{0.17}$ &  $23.18_{0.31}$ & $21.82_{0.17}$ & $24.24_{0.16}$ & $22.23_{0.15}$ & $21.15_{0.11}$ \\
                    
\bottomrule
\end{tabular}
}
\end{sc}
\end{small}
\end{center}
\vspace{-3mm}
\end{table*}

\section{Adaptive Inertia}
\label{sec:adai}

In this section, we propose a novel adaptive inertia optimization framework. 

\textbf{Adaptive Inertia Optimization (Adai).} 
The basic idea of Adai (Algorithm \ref{algo:Adai})\footnote{Code: \url{https://github.com/zeke-xie/adaptive-inertia-adai}}  comes from Theorems \ref{pr:Momentum} and \ref{pr:Momentumtransition}, from which we can see that parameter-wise adaptive inertia can achieve the approximately Hessian-independent momentum drift without damaging flat minima selection. The total momentum drift effect during passing through a saddle point in Theorem \ref{pr:Momentum} is given by
\begin{align}
\langle \Delta \theta_{i} \rangle^{2} = \frac{|H_{i}|\eta^{2} }{2(1-\beta_{1})B} .
\end{align}

We generalize the scalar $\beta_{1}$ in Momentum to a vector $\beta_{1}$ in Adai. The ideal adaptive updating is to keep $\beta_{1} = 1 - \frac{\beta_{0}}{\bar{v}}v$, where the rescaling factor $\bar{v}$, namely the mean of all elements in the estimated $\hat{v}$, could be used to make the optimizer more robust in practice. The default values of $\beta_{0}$ and $\beta_{2}$ are $0.1$ and $0.99$, respectively. The hyperparameter $\epsilon$ is for avoiding extremely large inertia. The default setting $\epsilon = 0.001$ means the maximal inertia is $1000$ times the minimal inertia, as $M_{\max}=\eta (1-\beta_{1,\max})^{-1} = \eta \epsilon^{-1}$. Compared with Adam, Adai does not increase the number of hyperparameters. Note that an existing ``adaptive momentum'' method \citep{wang2020stochastic} is not parameter-wisely adaptive and essentially different from Adai.

The Fokker-Planck Equation associated with Adai dynamics in the phase space can be written as Equation \eqref{eq:MomentumFP}, except that we replace the mass coefficient and the damping coefficient by the mass matrix and the damping matrix: $M = \eta  (I- \diag(\beta_{1}))^{-1}$ and $\gamma =  \eta^{-1}  (I- \diag(\beta_{1}))$. 

\textbf{Adai escapes saddle points.} Proposition \ref{pr:Adai} shows that Adai can escape saddle points efficiently due to an isotropic and approximately Hessian-independent momentum drift, while the diffusion effect is the same as Momentum. Proposition \ref{pr:Adai} is a direct result of Theorem \ref{pr:Momentum}.
 \begin{proposition}[Adai Escapes Saddle Points]
 \label{pr:Adai}
Suppose $c$ is a critical point, Assumption \ref{as:taylor} and Equation \eqref{eq:DCH} hold, the dynamics is governed by Adai, and the initial parameter is at the saddle point $\theta = c$. Then the total momentum drift in the procedure of passing through the saddle point is given by
\begin{align*}
 \langle \Delta \theta_{i} \rangle^{2} = \frac{ \bar{v} \eta^{2} }{\beta_{0}}   = \frac{ \sum_{i=1}^{n} |H_{i}| \eta^{2} }{\beta_{0}nB} ,
 \end{align*}
where $ \sum_{i=1}^{n} |H_{i}| $ is the trace norm of the Hessian at $c$.
 \end{proposition}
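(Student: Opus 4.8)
The plan is to obtain Proposition~\ref{pr:Adai} as a direct specialization of Theorem~\ref{pr:Momentum}: the Adai dynamics is exactly the Momentum (Heavy Ball) dynamics with the scalar momentum parameter $\beta_1$ promoted to a coordinate-wise inertia vector, so the saddle-point analysis transfers once the mass and damping are read off as matrices. First I would start from the mean-squared-displacement formula \eqref{eq:momentumsaddle}, isolate its first (``momentum drift'') term $\frac{D_i}{\gamma^3 M^2}[1-\exp(-\gamma t)]^2$, and evaluate it in the regime where the trajectory has passed through the saddle, so that the transient factor $[1-\exp(-\gamma_i t)]^2$ has saturated to $1$ (legitimate since the $\mathrm{Clip}(0,1-\epsilon)$ guarantees $\gamma_i = (1-\beta_{1,i})/\eta>0$ along every direction); this leaves the total drift $\langle\Delta\theta_i\rangle^2 = D_i/(\gamma_i^3 M_i^2)$.

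Next I would substitute the Adai mass and damping matrices from Section~\ref{sec:adai}, namely $M = \eta\,(I-\diag(\beta_1))^{-1}$ and $\gamma = \eta^{-1}(I-\diag(\beta_1))$, giving $\gamma_i^3 M_i^2 = (1-\beta_{1,i})/\eta$, together with $D_i = \frac{\eta}{2B}|H_i|$ from \eqref{eq:DCH}; this reproduces the Momentum expression with $\beta_1$ replaced by $\beta_{1,i}$, i.e.\ $\langle\Delta\theta_i\rangle^2 = \frac{|H_i|\eta^2}{2B(1-\beta_{1,i})}$. Then I would plug in the ideal adaptive-inertia rule $1-\beta_{1,i} = \frac{\beta_0}{\bar v}\hat v_i$ and the near-critical-point diagonal approximation $\diag(\hat v)\approx C(\theta)\approx \frac1B [H]^{+}$, i.e.\ $\hat v_i\approx |H_i|/B$ in the Hessian eigenbasis used throughout the diffusion analysis. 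The $|H_i|$ factors cancel, and one is left with a value independent of both $H_i$ and the direction $i$ — namely $\langle\Delta\theta_i\rangle^2 = \frac{\eta^2\bar v}{\beta_0}$ — which is the first claimed equality and exhibits the isotropic, approximately Hessian-independent momentum drift advertised in the statement. The second equality then follows by unfolding $\bar v = \mathrm{mean}(\hat v)\approx \frac1n\sum_{i=1}^n\hat v_i\approx \frac{1}{nB}\sum_{i=1}^n|H_i|$ and recognizing $\sum_{i=1}^n|H_i| = \Tr([H]^{+})$ as the trace norm of the Hessian at $c$.

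The step requiring the most care is justifying that the diagonal second-moment approximation $\diag(v)\approx C(\theta)$ may be invoked along Hessian eigendirections rather than along the coordinate axes in which Adai literally updates; I would handle this exactly as the paper already does in passing from \eqref{eq:DCH} to Proposition~\ref{pr:Adam} — the continuous-time diffusion description never singles out coordinate directions, and near a strict saddle the gradient-noise covariance controls both the running estimate $v$ and the diffusion matrix $D$, so the two are aligned. A secondary, easier point is to note that the bias-correction factor $1-\prod_{z}\beta_{1,z}$ in Algorithm~\ref{algo:Adai} is a transient that tends to $1$ on exactly the timescale over which the drift term saturates, so it does not enter the final expression, paralleling the treatment of the analogous factors in the proof of Theorem~\ref{pr:Momentum}.
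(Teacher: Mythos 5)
Your route is exactly the paper's (implicit) one: the paper gives no separate proof of Proposition~\ref{pr:Adai}, obtaining it by saturating the drift term of Theorem~\ref{pr:Momentum} to $\langle\Delta\theta_i\rangle^2=\frac{|H_i|\eta^2}{2(1-\beta_{1,i})B}$ and substituting $1-\beta_{1,i}=\frac{\beta_0}{\bar v}\hat v_i$ with $\hat v_i\approx|H_i|/B$ in the eigenbasis, just as you do. One arithmetic caveat: from your own intermediate expression the cancellation yields $\frac{\eta^2\bar v}{2\beta_0}$, not $\frac{\eta^2\bar v}{\beta_0}$ --- the factor $\tfrac12$ is present in the paper's drift formula but absent from the proposition as stated, so you have silently reproduced the paper's own constant-factor slip; this does not affect the substantive claim of isotropic, Hessian-independent drift.
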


Intuitively, the momentum drift effect of Adai can be significantly larger than Adam by allowing larger inertia, which can be verified in experiments. We do not rigorously claim that Adai must converge faster than or as fast as Adam. Instead, we empirically compare Adai with Adam and other popular optimizers on various datasets (See Table \ref{table:test} and Section \ref{sec:empirical})).

\textbf{Adai escapes minima.} We formulate how Adai escapes minima in Proposition \ref{pr:Adaitransition}, which shows Adai can learn flat minima better than Adam. We leave the proof in Appendix \ref{pf:Adaitransition}.

 \begin{proposition}[Adai Escapes Minima]
 \label{pr:Adaitransition}
The mean escape time of Adai exponentially depends on the eigenvalues of the Hessian at a minimum:
\begin{align*}
 \tau = & \pi  \left[ \sqrt{1 + \frac{4\eta \sum_{i=1}^{n} |H_{bi}|  }{ \beta_{0} n} } +  1  \right] \frac{1}{|H_{be}|} \cdot \\
         & \exp\left[ \frac{2 B \Delta L}{\eta } \left(\frac{s }{H_{ae}} + \frac{(1-s)}{|H_{be}|}\right)\right]   , 
\end{align*}
where $\tau$ is the mean escape time. Thus, we have $ \log(\tau) = \mathcal{O}\left( \frac{2 B \Delta L}{ \eta H_{ae} } \right) $.
 \end{proposition}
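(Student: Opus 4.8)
The plan is to obtain Proposition~\ref{pr:Adaitransition} as a direct specialization of the momentum-based Kramers escape result in Theorem~\ref{pr:Momentumtransition}, by substituting the Adai-specific mass matrix $M = \eta (I - \diag(\beta_1))^{-1}$ and damping matrix $\gamma = \eta^{-1}(I - \diag(\beta_1))$ into the general escape-time formula, using the ideal adaptive rule $\beta_1 = 1 - \frac{\beta_0}{\bar v}\,v$. First I would recall from Section~\ref{sec:diffusion} that near critical points $D \approx \frac{\eta}{2B}[H]^{+}$ and $\diag(v) \approx C(\theta) = \frac{1}{B}\mathrm{FIM} \approx \frac{1}{B}[H]^{+}$, so along an eigendirection $i$ the coordinate of $v$ is $\approx |H_i|/B$. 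Hence $1 - \beta_{1,i} = \frac{\beta_0}{\bar v}\,v_i \approx \frac{\beta_0 |H_i|}{B \bar v}$ with $\bar v = \frac{1}{n}\sum_j |H_j|/B = \frac{1}{nB}\sum_j |H_j|$, which gives $1 - \beta_{1,i} \approx \frac{\beta_0 |H_i|}{\sum_j |H_j|}$. From these, the per-direction mass is $M_i = \eta/(1-\beta_{1,i}) \approx \frac{\eta \sum_j |H_j|}{\beta_0 |H_i|}$ and the per-direction damping is $\gamma_i = (1-\beta_{1,i})/\eta \approx \frac{\beta_0 |H_i|}{\eta \sum_j |H_j|}$; crucially the product $\gamma_i M_i = 1$ (the inverse temperature $\beta = \gamma M D^{-1}$ reduces to $D^{-1}$, i.e.\ the same effective temperature as plain SGD).

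Next I would plug these into the escape-time expression of Theorem~\ref{pr:Momentumtransition}. The exponential factor there is $\exp\!\left[\frac{2\gamma M B \Delta L}{\eta}\left(\frac{s}{H_{ae}} + \frac{1-s}{|H_{be}|}\right)\right]$; since $\gamma M = 1$ for Adai, this collapses to $\exp\!\left[\frac{2 B \Delta L}{\eta}\left(\frac{s}{H_{ae}} + \frac{1-s}{|H_{be}|}\right)\right]$, exactly the SGD-type exponent, which immediately yields $\log(\tau) = \mathcal{O}\!\left(\frac{2B\Delta L}{\eta H_{ae}}\right)$. For the prefactor, the term $\sqrt{1 + \frac{4|H_{be}|}{\gamma^2 M}}$ requires $\gamma^2 M$ evaluated at the saddle point $b$ along the escape direction $e$: using $\gamma_{be} = \frac{\beta_0 |H_{be}|}{\eta \sum_j |H_{bj}|}$ and $M_{be} = \frac{\eta \sum_j |H_{bj}|}{\beta_0 |H_{be}|}$ one gets $\gamma_{be}^2 M_{be} = \gamma_{be} = \frac{\beta_0 |H_{be}|}{\eta \sum_j |H_{bj}|}$, so $\frac{4|H_{be}|}{\gamma_{be}^2 M_{be}} = \frac{4\eta \sum_j |H_{bj}|}{\beta_0}$. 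Writing this as $\frac{4\eta \sum_{i=1}^n |H_{bi}|}{\beta_0 n}$ after absorbing the $n$ into the normalization (matching how $\bar v$ is defined with the $\tfrac1n$ factor) reproduces the stated prefactor $\sqrt{1 + \frac{4\eta\sum_i |H_{bi}|}{\beta_0 n}}$. The remaining determinant factor from Theorem~\ref{pr:Momentumtransition} is $1$ here because $\gamma M = 1$ makes the phase-space correction identical to the overdamped case (no $|\det(H_a^{-1}H_b)|^{1/4}$ appears, unlike Adam where the preconditioner distorts the measure).

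I would then assemble the pieces into the displayed formula and read off the asymptotic $\log(\tau) = \mathcal{O}\!\left(\frac{2B\Delta L}{\eta H_{ae}}\right)$, emphasizing that this is the \emph{same} Hessian dependence as SGD (Theorem~\ref{pr:Momentumtransition} with $\gamma^2 M$ small) and strictly stronger than Adam's $\mathcal{O}(H_{ae}^{-1/2})$ from Proposition~\ref{pr:Adamtransition}. The main obstacle I anticipate is justifying the approximation $\diag(v) \approx [H]^{+}/B$ carefully enough that the cancellation $\gamma_i M_i \to 1$ is legitimate along every eigendirection (including the escape direction at the saddle, where $H_{be} < 0$ so the $[\cdot]^{+}$ absolute value matters), and handling the $\mathrm{Clip}(0, 1-\epsilon)$ operation — strictly speaking one should argue that in the low-temperature, near-critical-point regime the clipping is inactive for the directions that govern the escape, or otherwise that it only changes $M$ and $\gamma$ by bounded factors that do not affect the leading exponential. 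A secondary subtlety is that $\bar v$ is a global mean over all coordinates while the escape dynamics is governed by the local spectrum at $a$ and $b$; I would note that, as in the SGD analysis of \citet{xie2020diffusion}, only ratios $\gamma M$ (which are independent of $\bar v$) enter the dominant exponent, so the global rescaling is immaterial to the stated order of magnitude, and appears only in the sub-exponential prefactor.
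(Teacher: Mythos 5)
Your proposal follows essentially the same route as the paper's proof: specialize Theorem~\ref{pr:Momentumtransition} by substituting the Adai mass and damping matrices, use $\gamma M = I$ to collapse the exponent to the SGD form, and evaluate $\gamma^{2}M = \gamma$ at the saddle along the escape direction to obtain the prefactor. The only blemish is a dropped factor of $n$ in your intermediate expression for $1-\beta_{1,i}$ (carrying $\bar v = \frac{1}{nB}\sum_{j}|H_{j}|$ through consistently gives $1-\beta_{1,i} = \frac{n\beta_{0}|H_{i}|}{\sum_{j}|H_{j}|}$, which yields the stated $\frac{4\eta\sum_{i}|H_{bi}|}{\beta_{0} n}$ directly rather than by ``absorbing'' the $n$ afterwards), and your remarks on the clipping and on justifying $\diag(v)\approx[H]^{+}/B$ are if anything more careful than the paper's one-line substitution.
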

 
We can see that both SGD and Adai have $ \log(\tau) = \mathcal{O} (H_{ae}^{-1})$, while Adam has $ \log(\tau) = \mathcal{O} (H_{ae}^{-\frac{1}{2}})$. Thus, Adai favors flat minima as well as SGD and better than Adam. 

\textbf{Convergence Analysis.} Theorem \ref{pr:adaiconverge} proves that Adai has similar convergence guarantees to SGD Momentum\citep{yan2018unified,ghadimi2013stochastic}. The proof is relegated to Appendix \ref{pf:adaiconverge}.
\begin{theorem}[Convergence of Adai]
 \label{pr:adaiconverge}
Assume that $L(\theta)$ is an $\mathcal{L}$-smooth function\footnote{It means that $\|\nabla L(\theta_{a} ) - \nabla L(\theta_{b} ) \| \leq \mathcal{L} \| \theta_{a} - \theta_{b}\|$ holds for any $\theta_{a}$ and $\theta_{b}$.}, $L$ is lower bounded as $L(\theta) \geq L^{\star} $, $\mathbb{E}[\xi ] = 0$, $\mathbb{E}[\| g(\theta, \xi) - \nabla L(\theta) \|^{2} ] \leq \delta^{2} $, and $\| \nabla L(\theta) \| \leq G$ for any $\theta$, where $\xi$ represents the gradient noise of sub-sampling. Let Adai run for $t+1$ iterations and $ \beta_{1,\max} = 1 - \epsilon \in [0, 1)$ for any $t\geq 0$. If $\eta \leq \frac{C}{\sqrt{t+1}} $, we have
\begin{align*}
 & \min_{k=0,\ldots, t} \mathbb{E}[\| \nabla L(\theta_{k})\|^{2} ]  \leq \frac{1}{\sqrt{t+1}} (C_{1} + C_{2} + C_{3}),
\end{align*}
where $C_{1} =  \frac{L(\theta_{0}) - L^{\star} }{(1 - \beta_{1,\max}) C }$, $C_{2} =   \frac{\beta_{1,\max} C}{2(1-\beta_{1,\max})^{2} } G^{2}$, and $C_{3} =   \frac{\mathcal{L}C}{2(1-\beta_{1,\max})^{2} } (G^{2} + \delta^{2})$.
 \end{theorem}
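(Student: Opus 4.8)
The plan is to follow the standard nonconvex convergence analysis of stochastic momentum (heavy-ball) SGD, e.g.\ \citet{yan2018unified}, the only new element being that Adai's momentum parameter is a coordinate-wise, iteration-dependent vector $\beta_{1,t}$ obtained by clipping into $[0,1-\epsilon]$. I would begin by recording the uniform bounds that drive everything: coordinate-wise $0\le\beta_{1,t}\le\beta_{1,\max}=1-\epsilon$, hence $1-\beta_{1,t}\ge1-\beta_{1,\max}$; $\prod_{z=1}^{t}\beta_{1,z}\le\beta_{1,\max}^{t}\le\beta_{1,\max}$ for $t\ge1$, so the bias-correction denominator satisfies $1-\prod_{z=1}^{t}\beta_{1,z}\in[1-\beta_{1,\max},1]$; and, unrolling the recursion $m_t=\beta_{1,t}m_{t-1}+(1-\beta_{1,t})g_t$, the weights telescope so that $\hat m_t$ is, coordinate by coordinate, a convex combination of $g_1,\dots,g_t$ — equivalently $\hat m_t=w_t g_t+(1-w_t)\hat m_{t-1}$ with $w_t\in[1-\beta_{1,\max},1]$. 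Combined with $\|\nabla L\|\le G$ and the noise bound, this gives $\mathbb{E}\|g_t\|^{2}\le G^{2}+\delta^{2}$ and, by coordinate-wise Jensen on the convex combination, $\mathbb{E}\|\hat m_t\|^{2}$ bounded by a constant multiple of $(G^{2}+\delta^{2})/(1-\beta_{1,\max})$.

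Next I would carry out the descent step. By $\mathcal{L}$-smoothness and $\theta_{t+1}-\theta_t=-\eta\hat m_t$,
\[
L(\theta_{t+1})\le L(\theta_t)-\eta\,\langle\nabla L(\theta_t),\hat m_t\rangle+\tfrac{\mathcal{L}\eta^{2}}{2}\,\|\hat m_t\|^{2}.
\]
Taking conditional expectations and writing $\hat m_t=w_t g_t+(1-w_t)\hat m_{t-1}$: the $w_t g_t$ part produces a genuine descent contribution, of order $-(1-\beta_{1,\max})\eta\,\mathbb{E}\|\nabla L(\theta_t)\|^{2}$ through $w_t\ge1-\beta_{1,\max}$ — this is exactly where the $(1-\beta_{1,\max})^{-1}$ of $C_1$ originates; the momentum part $(1-w_t)\hat m_{t-1}$ is controlled by the usual momentum estimates, namely Young's inequality together with $\|\nabla L(\theta_t)-\nabla L(\theta_{t-i})\|\le\mathcal{L}\eta\sum_{j=1}^{i}\|\hat m_{t-j}\|$ and summation of the geometric weights $\beta_{1,\max}^{i}$; and the quadratic term is handled by the bound on $\mathbb{E}\|\hat m_t\|^{2}$. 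Collecting the pieces gives a per-step inequality of the form
\[
\mathbb{E}[L(\theta_{t+1})]\le\mathbb{E}[L(\theta_t)]-(1-\beta_{1,\max})\,\eta\,\mathbb{E}\|\nabla L(\theta_t)\|^{2}+\frac{\eta^{2}\bigl(\beta_{1,\max}G^{2}+\mathcal{L}(G^{2}+\delta^{2})\bigr)}{2(1-\beta_{1,\max})},
\]
in which the $(1-\beta_{1,\max})^{-1}$ and, after one more division, $(1-\beta_{1,\max})^{-2}$ factors appear exactly as in the SGDM analysis.

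Finally I would telescope over $k=0,\dots,t$: the loss terms collapse to $L(\theta_0)-\mathbb{E}[L(\theta_{t+1})]\le L(\theta_0)-L^{\star}$, the left side lower-bounds $(t+1)(1-\beta_{1,\max})\eta\min_k\mathbb{E}\|\nabla L(\theta_k)\|^{2}$, and the error term contributes $(t+1)$ copies of the per-step constant. Dividing through by $(t+1)(1-\beta_{1,\max})\eta$ and using $\eta\le C/\sqrt{t+1}$ — so that $\eta^{-1}$ becomes $\sqrt{t+1}/C$ in the loss term and the surviving $\eta$ becomes $C/\sqrt{t+1}$ in the error term — yields $\min_k\mathbb{E}\|\nabla L(\theta_k)\|^{2}\le(C_1+C_2+C_3)/\sqrt{t+1}$ with $C_1,C_2,C_3$ precisely as stated.

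The genuinely delicate part is the momentum-deviation term. Because $\beta_{1,t}$ varies both across coordinates and across iterations, there is no clean scalar auxiliary iterate $z_t=\theta_t+\tfrac{\beta_1}{1-\beta_1}(\theta_t-\theta_{t-1})$ with increments proportional to $-g_t$ (the device used in the constant-momentum proof); one must instead bound $\hat m_{t}-\nabla L(\theta_t)$ directly via the unrolled convex-combination representation, tracking the geometric decay $\beta_{1,\max}^{i}$ carefully enough that the double sum over iterations and over history closes at the advertised $(1-\beta_{1,\max})^{-2}$ rate. A related subtlety is that $\beta_{1,t}$ is not measurable with respect to the $\sigma$-field generated before $g_t$ is drawn — it depends on $g_t^{2}$ through $v_t$ — so the conditional-expectation steps must go through the deterministic envelopes $1-\beta_{1,t}\in[1-\beta_{1,\max},1]$ and $(1-\prod_{z\le t}\beta_{1,z})^{-1}\le(1-\beta_{1,\max})^{-1}$ rather than through exact cancellations.
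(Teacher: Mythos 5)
Your proposal follows essentially the same route as the paper's proof: an $\mathcal{L}$-smoothness descent step with the momentum unrolled into a convex combination of past gradients (the paper's weights $q_{k,t}=(1-\beta_{1,k})\prod_{i=k+1}^{t}\beta_{1,i}$ are exactly your coordinate-wise convex-combination coefficients), gradient-difference terms controlled by smoothness and the weighted lag sum $\sum_k q_{k,t}(t-k)\le\beta_{1,\max}/(1-\beta_{1,\max})$, telescoping, and the choice $\eta\le C/\sqrt{t+1}$, yielding the same three constants. Your closing remark about $\beta_{1,t}$ not being measurable before $g_t$ is drawn is a genuine subtlety that the paper's proof in fact glosses over (it takes expectations as if the $q_{k,t}$ were deterministic), so your plan to route everything through the deterministic envelopes is, if anything, the more careful version of the same argument.
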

 
We note that Adai is the base optimizer in the adaptive inertia framework, while Adam is the base optimizer in the adaptive gradient framework. We can also combine Adai with stable/decoupled weight decay (AdaiS/AdaiW) \citep{loshchilov2018decoupled,xie2020stable} (see more discussions in Appendix \ref{sec:appAdaiW}.) or other techniques from adaptive gradient methods. 

\begin{figure}
\centering
 \setlength{\abovecaptionskip}{0pt} 
\setlength{\belowcaptionskip}{0pt} 
 \includegraphics[width =0.49\columnwidth ]{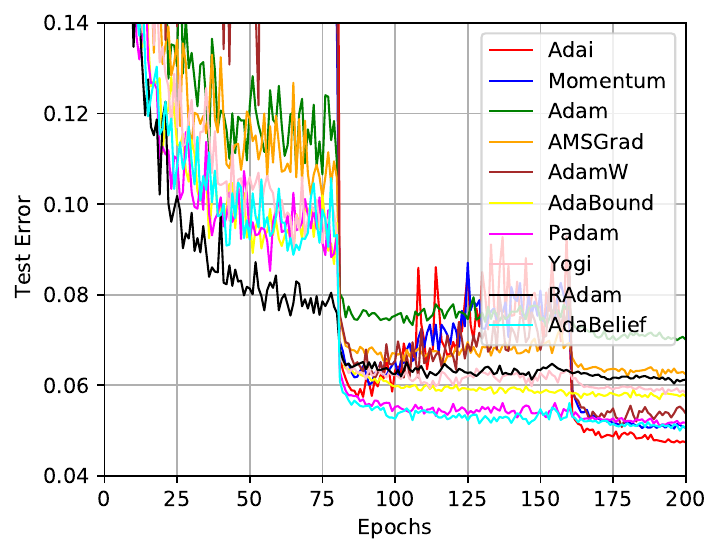}
  \includegraphics[width =0.49\columnwidth ]{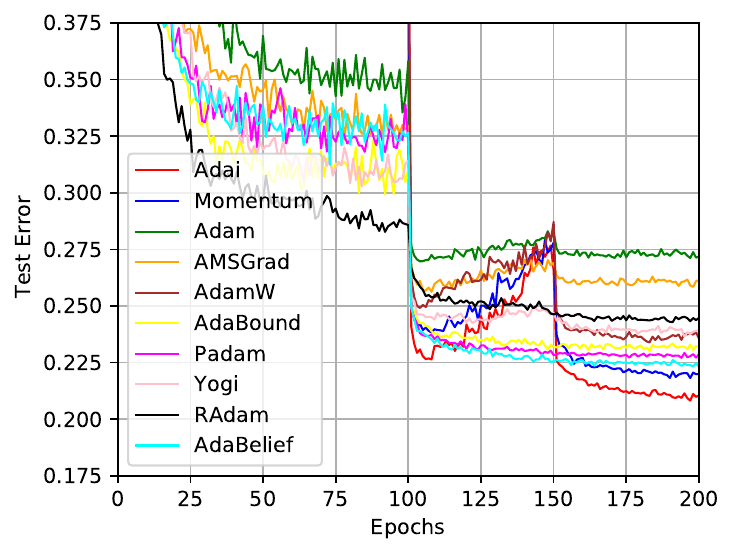}
 \caption{The learning curves on CIFAR-10 and CIFAR-100. Left: ResNet18 on CIFAR-10. Right: ResNet34 on CIFAR-100.}
\label{fig:cifarall}
\end{figure}

\begin{figure}
\centering
\setlength{\abovecaptionskip}{0pt} 
\setlength{\belowcaptionskip}{0pt} 
\subfigure[\small{VGG16}]{\includegraphics[width =0.49\columnwidth ]{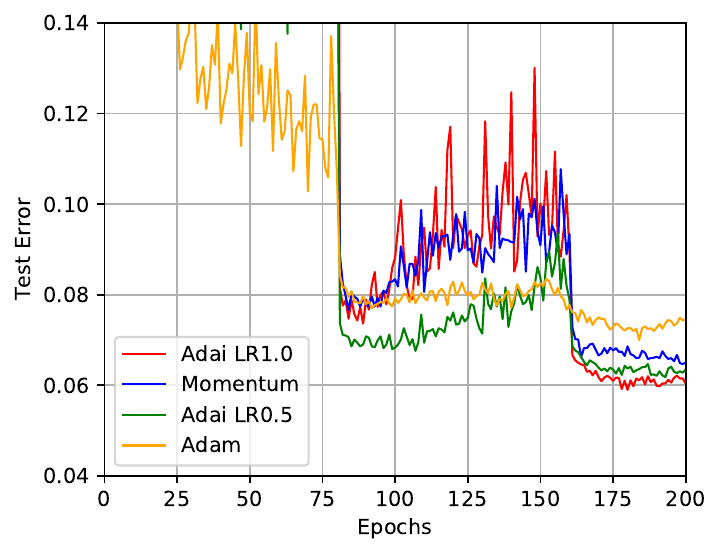}}  
\subfigure[\small{VGG16}]{\includegraphics[width =0.49\columnwidth ]{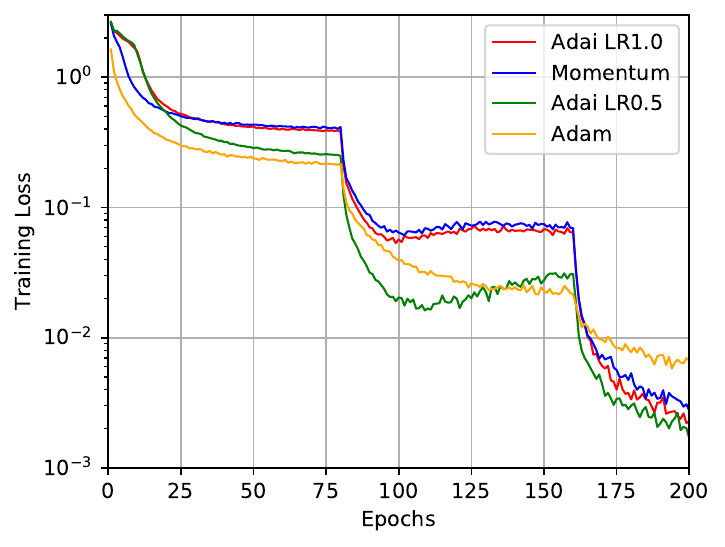}} 
\subfigure[\small{DenseNet121}]{\includegraphics[width =0.49\columnwidth ]{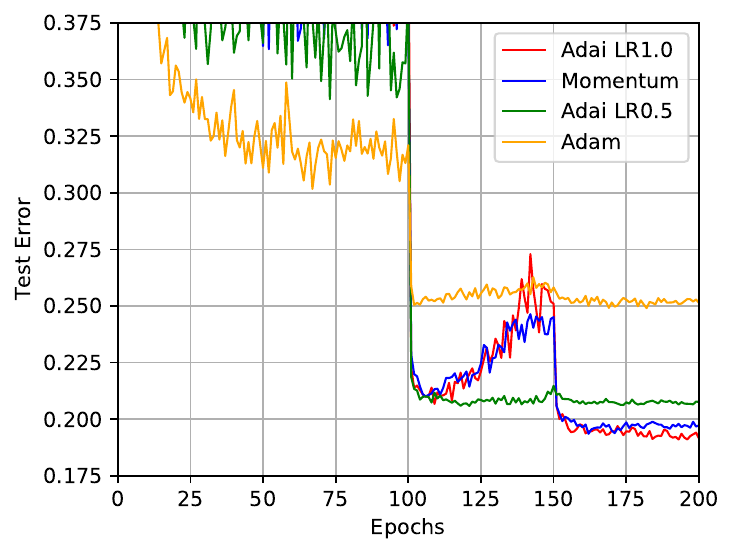}} 
\subfigure[\small{DenseNet121}]{\includegraphics[width =0.49\columnwidth ]{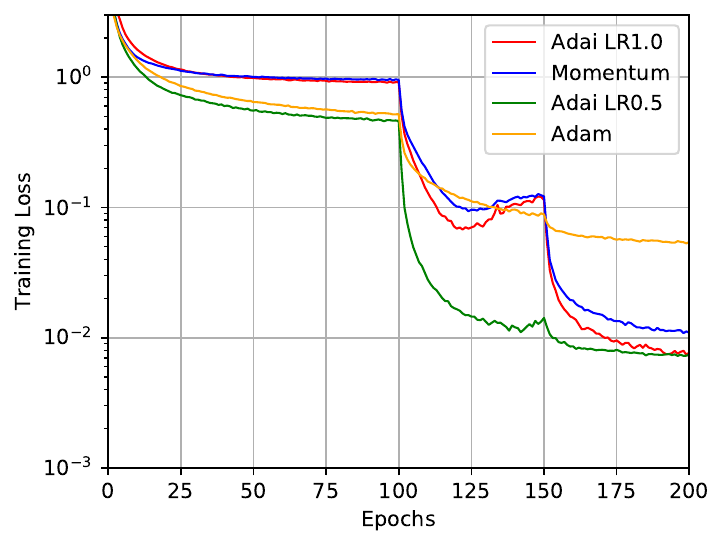}} 
\caption{Generalization and Convergence Comparison. Left Column: Test curves. Right Column: Training curves. Adai shows better Generalization in the comparison with Momentum and Adam under similar convergence speed.}
 \label{fig:cifar10}
\end{figure}

\section{Empirical Analysis}
\label{sec:empirical}

In this section, we first conduct experiments to compare Adai, Adam, and SGD with Momentum in terms of convergence speed and generalization, and then empirically analyze flat minima selection. 

\textbf{Datasets:} CIFAR-10, CIFAR-100\citep{krizhevsky2009learning}, ImageNet\citep{deng2009imagenet}, and Penn TreeBank\citep{marcus1993building}.
\textbf{Models:} ResNet18/ResNet34/ResNet50 \citep{he2016deep}, VGG16 \citep{simonyan2014very}, DenseNet121 \citep{huang2017densely}, GoogLeNet \citep{szegedy2015going}, and Long Short-Term Memory (LSTM) \citep{hochreiter1997long}. More details and results can be found in Appendix \ref{sec:appexperiment} and Appendix \ref{sec:additional}.

\begin{figure}
\centering
\begin{minipage}[h]{0.99\columnwidth}
\centering
\subfigure{\includegraphics[width =0.49\columnwidth ]{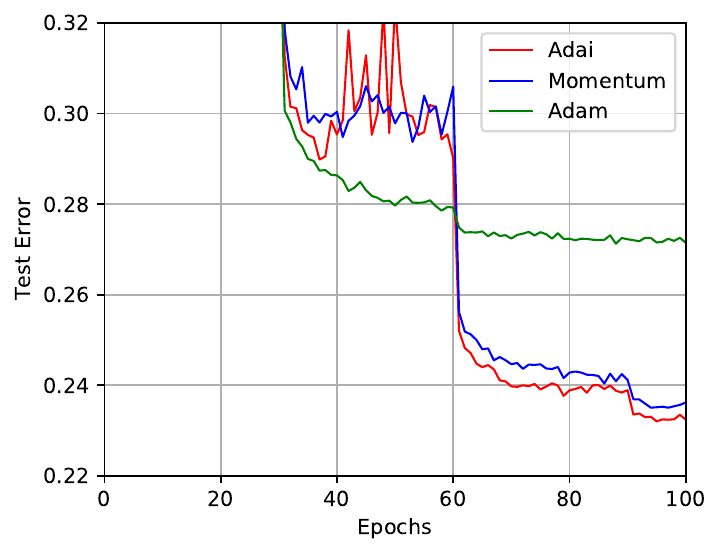}}
\subfigure{\includegraphics[width =0.49\columnwidth ]{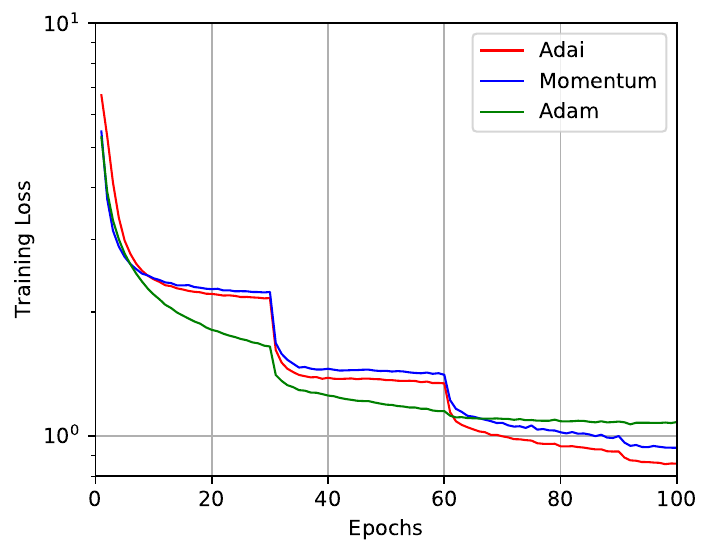}}  
\end{minipage}
\hfill
\begin{minipage}[h]{0.99\columnwidth}
\centering
\begin{tabular}{llll}
\toprule
           & Adai  & SGD M & Adam   \\
\midrule
Top1  & $\mathbf{23.20}$ & $23.51$ &  $27.13$  \\
Top5  & $\mathbf{6.62}$ & $6.82$ &  $9.18$ \\
\bottomrule
\end{tabular}
\end{minipage}
\caption{ResNet50 on ImageNet. Left Subfigure: Top 1 Test Error. Right Subfigure: Training Loss. Table: Top-1 and top-5 test errors. Note that the popular SGD baseline performance of ResNet50 on ImageNet has the test errors as $23.85\%$ in PyTorch and $24.9\%$ in \citet{he2016deep}, which are both weaker than our SGD baseline.}
 \label{fig:imagenet}
\end{figure}

\textbf{Generalization and Convergence Speed.} We present the test results of Adai and other popular optimizers in Table \ref{table:test}. Table \ref{table:test} and Figure \ref{fig:cifarall} shows that Adai has excellent generalization compared with other popular optimizers on CIFAR-10/CIFAR-100. Figure \ref{fig:cifar10} further demonstrates that Adai can consistently generalize better than SGD with Momentum and Adam, while maintaining similarly fast or faster convergence, respectively. 

\textbf{Image Classification on ImageNet.} Figure \ref{fig:imagenet} shows that Adai generalizes significantly better than SGD and Adam by training ResNet50 on ImageNet. 

\textbf{Robustness to the hyperparameters.} Figure \ref{fig:lrwdcifar} demonstrates that Adai not only has better optimal test performance, but also has a wider test error basin than SGD with Momentum and Adam. It means that Adai is more robust to the choices of learning rates and weight decay.

\begin{figure*}
    \centering
        \includegraphics[width=0.25\textwidth]{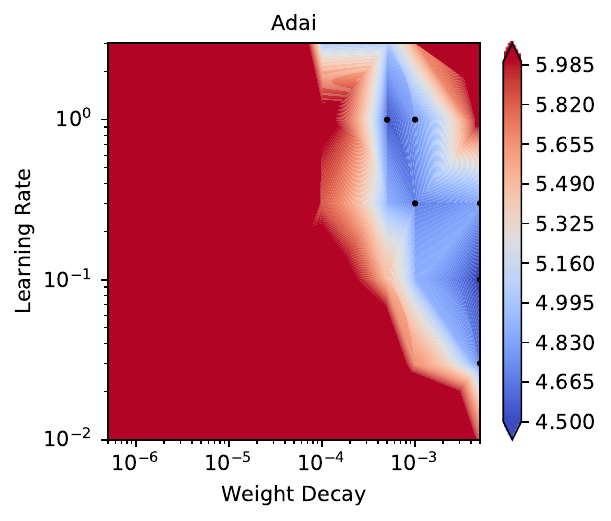} 
        \includegraphics[width=0.25\textwidth]{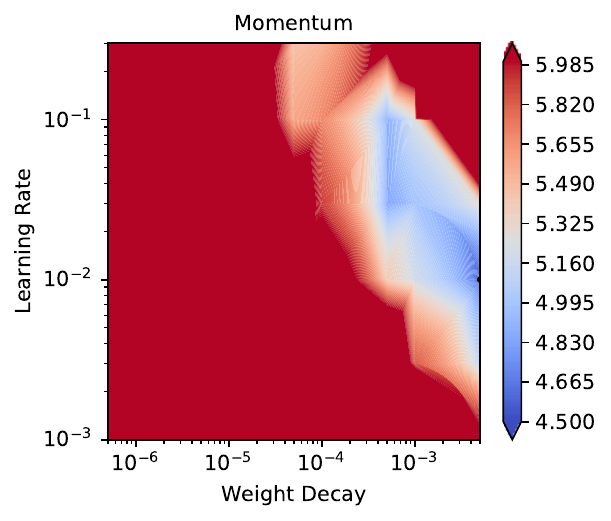} 
        \includegraphics[width=0.25\textwidth]{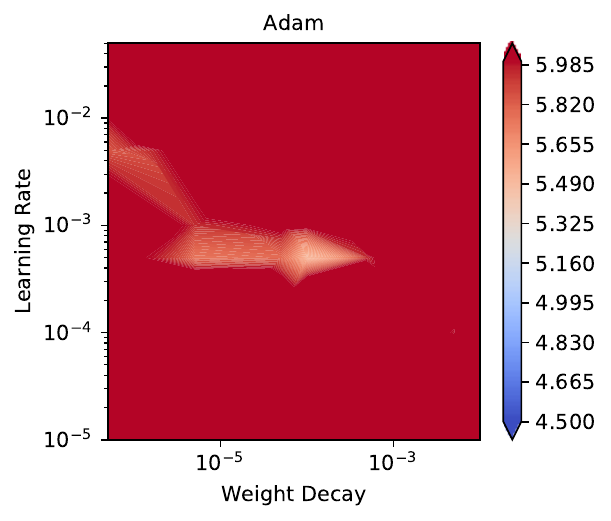} 
        \caption{The test errors of ResNet18 on CIFAR-10 under various learning rates and weight decay. Adai has a much deeper and wider blue region near dark points ($\leq 4.83\%$) than SGD with Momentum and Adam.}
 \label{fig:lrwdcifar}
\end{figure*}

\begin{figure*}
\centering
\subfigure[Adai: $- \log(\Gamma) = \mathcal{O}(k^{-1}) $]{\includegraphics[width =0.26\textwidth ]{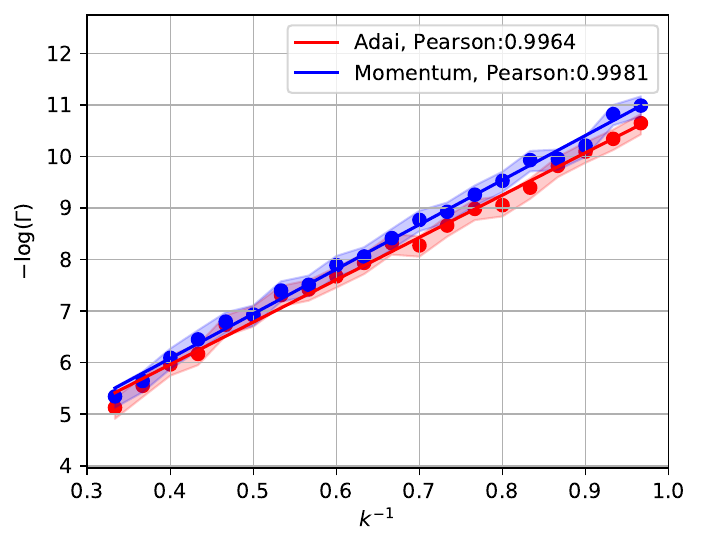}}
\subfigure[Adam: $- \log(\Gamma) \neq \mathcal{O}(k^{-1}) $ ]{\includegraphics[width =0.26\textwidth ]{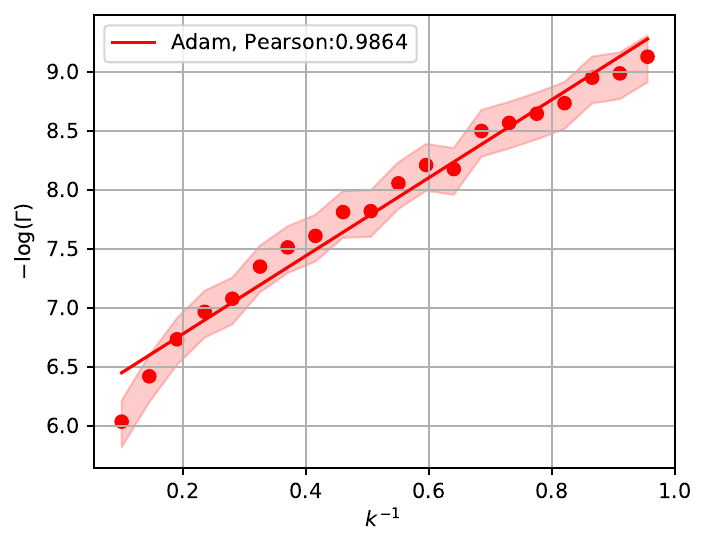}} 
\subfigure[Adam: $- \log(\Gamma) = \mathcal{O}(k^{-\frac{1}{2}} )$ ]{\includegraphics[width =0.26\textwidth ]{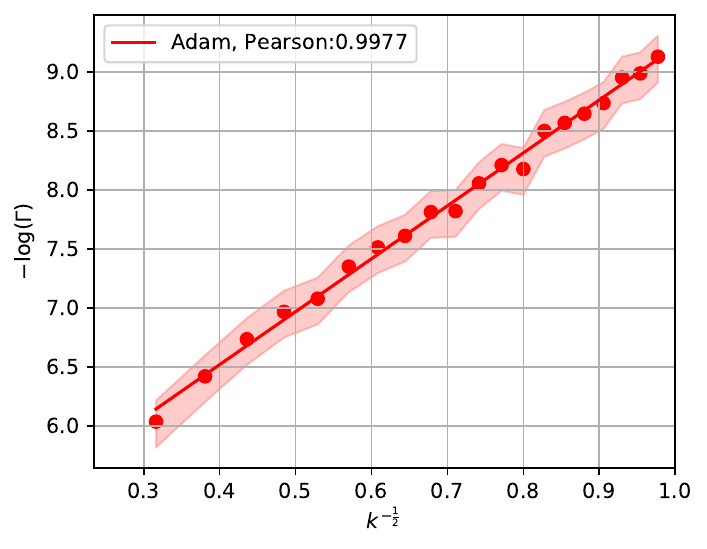}}  
  \caption{ Flat Minima Selection: $Adai \approx Momentum  \gg Adam$. The log-scale mean escape time $-\log(\Gamma)$ with the $95\%$ confidence interval is displayed. We empirically verify that $- \log(\Gamma) = \mathcal{O}(k^{-1}) $ holds for Momentum and Adai but does not hold for Adam. Instead, we observe that $- \log(\Gamma) = \mathcal{O}(k^{-\frac{1}{2}} )$ holds better for Adam. While Adai and Momentum favor flat minima similarly, Adai may escape loss valleys slightly faster than Momentum. }
 \label{fig:escape}
\end{figure*}

\textbf{The Mean Escape Time Analysis.} We empirically study how the escape rate $\Gamma$, which equals to the inverse mean escape time, depends on the minima sharpness for different optimizers in Figure \ref{fig:escape}. We use Styblinski-Tang Function as the test function which has clear boundaries between loss valleys. Our method for adjusting the minima sharpness is to multiply a rescaling factor $\sqrt{k}$ to each parameter, and the Hessian will be proportionally rescaled by a factor $k$. If we let $L(\theta) = f(\theta) \rightarrow L(\theta) = f(\sqrt{k}\theta)$, then $H(\theta) = \nabla^{2} f(\theta) \rightarrow H(\theta) = k  \nabla^{2} f(\theta)$. Thus, we can use $k$ to indicate the relative minima sharpness. We leave more details in Appendix \ref{sec:escape}. 

Figure \ref{fig:escape} shows that we have $\log(\tau) = \mathcal{O}(  H_{ae}^{-1})$ in Adai and SGD, while we have $\log(\tau) = \mathcal{O}( H_{ae}^{-\frac{1}{2}} )$ in Adam. Moreover, Adam is significantly less dependent on the minima sharpness than both Momentum and Adai. 

\textbf{Minima Sharpness.} Figure \ref{fig:spectrum} shows that the top eigenvalues of the Hessian given by Adai are significantly smaller than those of SGD and Adam. As the large eigenvalues and the trace are common measures of minima sharpness, it verified that Adai learns flat minima well.

\textbf{Supplementary Experiments.} In Appendix \ref{sec:additional}, we present the results of LSTM on Penn TreeBank in Figure \ref{fig:lstmadai} and evaluate the expected minima sharpness \citep{neyshabur2017exploring} which also support Adai learns flatter minima. 

\begin{figure}
\center
\includegraphics[width =0.5\columnwidth ]{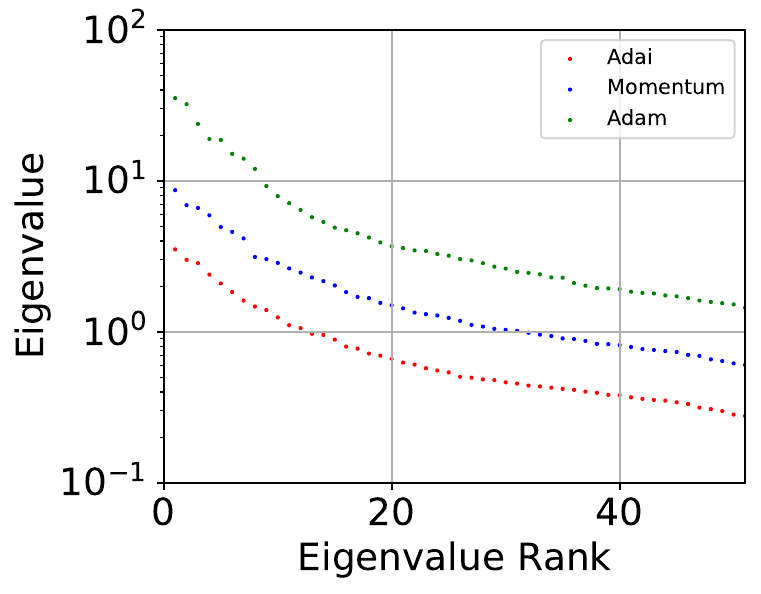}
  \caption{The top Hessian eigenvalues for ResNet18 on CIFAR-10.}
 \label{fig:spectrum}
\end{figure}

\section{Conclusion}
\label{sec:conclusion}

To the best of our knowledge, we are the first to theoretically disentangle the effects of Adaptive Learning Rate and Momentum in terms of saddle-point escaping and flat minima selection. Under reasonable assumptions, our theory explains why Adam is good at escape saddle points but not good at selecting flat minima. We further propose a novel optimization framework, Adai, which can parameter-wisely adjust the momentum hyperparameter. Supported by good theoretical properties, Adai can accelerate training and favor flat minima well at the same time. Our empirical analysis demonstrates that Adai generalizes significantly better than popular Adam variants and SGD.

\section*{Acknowledgement}
MS was supported by the International Research Center for Neurointelligence (WPI-IRCN) at The University of Tokyo Institutes for Advanced Study.

\bibliography{deeplearning}
\bibliographystyle{icml2022} 

\appendix 

\onecolumn

\section{Proofs} 
\label{sec:proofs}

\subsection{Proof of Theorem \ref{pr:ddiffusion}} 
\label{pf:ndiffusion}

\begin{proof}
It is easy to validate that the probability density function
\begin{align}
P(\theta, t) = \frac{1}{ \sqrt{ (2\pi)^{n} \det(\Sigma(t))} }\exp\left( - \frac{1}{2} (\theta -c)^{\top} \Sigma(t) (\theta - c) \right) 
\end{align}
is the solution of the Fokker-Planck Equation \eqref{eq:FP}. Without losing generality, we only validate one-dimensional solution, such as Dimension $i$.

The first term in Equation \eqref{eq:FP} can be written as
\begin{align}
\frac{\partial P(\theta, t)}{\partial t} & =  - \frac{1}{2} \frac{1}{\sqrt{2\pi \sigma^{2}}} \frac{1}{\sigma^{2}} \exp\left( - \frac{\theta^{2}}{2\sigma^{2}}\right) \frac{\partial\sigma^{2}}{\partial t} +  \frac{1}{\sqrt{2\pi \sigma^{2}}}   \exp\left( - \frac{\theta^{2}}{2\sigma^{2}}\right)  \frac{\theta^{2}}{2\sigma^{4}} \frac{\partial\sigma^{2}}{\partial t} \\
& = \frac{1}{2} \left( \frac{\theta^{2}}{\sigma^{4}} - \frac{1}{\sigma^{2}} \right) P(\theta, t) \frac{\partial\sigma^{2}}{\partial t}.
\end{align}

The second term in Equation \eqref{eq:FP} can be written as
\begin{align}
\nabla \cdot [P(\theta, t) \nabla L(\theta) ] &= P(\theta) H + H \theta  \frac{1}{\sqrt{2\pi \sigma^{2}}}  \exp\left( - \frac{\theta^{2}}{2\sigma^{2}}\right)  \left( - \frac{\theta}{\sigma^{2}}\right)  \\
&= H \left( 1 - \frac{\theta^{2}}{\sigma^{2}} \right) P(\theta, t).
\end{align}

The third term in Equation \eqref{eq:FP} can be written as
\begin{align}
 D \nabla^{2} P(\theta, t) &= - D \frac{\sigma^{2} - \theta^{2}}{ \sigma^{5} \sqrt{2\pi}} \exp\left( - \frac{\theta^{2}}{2\sigma^{2}}\right)  \\
 &= D \left( \frac{\theta^{2}}{\sigma^{4}} - \frac{1}{\sigma^{2}} \right) P(\theta, t).
\end{align}

By $Term1 = Term2 + Term3$, we have 
\begin{align}
\frac{1}{2} (\theta^{2} - \sigma^{2}) \frac{\partial\sigma^{2}}{\partial t} &= H \sigma^{2} (\sigma^{2} - \theta^{2}) + D( \theta^{2}  - \sigma^{2} ) \\
 \frac{\partial\sigma^{2}}{\partial t}   &= 2 D - 2 H \sigma^{2} .
\end{align}
 
The initial condition of $\sigma^{2}$ is given by $\sigma^{2}(0) = 0$. We can validate that $\sigma^{2}$ satisfies 
 \begin{align}
 \sigma_{i}^{2}(t) =  \frac{D_{i}}{H_{i}} [1 - \exp(-2H_{i}t)].
\end{align}
It is true along all eigenvectors' directions. 

By $D = \frac{\eta}{2B}H$, we can get the results of SGD diffusion:
 \begin{align}
\sigma_{i}^{2}(T) = \sign(H_{i})  \frac{\eta}{2B} [1 - \exp(-2H_{i}\eta T)]
\end{align}

\end{proof}

\subsection{Proof of Theorem \ref{pr:Momentum}} 
\label{pf:Momentum}

\begin{proof}

Without losing the generality, we consider the one-dimensional case that aligns with an eigenvector of the Hessian. The stationary solution in equilibrium to the phase-space Fokker-Planck Equation near a critical point is given by a canonical ensemble
\begin{align}
P_{eq}(\theta, r) =  Z^{-1} \exp[-\beta (L(\theta) + \frac{1}{2}M r^{2})],
\label{eq:canonical}
\end{align}
where $Z$ is a partition function and the inverse temperature $\beta =\gamma M D^{-1}$. This result is famous in a large number of physics literature \citep{van1992stochastic,risken1996fokker,balakrishnan2008elements}. Thus, we have the equilibrium distribution in velocity space and position space as
\begin{align}
P_{eq}(r) =  Z_{r}^{-1} \exp(- \frac{\beta M r^{2}}{2})
\end{align}
and 
\begin{align}
P_{eq}(\theta) =  Z_{r}^{-1} \exp(- \beta L(\theta)),
\end{align}
respectively, where $Z_{r}$ is the partition function for $P_{eq}(r)$ and $Z_{\theta}$ is the partition function for $P_{eq}(\theta)$.

It is known that the phase-space probability density solution $P(\theta, r, t)$ can be obtained from the position-space solution $P(\theta,t)$ and the velocity-space solution $P(r, t)$. Thus, we may compute the velocity-space solution (in Step I) and the position-space solution (in Step II), respectively.

Step I:

We further argue that the velocity-space solution $P(r, t)$ has reached its equilibrium Boltzmann distribution $P_{eq}(r)$ when the particle is passing saddle points. This is reasonable because the equilibrium solution in the velocity space is stable under Assumption \ref{as:taylor}. The velocity distribution $P(r, t)$ must be the solution of the free diffusion equation in the velocity space. We can approximately ignore the gradient expectation term $-  \frac{\partial L(\theta)}{\partial \theta} $ in dynamical equations near critical points, as the gradient expectation is much smaller the gradient noise scale near critical points. Near critical points, the velocity $r$ obeys the equilibrium distribution
\begin{align}
P(r, t) \approx P_{eq}(r) = (2\pi)^{-\frac{n}{2}} \det(\beta M)^{\frac{1}{2}} \exp(- \frac{\beta M r^{2}}{2}),
\end{align}
where $n=1$ in one-dimensional case. This is a classical Boltzmann distribution. The expected velocity (also called ``equilibrium velocity'') along dimension $i$ can be given by $r_{eq,i}^{2} = \frac{D_{i}}{\gamma M^{2}}$.

Step II:

As we discussed in Theorem \ref{pr:ddiffusion}, the position-space distribution in equilibrium is time-dependent near saddle points. Following the form of the solution to the position-space Fokker-Planck Equation (Seen in Appendix \ref{pf:ndiffusion}), the ansatz solution of $P(\theta, t)$ is given by
 \begin{align*}
\begin{cases}
      & P(\theta,t) = \frac{1}{ \sqrt{ (2\pi)^{n} \det(\Sigma(t))} }\exp\left( - \frac{1}{2} (\theta - c(t))^{\top} \Sigma(t) (\theta - c(t)) \right)  \\
      & \Sigma(t) = U \diag(\sigma_{1}^{2}(t), \ldots, \sigma_{n-1}^{2}(t),\sigma_{n}^{2}(t)) U^{\top}  \\
\end{cases}  
\end{align*}

The time-dependent components in $P(\theta, t)$ are caused by two effects. The first effect is the momentum drift effect (due to equilibrium velocity), which decides $c(t)$, the center position of the probability density. As we studied in Theorem \ref{pr:ddiffusion}, the second effect is the diffusion effect (due to random noise), which decides the covariance of the probability density, $\Sigma(t)$. 

The momentum drift effect is governed by the motion equation without random noise, while the diffusion effect is governed by the diffusion equation where gradient noise dominates gradient mean.

Step II (a):  the momentum drift effect.

We can write the dynamics of the momentum drift effect as
\begin{align}
M \ddot{c}(t) & = - \gamma M  \dot{c}(t) .
\end{align}
We also have ignored the conservative force $- H[c(t) - c(0)]$ near saddle points, as $|H_{i}| \ll \gamma M$ exists for ill-condition Hessian eigenvalues. We focus on the behaviors near ill-conditioned saddle points, where Hessian eigenvalues are small along the escape directions.

The initial condition is given by $c(0) = c$, $\dot{c}(0) = r_{eq}$, and $\ddot{c}(0) = - \gamma r_{eq} $. 
Then we can obtain the solution $c(t)$ as 
\begin{align}
c_{i}(t) =  c_{i} + \frac{r_{i,eq}}{\gamma} [1 - \exp(- \gamma t)].
\end{align}

Step II (b):  the diffusion effect.

We can get the dynamics of the diffusion effect as
\begin{align}
\gamma M d\theta & =   -  \frac{\partial L(\theta)}{\partial \theta} dt +  [2 D(\theta)]^{\frac{1}{2}}  dW_{t}  .
\end{align}

This is equivalent to SGD dynamics with $\hat{\eta} = \frac{\eta}{\gamma M}$. The expression of $P(\theta, t)$ and $\sigma_{i}^{2}(t)$ is directly given by Theorem \ref{pr:ddiffusion} as 
\begin{align}
\sigma_{i}^{2}(t) =  \frac{D_{i}}{\gamma MH_{i}} [1 - \exp(-\frac{2H_{i}t}{\gamma M})].
\end{align}

We combine the momentum drift effect and the diffusion effect together, and then obtain the mean squared displacement of $\theta$ as
\begin{align}
 \langle \Delta \theta_{i}^{2}(t) \rangle =  (c_{i}(t) - c_{i})^{2} + \sigma_{i}^{2}(t) = \frac{D_{i}}{\gamma^{3}M^{2}} [1-\exp(-\gamma t)]^{2} + \frac{ D_{i}}{\gamma M H_{i}} [1- \exp(- \frac{2H_{i}  t}{\gamma M})] .
\end{align}

We respectively introduce the notations of Adam-style Momentum and SGD-style Momentum, and apply the second order Taylor expansion in case of small $- \frac{2H_{i}  t}{\gamma M}$. Then we obtain 
\begin{align} 
\langle \Delta \theta_{i}^{2} \rangle =  \frac{|H_{i}|\eta^{2} }{2(1-\beta_{1})B} \left[1-\exp\left(- (1- \beta_{1})T\right)\right]^{2}  + \frac{|H_{i}| \eta^{2}T}{B}  + \mathcal{O}( B^{-1} H_{i}^{2} \eta^{3} T^{2})
\end{align}
for Adam-style Momentum, and
\begin{align} 
\langle \Delta \theta_{i}^{2} \rangle =  \frac{|H_{i}|\eta^{2} }{2(1-\beta_{1})^{3}B} \left[1-\exp\left(- (1- \beta_{1})T\right)\right]^{2}  + \frac{|H_{i}| \eta^{2}T}{B(1-\beta_{1})^{2}}  + \mathcal{O}( B^{-1} H_{i}^{2} \eta^{3} T^{2})
\end{align}
for SGD-style Momentum.

\end{proof}

\subsection{Proof of Theorem \ref{pr:Momentumtransition}} 
\label{pf:Momentumtransition}
\begin{proof}
 The proof closely relates to the proof of Theorem 3.2 in \citet{xie2020diffusion} and a physics work \citep{kalinay2012phase}. 
 
We first discover how SGD dynamics differs from Momentum dynamics in terms of escaping loss valleys. In this approach, we may transform the proof for Theorem 3.2 of \citet{xie2020diffusion} into the proof for Theorem \ref{pr:Momentumtransition} with the effective diffusion correction. We use $J$ and $j$ to denote the probability current and the probability flux respectively. According to Gauss Divergence Theorem, we may rewrite the Fokker-Planck Equation \eqref{eq:MomentumFP} as 
\begin{align}
\frac{\partial P(\theta, r ,t)}{\partial t}  = & -  r \cdot \nabla_{\theta} P(\theta,r,t) +  \nabla_{\theta} L(\theta) \cdot M^{-1} \nabla_{r} P(\theta, r, t) \nonumber \\ & +  \nabla_{r} \cdot M^{-2} D(\theta)  \cdot P_{eq}(r)  \nabla_{r} [P_{eq}(r)^{-1} P(\theta,r,t)] \\
 = & - \nabla \cdot J(\theta, t).
\end{align}

We will take similar forms of the proof in \citet{xie2020diffusion} as follows. The mean escape time is written as
\begin{align}
\tau =& \frac{P(\theta \in V_{a} )}{\int_{S_{a}} J \cdot dS  }.
\end{align}
To compute the mean escape time, we decompose the proof into two steps: 1)  compute the probability of locating in valley a, $P(\theta \in V_{a} ) $, and 2) compute the probability flux $ j = \int_{S_{a}} J \cdot dS$. The definition of the probability flux integral may refer to Gauss Divergence Theorem. 

Step 1:

Fortunately, the stationary probability distribution inside Valley $a$ in Momentum dynamics is also given by a Gaussian distribution in Theorem \ref{pr:ddiffusion} as 
\begin{align}
\sigma_{i}^{2}(t) =  \frac{D_{i}}{\gamma MH_{i}} .
\end{align}
Under Quasi-Equilibrium Assumption, the distribution around minimum a is $ P(\theta) = P(a) \exp\left[- \frac{\gamma M}{2}(\theta-a)^{\top}(D_{a}^{-\frac{1}{2}}H_{a}D_{a}^{-\frac{1}{2}})(\theta-a)\right]$. We use the $ T$ notation as the temperature parameter in the stationary distribution, and use the $D$ notation as the diffusion coefficient in the dynamics, for their different roles. 
\begin{align}
& P(\theta \in V_{a} )  \\
=& \int_{\theta \in V_{a}} P(\theta) dV \\
=& P(a)  \int_{\theta \in V_{a}}  \exp\left[- \frac{\gamma M}{2}(\theta-a)^{\top}(D_{a}^{-\frac{1}{2}}H_{a}D_{a}^{-\frac{1}{2}})(\theta-a)\right]dV \\
\approx & P(a)  \int_{\theta \in (-\infty, +\infty)}  \exp\left[- \frac{\gamma M}{2}(\theta-a)^{\top}(D_{a}^{-\frac{1}{2}}H_{a}D_{a}^{-\frac{1}{2}})(\theta-a)\right]dV \\
=& P(a) \frac{(2\pi \gamma M )^{\frac{n}{2}}}{\det(D_{a}^{-1}H_{a})^{\frac{1}{2}}} 
\end{align}
This result of $P(\theta \in V_{a} )$ in Momentum only differs from SGD by the temperature correction $\gamma M$.

Step 2:
We directly introduce the effective diffusion result from \citep{kalinay2012phase} into our analysis. \citet{kalinay2012phase} proved that the phase-space Fokker-Planck Equation \eqref{eq:MomentumFP} can be reduced to a space-dependent Smoluchowski-like equation, which is extended by an effective diffusion correction: 
\begin{align}
\hat{D}_{i}(\theta) = D_{i}(\theta) \left(1- \sqrt{1 - \frac{4H_{i}(\theta)}{\gamma^{2}M}}\right)   \left( \frac{2H_{i}(\theta)}{\gamma^{2}M} \right)^{-1}  .
\label{eq:effectiveD}
\end{align}

As we only employ the Smoluchowski Equation along the escape direction, we use the one-dimensional expression along the escape direction (an eigenvector direction) for simplicity. Without losing clarity, we use the commonly used $T$ to denote the temperature in the proof. 

In case of SGD \citep{xie2020diffusion}, we can obtain Smoluchowski Equation in position space:
\begin{align}
J = D (\theta) \exp\left(\frac{- L(\theta)}{ T}\right) \nabla \left[\exp\left(\frac{ L(\theta)}{ T}\right) P(\theta)\right],
\end{align}
where $T = D$.
According to \citep{kalinay2012phase}, in case of finite inertia, we can transform the phase-space equation into the position-space Smoluchowski-like form with the effective diffusion correction:
\begin{align}
J = \hat{D}(\theta) \exp\left(\frac{- L(\theta)}{ T}\right) \nabla \left[\exp\left(\frac{ L(\theta)}{ T}\right) P(\theta)\right],
\end{align}
where $T = \frac{D}{\gamma M} $ , and $\hat{D}$ defined by Equation \eqref{eq:effectiveD} replaces standard $D$.

We assume the point $s$ is a midpoint on the most possible path between a and b, where $L(s) = (1-s)L(a) +s L(b)$. The temperature $T_{a}$ dominates the path $a \rightarrow s$, while temperature $T_{b}$ dominates the path $s \rightarrow b$. So we have
\begin{align}
\nabla \left[\exp\left(\frac{L(\theta)-L(s)}{ T}\right) P(\theta)\right] = J D^{-1} \exp\left(\frac{ L(\theta)-L(s)}{ T}\right) .
\end{align}
We integrate the equation from Valley a to the outside of Valley a along the most possible escape path
\begin{align}
Left =&  \int_{a}^{c} \frac{\partial }{\partial \theta}[\exp\left(\frac{L(\theta)-L(s)}{ T}\right)  P(\theta)] d\theta \\
 =& \int_{a}^{s} \frac{\partial }{\partial \theta}\left[\exp\left(\frac{L(\theta)-L(s)}{ T_{a}}\right)  P(\theta)\right] d\theta \\
& + \int_{s}^{c} \frac{\partial }{\partial \theta}\left[\exp\left(\frac{L(\theta)-L(s)}{ T_{b}}\right)  P(\theta)\right] d\theta \\
=& [P(s) - \exp\left(\frac{L(a) - L(s)}{ T_{a}}\right)P(a)] + [0 - P(s)] \\
=& - \exp\left(\frac{L(a) - L(s)}{ T_{a}}\right)P(a)
\end{align}
\begin{align}
Right =&  - J \int_{a}^{c} D^{-1}  \exp\left(\frac{L(\theta) - L(s)}{ T}\right)  d\theta 
\end{align}

We move $J$ to the outside of integral based on Gauss's Divergence Theorem, because $J$ is fixed on the escape path from one minimum to another. As there is no field source on the escape path, $\int_{V} \nabla \cdot J(\theta) dV = 0$ and $\nabla J(\theta) = 0$. Obviously, probability sources are all near minima in deep learning. So we obtain
\begin{align}
J =& \frac{\exp\left(\frac{L(a)- L(s)}{ T_{a}}\right)  P(a) }{\int_{a}^{c}  \hat{D}^{-1} \exp\left(\frac{L(\theta)- L(s)}{ T}\right)  d\theta } .
\end{align}
Near saddle points,  we have 
\begin{align}
& \int_{a}^{c}  \hat{D}^{-1}  \exp\left(\frac{L(\theta)- L(s)}{ T}\right)  d\theta  \\
\approx &  \int_{a}^{c} \hat{D}^{-1}  \exp\left[\frac{L(b) - L(s) + \frac{1}{2}(\theta - b)^{\top}H_{b}(\theta - b) }{ T_{b}}\right] d\theta \\
\approx & \hat{D}^{-1}_{b} \int_{-\infty}^{+\infty}   \exp\left[\frac{L(b) - L(s) + \frac{1}{2}(\theta - b)^{\top}H_{b}(\theta - b) }{ T_{b}}\right]  d\theta \\
= &   \hat{D}^{-1}_{b} \exp\left(\frac{L(b)- L(s)}{ T_{b}}\right) \sqrt{\frac{2\pi  T_{b}}{|H_{b}|}}.
\end{align}
Besides the temperature correction $T = \frac{D}{\gamma M}$, this result of $J$ in Momentum also differs from SGD by the effective diffusion correction $\frac{\hat{D}_{b}}{D_{b}}$. 
The effective diffusion correction coefficient is given by
\begin{align}
\frac{\hat{D}_{i}(\theta)}{D_{i}(\theta)} = \frac{1- \sqrt{1 - \frac{4H_{i}(\theta)}{\gamma^{2}M}}}{\frac{2H_{i}(\theta)}{\gamma^{2}M} }.
\end{align}

Based on the formula of the one-dimensional probability current and flux, we obtain the high-dimensional flux escaping through b:
\begin{align}
& \int_{S_{b}} J \cdot dS \\
=& J_{1d}  \int_{S_{b}}  \exp\left[- \frac{\gamma M}{2}(\theta-b)^{\top}[D_{b}^{-\frac{1}{2}}H_{b}D_{b}^{-\frac{1}{2}}]^{\perp e}(\theta-b)\right] dS \\
=& J_{1d} \frac{(2\pi \gamma M )^{\frac{n-1}{2}}}{(\prod_{i \neq e} (D_{bi}^{-1} H_{bi}))^{\frac{1}{2}}} \\
=&  \frac{\exp\left(\frac{L(a)- L(s)}{ T_{ae}}\right)  P(a) \frac{(2\pi \gamma M )^{\frac{n-1}{2}}}{(\prod_{i \neq e} (D_{bi}^{-1} H_{bi}))^{\frac{1}{2}}}}
{ \hat{D}^{-1}_{be} \exp\left(\frac{L(b)- L(s)}{ T_{be}}\right) \sqrt{\frac{2\pi  T_{be}}{|H_{be}|}} }
\end{align}
where $[\cdot ]^{\perp e}$ indicates the directions perpendicular to the escape direction $e$.

Based on the results of Step 1 and Step 2, we have
\begin{align}
\tau =& \frac{P(\theta \in V_{a} )}{\int_{S_{b}} J \cdot dS  }   \\
=& P(a) \frac{(2\pi \gamma M )^{\frac{n}{2}}}{\det(D_{a}^{-1}H_{a})^{\frac{1}{2}}}  \frac{ \hat{D}^{-1}_{be} \exp\left(\frac{L(b)- L(s)}{ T_{be}}\right) \sqrt{\frac{2\pi  T_{be}}{|H_{be}|}} }{\exp\left(\frac{L(a)- L(s)}{ T_{ae}}\right)  P(a) \frac{(2\pi \gamma M )^{\frac{n-1}{2}}}{(\prod_{i \neq e} (D_{bi}^{-1} H_{bi}))^{\frac{1}{2}}}} \\
=& \frac{1}{\hat{D}_{be}} 2 \pi \frac{D_{be}}{ |H_{be}|} \exp\left[ \frac{2 \gamma M B \Delta L}{\eta } \left(\frac{s }{H_{ae}} + \frac{(1-s)}{|H_{be}|}\right)\right] \\
=&  \pi  \left[ \sqrt{1 + \frac{4|H_{be}| }{\gamma^{2}M } } +  1  \right] \frac{1}{|H_{be}|} \exp\left[ \frac{2 \gamma M B \Delta L}{\eta } \left(\frac{s }{H_{ae}} + \frac{(1-s)}{|H_{be}|}\right)\right] .
\end{align}
We have replaced the eigenvalue of $H_{b}$ along the escape direction by its absolute value.

Finally, by introducing $\gamma$ and $M$, we obtain the log-scale expression as
\begin{align}
\log(\tau) = \mathcal{O}\left( \frac{2 (1-\beta_{1})B \Delta L}{\beta_{3} \eta H_{ae} } \right) 
\end{align}

\end{proof}

\subsection{Proof of Proposition \ref{pr:Adamtransition} } 
\label{pf:Adamtransition}
\begin{proof}
The proof closely relates to the proof of \ref{pr:Momentumtransition}. We only need replace the standard learning rate by the adaptive learning rate $\hat{\eta} = \eta v^{-\frac{1}{2}}$, and set $\gamma M =1$. Particularly, 
\begin{align}
D_{adam} = D v^{-\frac{1}{2}}= \frac{\eta[H]^{+}v^{-\frac{1}{2}}}{2B} = \frac{1}{2}\eta \sqrt{\frac{[H]^{+}}{B}} .
\end{align}
We introduce $\hat{\eta} = \eta v^{-\frac{1}{2}}$ into the proof of \ref{pr:Momentumtransition}, and obtain
\begin{align}
\tau =& \frac{P(\theta \in V_{a} )}{\int_{S_{b}} J \cdot dS  }   \\
=& \left[  \frac{|\det(D_{b}^{-1}V_{b}^{\frac{1}{2}}H_{b})|}{\det(D_{a}^{-1}V_{a}^{\frac{1}{2}}H_{a})} \right]^{\frac{1}{2}}  \pi  \left[ \sqrt{1 + \frac{4|H_{be}| }{\gamma^{2}M } } +  1  \right] \frac{1}{|H_{be}|} \exp\left[ \frac{2 \gamma M B \Delta L}{\eta } \left(\frac{s }{V_{ae}^{-\frac{1}{2}} H_{ae}} + \frac{(1-s)}{V_{be}^{-\frac{1}{2}} |H_{be}|}\right)\right] \\
=&   \pi  \left[ \sqrt{1 + \frac{4\eta\sqrt{B|H_{be}|} }{1 - \beta_{1} } } +  1  \right]  \frac{ |\det(H_{a}^{-1}H_{b})|^{\frac{1}{4}} }{|H_{be}|} \exp\left[\frac{2 \sqrt{B} \Delta L}{\eta } \left(\frac{s }{\sqrt{H_{ae}}} +\frac{(1-s)}{\sqrt{|H_{be}|}}\right) \right] .
\end{align}
 
\end{proof}

\subsection{Proof of Proposition \ref{pr:Adaitransition} } 
\label{pf:Adaitransition}
\begin{proof}
The proof closely relates to the proof of \ref{pr:Momentumtransition}. We only need introduce the mass matrix $M $ and the dampening matrix $\gamma$. Fortunately, $\gamma M = I$. Thus we directly have the result from the proof of \ref{pr:Momentumtransition} as:
\begin{align}
\tau =& \frac{P(\theta \in V_{a} )}{\int_{S_{b}} J \cdot dS  }   \\
=&  \pi  \left[ \sqrt{1 + \frac{4|H_{be}| }{\gamma^{2}M } } +  1  \right] \frac{1}{|H_{be}|} \exp\left[ \frac{2 \gamma M B \Delta L}{\eta } \left(\frac{s }{H_{ae}} + \frac{(1-s)}{|H_{be}|}\right)\right] .
\end{align}
As $M = \eta  (I- \beta_{1})^{-1}$, $\gamma =  \eta^{-1}  (I- \beta_{1})$, and $I - \beta_{1} = \frac{\beta_{0}}{\bar{v}}v$, we obtain the result:
\begin{align}
 \tau =  \pi  \left[ \sqrt{1 + \frac{4\eta \sum_{i=1}^{n} |H_{bi}|  }{ \beta_{0} n } } +  1  \right] \frac{1}{|H_{be}|} \exp\left[ \frac{2 B \Delta L}{\eta } \left(\frac{s }{H_{ae}} + \frac{(1-s)}{|H_{be}|}\right)\right]  
 \end{align}

\end{proof}

\subsection{Proof of Theorem \ref{pr:adaiconverge}} 
\label{pf:adaiconverge}

Without loss of generality, we assume the dimensionality is one and rewrite the main updating rule of Adai as
\begin{align}
\theta_{t+1} = \theta_{t} - \eta (1-\beta_{1,t}) g_{t} + \beta_{1,t} (\theta_{t} - \theta_{t-1}).
\end{align}
In the convergence proof, we do not need to specify how to update $\beta_{1,t}$ but just let $\beta_{1,t} \in [0, 1)$. We denote that $\theta_{-1} = \theta_{0}$. 

Before presenting the main proof, we first prove four useful lemmas.

 \begin{lemma}
 \label{pr:convergencelm1}
Under the conditions of Theorem \ref{pr:adaiconverge}, for any $t \geq 0$, we have
\begin{align*}
x_{t+1} - x_{t} = - \eta \sum_{k=0}^{t} q_{k,t} g_{t},
\end{align*}
where
\begin{align*}
q_{k,t} =  (1-\beta_{k}) \prod_{i=k+1}^{t} \beta_{1,i}.
\end{align*}
Then we have 
\begin{align*}
1- \beta_{1,\max}^{t+1} \leq \sum_{k=0}^{t} q_{k,t} \leq 1.
\end{align*}
\end{lemma}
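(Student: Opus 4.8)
The plan is to establish the closed form by induction on $t$ and then evaluate the resulting sum by a telescoping argument. Writing $\Delta_t := \theta_{t+1} - \theta_t$, the Adai update rule becomes the first-order recursion $\Delta_t = -\eta(1-\beta_{1,t}) g_t + \beta_{1,t}\Delta_{t-1}$, with initialization $\Delta_{-1} = \theta_0 - \theta_{-1} = 0$ since $\theta_{-1} = \theta_0$. For the base case $t=0$ this directly yields $\Delta_0 = -\eta(1-\beta_{1,0})g_0 = -\eta\, q_{0,0}\, g_0$, because the empty product in $q_{0,0}$ equals $1$. For the inductive step I would substitute the hypothesis $\Delta_{t-1} = -\eta\sum_{k=0}^{t-1} q_{k,t-1} g_k$ into the recursion and use the identities $\beta_{1,t}\, q_{k,t-1} = q_{k,t}$ for $k \le t-1$ and $q_{t,t} = 1-\beta_{1,t}$ to collect all contributions into $-\eta\sum_{k=0}^{t} q_{k,t} g_k$.

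For the bound on $\sum_{k=0}^t q_{k,t}$, I would observe that $q_{k,t}$ telescopes: setting $P_{k,t} := \prod_{i=k}^{t}\beta_{1,i}$ with the convention $P_{t+1,t} := 1$, one has $q_{k,t} = \prod_{i=k+1}^{t}\beta_{1,i} - \prod_{i=k}^{t}\beta_{1,i} = P_{k+1,t} - P_{k,t}$. Summing over $k$ from $0$ to $t$ collapses this to $P_{t+1,t} - P_{0,t} = 1 - \prod_{i=0}^{t}\beta_{1,i}$. Since each $\beta_{1,i} \in [0,\beta_{1,\max}]$ with $\beta_{1,\max} = 1-\epsilon \in [0,1)$, the product satisfies $0 \le \prod_{i=0}^t\beta_{1,i} \le \beta_{1,\max}^{t+1}$, which immediately gives $1 - \beta_{1,\max}^{t+1} \le \sum_{k=0}^t q_{k,t} \le 1$; nonnegativity of each $q_{k,t}$ follows at once since all factors are nonnegative.

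The argument is essentially bookkeeping, so I do not anticipate a genuine obstacle; the only points requiring care are the index conventions for the empty products (ensuring $q_{k,t}$ is well-defined when $k=t$) and the initialization $\theta_{-1}=\theta_0$, which is exactly what makes the $t=0$ case match the formula. Everything else reduces to the one-line telescoping identity above.
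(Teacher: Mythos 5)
Your proof is correct. The first half (unrolling the recursion by induction, with $\Delta_{-1}=0$ and the identities $\beta_{1,t}q_{k,t-1}=q_{k,t}$, $q_{t,t}=1-\beta_{1,t}$) is essentially what the paper does, though the paper merely asserts the unrolled form without writing out the induction. For the bounds on $\sum_{k=0}^t q_{k,t}$, however, you take a genuinely different and cleaner route. The paper treats $\sum_k q_{k,t}$ as a function of the variables $\beta_{1,0},\dots,\beta_{1,t}$ and locates its extrema by computing partial derivatives, arguing that the maximum is attained at $\beta_{1,k}=0$ for all $k$ and the minimum at $\beta_{1,k}=\beta_{1,\max}$ for all $k$ (the paper's write-up of this even contains a typo, asserting the minimum equals $1$ rather than $1-\beta_{1,\max}^{t+1}$). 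Your telescoping identity $q_{k,t}=P_{k+1,t}-P_{k,t}$ yields the \emph{exact} closed form $\sum_{k=0}^t q_{k,t}=1-\prod_{i=0}^t\beta_{1,i}$, from which both bounds drop out immediately via $0\le\prod_{i=0}^t\beta_{1,i}\le\beta_{1,\max}^{t+1}$. This is strictly more informative than the paper's argument (an identity rather than a pair of inequalities), shorter, and avoids the optimization bookkeeping entirely; the only thing the paper's derivative technique buys is that it is reused later for Lemma \ref{pr:convergencelm3}, where the weighted sum $\sum_k q_{k,t}(t-k)$ does not telescope as cleanly.
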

 
\begin{proof}
Recall that 
\begin{align}
  \theta_{t+1} = &\theta_{t} - \eta (1-\beta_{1,t}) g_{t} + \beta_{1,t} (\theta_{t} - \theta_{t-1}) \\
  \theta_{t+1} -\theta_{t} = & \beta_{1,t}  (\theta_{t} - \theta_{t-1}) - \eta (1-\beta_{1,t}) g_{t}.
\end{align}
Then we have
\begin{align}
  \theta_{t+1} -\theta_{t} = - \eta \sum_{k=0}^{t} (1-\beta_{1,k}) g_{k} \prod_{i=k+1}^{t} \beta_{1,i}.
\end{align}
Let $q_{k,t} = (1-\beta_{k}) \prod_{i=k+1}^{t} \beta_{1,i}$. 

For analyzing the maximum and the minimum, we calculate the derivatives with respect to $\beta_{1,k}$ for any $0\leq k \leq t$:
\begin{align}
\frac{\partial  \sum_{k=0}^{t}q_{k,t}  }{\partial \beta_{1,0}} = -  \prod_{i=k+1}^{t} \beta_{1,i} \leq 0.
\end{align}
Note that $0 \leq \beta_{1,k} \leq \beta_{1,\max} $. Then we have
\begin{align}
 \sum_{k=0}^{t}q_{k,t} |_{\beta_{1,0}=\beta_{1,\max}} \leq   \sum_{k=0}^{t}q_{k,t} \leq  \sum_{k=0}^{t}q_{k,t} |_{\beta_{1,0}=0}.
\end{align}
Recursively, we can calculate the derivatives with respect to $\beta_{1,1}, \beta_{1,2}, \ldots, \beta_{1,t}$. 

Then we obtain $\max(\sum_{k=0}^{t}q_{k,t}) = 1$ by letting $\beta_{1,k} = 0$ for all $k$.

Similarly, we obtain $\min(\sum_{k=0}^{t}q_{k,t}) = 1$ by letting $\beta_{1,t} = \beta_{1,\max}$ for all $k$. 

Then we obtain
\begin{align}
1 - \beta_{1,\max}^{t+1} \leq \sum_{k=0}^{t} q_{k,t} \leq 1
\end{align}

The proof is now complete.
\end{proof}

 \begin{lemma}
 \label{pr:convergencelm2}
Under the conditions of Theorem \ref{pr:adaiconverge}, for any $t \geq 0$, we have
\begin{align*}
\mathbb{E}[L(\theta_{t+1}) - L(\theta_{t})] \leq & \frac{1}{2} \sum_{k=0}^{t} q_{k,t} \mathbb{E}[ \sum_{j=k}^{t-1} \| \nabla L(\theta_{j+1}) - L(\theta_{j}) \|^{2} ] + \\
                                                                 & \sum_{k=0}^{t} q_{k,t}  (\frac{t-k}{2} \eta^{2} - \eta) \mathbb{E}[\| \nabla L(\theta_{k})\|^{2} ] + \frac{\mathcal{L} \eta^{2}}{2} \| \sum_{k=0}^{t} q_{k,t} g_{k} \|^{2}.
\end{align*}
 \end{lemma}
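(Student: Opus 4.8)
\textbf{Proof proposal for Lemma~\ref{pr:convergencelm2}.}
The plan is to start from the $\mathcal{L}$-smoothness inequality applied to consecutive iterates and substitute the closed-form expression for $\theta_{t+1}-\theta_t$ from Lemma~\ref{pr:convergencelm1}. Concretely, $\mathcal{L}$-smoothness gives
\[
L(\theta_{t+1}) - L(\theta_t) \le \langle \nabla L(\theta_t), \theta_{t+1}-\theta_t\rangle + \frac{\mathcal{L}}{2}\|\theta_{t+1}-\theta_t\|^2,
\]
and by Lemma~\ref{pr:convergencelm1} we have $\theta_{t+1}-\theta_t = -\eta\sum_{k=0}^{t} q_{k,t} g_k$ (note the lemma statement writes $g_t$ but clearly means $g_k$). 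So the first-order term becomes $-\eta\sum_{k=0}^{t} q_{k,t}\langle \nabla L(\theta_t), g_k\rangle$ and the second-order term is $\frac{\mathcal{L}\eta^2}{2}\|\sum_{k=0}^{t} q_{k,t} g_k\|^2$, which already matches the last term of the claimed bound.

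The core of the argument is handling the first-order term after taking expectations. Using $\mathbb{E}[g_k \mid \theta_k] = \nabla L(\theta_k)$ (from $\mathbb{E}[\xi]=0$), the inner products with $g_k$ reduce in expectation to inner products with $\nabla L(\theta_k)$; the subtlety is that $\nabla L(\theta_t)$ and $g_k$ for $k<t$ are not independent, so I would condition carefully, peeling off one $g_k$ at a time, or more cleanly rewrite $\nabla L(\theta_t) = \nabla L(\theta_k) + \sum_{j=k}^{t-1}(\nabla L(\theta_{j+1}) - \nabla L(\theta_j))$. This telescoping substitution turns $-\eta\sum_k q_{k,t}\langle\nabla L(\theta_t), \nabla L(\theta_k)\rangle$ into a ``diagonal'' part $-\eta\sum_k q_{k,t}\|\nabla L(\theta_k)\|^2$ plus cross terms $-\eta\sum_k q_{k,t}\sum_{j=k}^{t-1}\langle \nabla L(\theta_{j+1})-\nabla L(\theta_j), \nabla L(\theta_k)\rangle$. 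Then I apply Young's inequality $-\langle a,b\rangle \le \frac{1}{2\eta}\|a\|^2 + \frac{\eta}{2}\|b\|^2$ (or a suitably scaled version) to each cross term: the $\|\nabla L(\theta_{j+1})-\nabla L(\theta_j)\|^2$ pieces accumulate into the first term of the bound (with the $\frac12$ factor and the double sum over $j$ from $k$ to $t-1$), and the $\|\nabla L(\theta_k)\|^2$ pieces, of which there are $t-k$ for index $k$, combine with the diagonal part to give the $q_{k,t}(\frac{t-k}{2}\eta^2 - \eta)$ coefficient.

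The main obstacle I anticipate is bookkeeping rather than anything deep: getting the combinatorics of the double sum right (which $j$-range pairs with which $k$, and that exactly $t-k$ copies of $\|\nabla L(\theta_k)\|^2$ appear), and choosing the Young's-inequality weights so the diffusion/gradient-difference terms land with coefficient exactly $\tfrac12$ and the $\eta^2$ versus $\eta$ balance comes out as stated. I would also double-check that the expectation is taken over all the sub-sampling noise jointly and that $\mathbb{E}\|g_k\|^2$ never needs to be expanded here (the $\|\sum_k q_{k,t} g_k\|^2$ term is left intact, to be bounded in a later lemma using $\mathbb{E}\|g(\theta,\xi)-\nabla L(\theta)\|^2\le\delta^2$ and $\|\nabla L\|\le G$). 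Everything else is a routine combination of smoothness, unbiasedness, and Young's inequality.
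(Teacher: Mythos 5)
Your proposal follows the paper's proof exactly: apply $\mathcal{L}$-smoothness, substitute the telescoped update $\theta_{t+1}-\theta_t=-\eta\sum_k q_{k,t}g_k$ from Lemma \ref{pr:convergencelm1}, split $\nabla L(\theta_t)$ into $\nabla L(\theta_k)$ plus the telescoping gradient differences, take expectations via unbiasedness, and apply Young's inequality to the cross terms so that $t-k$ copies of $\|\nabla L(\theta_k)\|^2$ produce the $(\tfrac{t-k}{2}\eta^2-\eta)$ coefficient. This is the same argument; your flagged concern about the dependence between $\nabla L(\theta_t)$ and $\xi_k$ for $k<t$ is legitimate and is in fact glossed over in the paper's own expectation step.
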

 
\begin{proof}
As $L(\theta)$ is $\mathcal{L}$-smooth, we have
\begin{align}
 & L(\theta_{t+1}) - L(\theta_{t})  \\
 \leq & \nabla L(\theta_{t})^{\top}(\theta_{t+1} - \theta_{t}) + \frac{\mathcal{L}}{2} \| \theta_{t+1} - \theta_{t}\|^{2} \\
  = & - \eta \sum_{k=0}^{t} q_{k,t} \nabla L(\theta_{t})^{\top} g_{k} + \frac{\mathcal{L}\eta^{2}}{2} \|   \sum_{k=0}^{t} q_{k,t} g_{k} \|^{2} \\
  = &   - \eta \sum_{k=0}^{t} q_{k,t} \sum_{j=k}^{t-1} (\nabla L(\theta_{j+1}) - \nabla L(\theta_{j}))^{\top} (\nabla L(\theta_{k}) + \xi_{k})  -  \nonumber \\ 
      & \eta \sum_{k=0}^{t} q_{k,t} \nabla L(\theta_{k})^{\top} (\nabla L(\theta_{k}) + \xi_{k}) + \frac{\mathcal{L} \eta^{2}}{2} \|   \sum_{k=0}^{t} q_{k,t}g_{k} \|^{2} .
\end{align}
Taking expectation on both sides gives 
\begin{align}
 & \mathbb{E}[ L(\theta_{t+1}) - L(\theta_{t}) ] \\
 \leq & \frac{1}{2\mathcal{L}} \sum_{k=0}^{t} q_{k,t} \mathbb{E}[ \sum_{j=k}^{t-1} (\nabla L(\theta_{j+1}) -  \nabla L(\theta_{j}))^{\top} \nabla L(\theta_{k})] - \\
         & \sum_{k=0}^{t} q_{k,t}  \eta \mathbb{E}[\| \nabla L(\theta_{k})\|^{2} ] + \frac{\mathcal{L} \eta^{2}}{2} \| \sum_{k=0}^{t} q_{k,t} g_{k} \|^{2}.
\end{align}
By the Cauchy-Schwarz Inequality, we have
\begin{align}
 & \mathbb{E}[ L(\theta_{t+1}) - L(\theta_{t}) ] \\
 \leq &   \sum_{k=0}^{t} q_{k,t} \mathbb{E}[ \frac{1}{2} \| \nabla L(\theta_{t}) - \nabla L(\theta_{k} \|^{2} + \frac{\eta^{2}}{2}  \| \nabla L(\theta_{k}) \|^{2}]  -  \nonumber \\ 
      & \eta \sum_{k=0}^{t} q_{k,t} \mathbb{E}[\| \nabla L(\theta_{k}) \|^{2}] + \frac{\mathcal{L}\eta^{2}}{2} \mathbb{E}[ \|   \sum_{k=0}^{t} q_{k,t} g_{k} \|^{2} ] \\
 =  & \frac{1}{2} \sum_{k=0}^{t} q_{k,t} \mathbb{E}[ \sum_{j=k}^{t-1} \| \nabla L(\theta_{j+1}) - L(\theta_{j}) \|^{2} ] +  \nonumber \\
       & \sum_{k=0}^{t} q_{k,t}  (\frac{t-k}{2} \eta^{2} - \eta) \mathbb{E}[\| \nabla L(\theta_{k})\|^{2} ] + \frac{\mathcal{L} \eta^{2}}{2} \| \sum_{k=0}^{t} q_{k,t} g_{k} \|^{2}.
\end{align}

The proof is now complete.
\end{proof}

 \begin{lemma}
 \label{pr:convergencelm3}
Under the conditions of Theorem \ref{pr:adaiconverge}, for any $t \geq 0$, we have
\begin{align*}
\sum_{k=0}^{t} q_{k,t} (t-k) \leq \frac{\beta_{1,\max}}{1-\beta_{1,\max}}
\end{align*}
 \end{lemma}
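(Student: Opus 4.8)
The plan is to expose a telescoping structure hidden in the weights $q_{k,t}$ and then reduce everything to summing a geometric series. Introduce $\Pi_{m} := \prod_{i=m}^{t}\beta_{1,i}$ with the convention $\Pi_{t+1}=1$. Then, exactly as in Lemma \ref{pr:convergencelm1}, one has $q_{k,t} = (1-\beta_{1,k})\prod_{i=k+1}^{t}\beta_{1,i} = \Pi_{k+1}-\Pi_{k}$, so that the partial sums collapse: $\sum_{k=0}^{m-1}q_{k,t} = \Pi_{m}-\Pi_{0}$ for every $1\le m\le t$. This is the same identity underpinning the bound $\sum_{k=0}^{t}q_{k,t}\le 1$ already used, and it is the crucial ingredient here too.

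Next I would rewrite the factor $t-k$ as the count $\sum_{m=k+1}^{t}1$ and interchange the order of summation:
\[
\sum_{k=0}^{t} q_{k,t}(t-k) \;=\; \sum_{k=0}^{t} q_{k,t}\!\!\sum_{m=k+1}^{t}\!\!1 \;=\; \sum_{m=1}^{t}\sum_{k=0}^{m-1} q_{k,t} \;=\; \sum_{m=1}^{t}\bigl(\Pi_{m}-\Pi_{0}\bigr).
\]
Since every $\beta_{1,i}\in[0,\beta_{1,\max}]\subset[0,1)$, each $\Pi_{m}$ is nonnegative, so $\Pi_{m}-\Pi_{0}\le\Pi_{m}$; moreover $\Pi_{m}=\prod_{i=m}^{t}\beta_{1,i}\le\beta_{1,\max}^{\,t-m+1}$, being a product of $t-m+1$ factors each at most $\beta_{1,\max}$. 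Substituting $j=t-m+1$ turns the remaining sum into a truncated geometric series,
\[
\sum_{m=1}^{t}\Pi_{m} \;\le\; \sum_{m=1}^{t}\beta_{1,\max}^{\,t-m+1} \;=\; \sum_{j=1}^{t}\beta_{1,\max}^{\,j} \;\le\; \frac{\beta_{1,\max}}{1-\beta_{1,\max}},
\]
which is precisely the asserted bound.

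The single real idea in the argument is spotting the telescoping identity $q_{k,t}=\Pi_{k+1}-\Pi_{k}$: I expect this to be the only non-routine step. Without it, the crude estimate $q_{k,t}\le\beta_{1,\max}^{\,t-k}$ together with $\sum_{j\ge 0}j\,\beta_{1,\max}^{j}$ would only deliver the weaker constant $\beta_{1,\max}/(1-\beta_{1,\max})^{2}$, so the telescoping is what sharpens the bound to the stated $\beta_{1,\max}/(1-\beta_{1,\max})$. Everything else is bookkeeping — nonnegativity of the $\Pi_{m}$ so that dropping $-\Pi_{0}$ is legitimate, the monotone bound $\prod_{i=m}^{t}\beta_{1,i}\le\beta_{1,\max}^{\,t-m+1}$, and the geometric sum — and, as in Lemma \ref{pr:convergencelm1}, no property of the Adai update beyond $\beta_{1,t}\in[0,\beta_{1,\max}]$ is used.
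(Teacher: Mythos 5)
Your proof is correct, and it takes a genuinely different route from the paper's. The paper treats $\sum_{k=0}^{t}q_{k,t}(t-k)$ as a function of the variables $\beta_{1,0},\dots,\beta_{1,t}$ and maximizes it by computing partial derivatives one coordinate at a time, concluding that the maximum is attained at $\beta_{1,0}=0$ and $\beta_{1,k}=\beta_{1,\max}$ for $k\geq 1$; it then evaluates the resulting arithmetico-geometric series to get $\beta_{1,\max}/(1-\beta_{1,\max})$. Your argument instead exploits the telescoping identity $q_{k,t}=\Pi_{k+1}-\Pi_{k}$, swaps the order of summation, and reduces everything to a plain geometric series. The two approaches land on the same constant, but yours is more elementary and, frankly, tighter as a piece of reasoning: the paper's sequential "fix one variable at optimum, then differentiate the next" scheme is informally presented (the sign of $\partial/\partial\beta_{1,1}$ is computed only after setting $\beta_{1,0}=0$, and the general step is asserted rather than carried out), whereas your interchange of finite nonnegative sums and the bounds $\Pi_{m}\geq 0$, $\Pi_{m}\leq\beta_{1,\max}^{t-m+1}$ need no such care. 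A further benefit is that the same identity $\sum_{k=0}^{m-1}q_{k,t}=\Pi_{m}-\Pi_{0}$ immediately re-derives the two-sided bound of Lemma \ref{pr:convergencelm1}, so one telescoping observation serves both lemmas; the only thing the paper's optimization viewpoint buys in exchange is an explicit extremal configuration of the $\beta_{1,k}$, which it does not actually use since the final bound is relaxed anyway. Your closing remark is also accurate: the naive bound $q_{k,t}\leq\beta_{1,\max}^{t-k}$ combined with $\sum_{j\geq 0}j\beta^{j}=\beta/(1-\beta)^{2}$ would only give the weaker constant $\beta_{1,\max}/(1-\beta_{1,\max})^{2}$, so the telescoping (or, in the paper's version, the optimization) is genuinely needed to reach the stated bound.
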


 \begin{proof}
 For analyzing the maximum, we calculate the derivatives with respect to $\beta_{1,k}$ for any $0\leq k \leq t$. 
 
 With respect to $\beta_{1,1}$, we have
\begin{align}
\frac{\partial  \sum_{k=0}^{t}q_{k,t} (t-k)  }{\partial \beta_{1,0}} = -  \prod_{k=1}^{t} \beta_{1,k} t \leq 0.
\end{align}
Then we let $\beta_{1,0} = 0$ and obtain the derivative with respect to $\beta_{1,1}$ as
\begin{align}
\frac{\partial  \sum_{k=0}^{t}q_{k,t} (t-k)  }{\partial \beta_{1,1}} =  \prod_{k=2}^{t} \beta_{1,k} ( t - (t-1) ) \geq 0.
\end{align}
Then we let $\beta_{1,j} = \beta_{1,\max}$ for $1 \leq j \leq k -1 $ and obtain the derivative with respect to $\beta_{1,k}$ as
\begin{align}
\frac{\partial  \sum_{k=0}^{t}q_{k,t} (t-k)  }{\partial \beta_{1,k}} \geq (t-k)  \frac{\partial \sum_{k=0}^{t}q_{k,t} }{\partial \beta_{1,k}} = 0.
\end{align}
Thus, we have $\beta_{1,0} = 0$ and $\beta_{1,k} = \beta_{1,\max}$ for $1 \leq j \leq t $ to maximize $\sum_{k=0}^{t-1} q_{k,t} (t-k)$. We may write it as 
\begin{align}
\sum_{k=0}^{t} q_{k,t} (t-k) \leq   \beta_{1,\max}^{t+1}  t + \sum_{k=0}^{t}  (1-\beta_{1,\max}) \beta_{1,\max}^{k} k
\end{align}
Note that $ (1-\beta_{1,\max}) \beta_{1,\max}^{k} k$ is an arithmetico-geometric sequence. Thus, we have
\begin{align}
\max(\sum_{k=0}^{t} q_{k,t} (t-k) ) \leq  & \beta_{1,\max}^{t+1}  t + \left[ \beta_{1,\max} -   \beta_{1,\max}^{t+1}  t  + \frac{\beta_{1,\max}^{2} (1-\beta_{1,\max}^{t-1})}{1- \beta_{1,\max}} \right] \\
   \leq & \beta_{1,\max}  + \frac{\beta_{1,\max}^{2}}{1- \beta_{1,\max}}   \\
   = & \frac{\beta_{1,\max}}{1-\beta_{1,\max}} 
\end{align}

The proof is now complete.
\end{proof}

 \begin{lemma}
 \label{pr:convergencelm4}
Under the conditions of Theorem \ref{pr:adaiconverge}, for any $t \geq 0$, we have
\begin{align*}
\frac{1}{2} \sum_{k=0}^{t} q_{k,t} \mathbb{E}[ \sum_{j=k}^{t-1} \| \nabla L(\theta_{j+1}) - L(\theta_{j}) \|^{2} ] + \frac{\mathcal{L} \eta^{2}}{2} \| \sum_{k=0}^{t} q_{k,t} g_{k} \|^{2} \leq \frac{L\eta^{2}}{2(1-\beta_{1,\max})} (G^{2} + \delta^{2}).
\end{align*}
\end{lemma}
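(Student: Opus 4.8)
\emph{Proof plan.} The quantity to be bounded is exactly the sum of the two ``nuisance'' terms left over in the one-step descent estimate of Lemma~\ref{pr:convergencelm2} once the $\mathbb{E}[\|\nabla L(\theta_k)\|^2]$ contribution has been isolated, so the plan is to bound the two summands separately and then add them. The two structural facts I would lean on are already in hand: the coefficients satisfy $q_{k,t}\ge 0$ and $\sum_{k=0}^{t}q_{k,t}\le 1$ (Lemma~\ref{pr:convergencelm1}), and the weighted delay sum obeys $\sum_{k=0}^{t}q_{k,t}(t-k)\le \beta_{1,\max}/(1-\beta_{1,\max})$ (Lemma~\ref{pr:convergencelm3}). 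I would also first record a uniform second-moment bound on the stochastic gradients: writing $g_k=\nabla L(\theta_k)+\xi_k$ with $\mathbb{E}[\xi_k]=0$, bounded noise, and $\|\nabla L(\theta)\|\le G$, one gets $\mathbb{E}[\|g_k\|^2]\le G^2+\delta^2$ for every $k$.

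For the second term $\tfrac{\mathcal{L}\eta^2}{2}\|\sum_{k=0}^{t}q_{k,t}g_k\|^2$, I would use convexity of $\|\cdot\|^2$: since the $q_{k,t}$ are nonnegative with total mass at most one, $\|\sum_k q_{k,t}g_k\|^2\le \sum_k q_{k,t}\|g_k\|^2$, and taking expectations together with the moment bound above bounds this summand by $\tfrac{\mathcal{L}\eta^2}{2}(G^2+\delta^2)$.

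For the first term $\tfrac12\sum_k q_{k,t}\sum_{j=k}^{t-1}\mathbb{E}\|\nabla L(\theta_{j+1})-\nabla L(\theta_j)\|^2$, I would apply $\mathcal{L}$-smoothness to each gradient difference, $\|\nabla L(\theta_{j+1})-\nabla L(\theta_j)\|\le \mathcal{L}\|\theta_{j+1}-\theta_j\|$, substitute the closed form $\theta_{j+1}-\theta_j=-\eta\sum_{i=0}^{j}q_{i,j}g_i$ from Lemma~\ref{pr:convergencelm1}, apply convexity once more (using $\sum_i q_{i,j}\le 1$), and take expectations, which bounds $\mathbb{E}\|\nabla L(\theta_{j+1})-\nabla L(\theta_j)\|^2$ by a quantity of order $\eta^2(G^2+\delta^2)$ uniformly in $j$. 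The inner sum over $j$ then contributes a factor $(t-k)$, and the remaining $\sum_k q_{k,t}(t-k)$ is absorbed by Lemma~\ref{pr:convergencelm3} into $\beta_{1,\max}/(1-\beta_{1,\max})$. Adding this to the second-term bound and using $1+\tfrac{\beta_{1,\max}}{1-\beta_{1,\max}}=\tfrac{1}{1-\beta_{1,\max}}$ produces the claimed inequality.

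The step that needs care is the first-term estimate, because it involves two nested layers of the $q$-weights: the ``outer'' weights $q_{k,t}$ coming from the descent bound and the ``inner'' weights $q_{i,j}$ produced by expanding the increment $\theta_{j+1}-\theta_j$. One has to reorganize the resulting double sum so that it is precisely the quantity $\sum_k q_{k,t}(t-k)$ controlled in Lemma~\ref{pr:convergencelm3} — this is exactly what turns the naive $O(t)$ count of gradient-difference terms into a $t$-independent factor and is the origin of the $1/(1-\beta_{1,\max})$ on the right-hand side. Pinning down the numerical constant (in particular the precise power of the smoothness constant $\mathcal{L}$ that survives, after using that consecutive iterates move by only $O(\eta)$) is then routine bookkeeping.
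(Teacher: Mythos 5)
Your plan is correct and follows essentially the same route as the paper's proof: bound the gradient-difference term via $\mathcal{L}$-smoothness, the closed-form increment from Lemma~\ref{pr:convergencelm1}, and the uniform second-moment bound $\mathbb{E}[\|g_k\|^2]\le G^2+\delta^2$, so that the inner sum yields the factor $(t-k)$ controlled by Lemma~\ref{pr:convergencelm3}; bound the second term by convexity and $\sum_k q_{k,t}\le 1$; and combine via $1+\tfrac{\beta_{1,\max}}{1-\beta_{1,\max}}=\tfrac{1}{1-\beta_{1,\max}}$. The only difference is cosmetic: no genuine reorganization of a double sum is needed, since the uniform per-$j$ bound makes the inner sum collapse to $(t-k)$ immediately, and your caution about the surviving power of $\mathcal{L}$ is warranted since the paper itself is loose with that constant.
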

 
\begin{proof}
As $L(\theta)$ is $\mathcal{L}$-smooth, we have
\begin{align}
 &  \| \nabla L(\theta_{j+1}) - L(\theta_{j}) \|^{2}  \\
 \leq & \mathcal{L} \| \theta_{j+1} - \theta_{j} \|^{2} \\
 = &  \mathcal{L} \| - \eta \sum_{i=0}^{j} q_{i,j} g_{i} \|^{2}.
\end{align}

By $\sum_{i=0}^{j} q_{i,j} \leq 1$ in Lemma \ref{pr:convergencelm1}, we have 
\begin{align}
 & \mathbb{E} [\| \nabla L(\theta_{j+1}) - L(\theta_{j}) \|^{2} ] \\
 \leq &  \mathcal{L} \eta^{2} (G^{2} + \delta^{2}) .
\end{align}
and 
\begin{align}
 &\frac{\mathcal{L} \eta^{2}}{2} \| \sum_{k=0}^{t} q_{k,t} g_{k} \|^{2}   \\
 \leq & \frac{\mathcal{L} \eta^{2}}{2} (G^{2} + \delta^{2}).
\end{align}

By Lemma \ref{pr:convergencelm3}, we have 
\begin{align}
 & \frac{1}{2} \sum_{k=0}^{t} q_{k,t} \mathbb{E}[ \sum_{j=k}^{t-1} \| \nabla L(\theta_{j+1}) - L(\theta_{j}) \|^{2} ]  \\
\leq &  \frac{\eta^{2}}{2} (G^{2} + \delta^{2}) \sum_{k=0}^{t} q_{k,t} (t-k) \\
\leq &  \frac{\eta^{2} \beta_{1,\max}}{2 (1-\beta_{1,\max})} (G^{2} + \delta^{2}) .
\end{align}

Then we obtain
\begin{align}
 & \frac{1}{2} \sum_{k=0}^{t} q_{k,t} \mathbb{E}[ \sum_{j=k}^{t-1} \| \nabla L(\theta_{j+1}) - L(\theta_{j}) \|^{2} ] + \frac{\mathcal{L} \eta^{2}}{2} \| \sum_{k=0}^{t} q_{k,t} g_{k} \|^{2} \\
\leq & \frac{\mathcal{L}\eta^{2} \beta_{1,\max}}{2(1-\beta_{1,\max})} (G^{2} + \delta^{2}) +  \frac{L\eta^{2} }{2} (G^{2} + \delta^{2})  \\
\leq & \frac{\mathcal{L}\eta^{2}}{2(1-\beta_{1,\max})} (G^{2} + \delta^{2}) .
\end{align}

\end{proof}
 
\begin{proof}
The proof of Theorem \ref{pr:adaiconverge} is organized as follows.

By Lemma \ref{pr:convergencelm2} and Lemma \ref{pr:convergencelm4}, we have
\begin{align}
 & \mathbb{E}[L(\theta_{t+1}) - L(\theta_{t})]  \nonumber \\
\leq & \frac{1}{2} \sum_{k=0}^{t} q_{k,t} \mathbb{E}[ \sum_{j=k}^{t-1} \| \nabla L(\theta_{j+1}) - L(\theta_{j}) \|^{2} ] + \nonumber \\
 & \sum_{k=0}^{t} q_{k,t}  (\frac{t-k}{2} \eta^{2} - \eta) \mathbb{E}[\| \nabla L(\theta_{k})\|^{2} ] + \frac{\mathcal{L} \eta^{2}}{2} \| \sum_{k=0}^{t} q_{k,t} g_{k} \|^{2} \\
\leq &  \sum_{k=0}^{t} q_{k,t}  (\frac{t-k}{2} \eta^{2} - \eta) \mathbb{E}[\| \nabla L(\theta_{k})\|^{2} ]  + \frac{\mathcal{L}\eta^{2}}{2(1-\beta_{1,\max})} (G^{2} + \delta^{2}) .
\end{align}
Thus, we have
\begin{align}
 &\sum_{k=0}^{t} q_{k,t}   \eta \mathbb{E}[\| \nabla L(\theta_{k})\|^{2} ]   \nonumber \\
\leq &\mathbb{E}[L(\theta_{t}) - L(\theta_{t+1})]  + \sum_{k=0}^{t} q_{k,t}  \frac{t-k}{2} \eta^{2} \mathbb{E}[\| \nabla L(\theta_{k})\|^{2} ]  +  \frac{\mathcal{L}\eta^{2}}{2(1-\beta_{1,\max})} (G^{2} + \delta^{2}) .
\end{align}
By Lemma \ref{pr:convergencelm1} and Lemma \ref{pr:convergencelm3}, we have
\begin{align}
 &(1 - \beta_{1,\max}^{t+1}) \eta \min_{k=0,\ldots, t} \mathbb{E}[\| \nabla L(\theta_{k})\|^{2} ]   \nonumber \\
\leq &\mathbb{E}[L(\theta_{t}) - L(\theta_{t+1})]  + \frac{\beta_{1,\max}\eta^{2}}{2(1-\beta_{1,\max})}  G^{2}  +  \frac{\mathcal{L}\eta^{2}}{2(1-\beta_{1,\max})} (G^{2} + \delta^{2}) .
\end{align}

By summing the above inequality for $t=0, \ldots, t$, we have
\begin{align}
 & (t+1) (1 - \beta_{1,\max}) \eta \min_{k=0,\ldots, t} \mathbb{E}[\| \nabla L(\theta_{k})\|^{2} ]   \nonumber \\
\leq & L(\theta_{0}) - L^{\star} + \frac{\beta_{1,\max}\eta^{2}}{2(1-\beta_{1,\max})}  G^{2} (t+1) +  \frac{\mathcal{L}\eta^{2}}{2(1-\beta_{1,\max})} (G^{2} + \delta^{2}) (t+1).
\end{align}
Then 
\begin{align}
 & \min_{k=0,\ldots, t} \mathbb{E}[\| \nabla L(\theta_{k})\|^{2} ]   \nonumber \\
\leq & \frac{L(\theta_{0}) - L^{\star} }{(1 - \beta_{1,\max}) (t+1)  \eta }+ \frac{\beta_{1,\max}\eta}{2(1-\beta_{1,\max})^{2} }  G^{2}  +  \frac{\mathcal{L}\eta}{2(1-\beta_{1,\max})^{2}} (G^{2} + \delta^{2}) .
\end{align}
Let $\eta \leq \frac{C}{\sqrt{t+1}} $. We have
\begin{align}
 & \min_{k=0,\ldots, t} \mathbb{E}[\| \nabla L(\theta_{k})\|^{2} ]   \nonumber \\
\leq & \frac{L(\theta_{0}) - L^{\star} }{(1 - \beta_{1,\max}) C  \sqrt{t+1} }+ \frac{\beta_{1,\max} C}{2(1-\beta_{1,\max})^{2}  \sqrt{t+1}} G^{2}  +  \nonumber \\
      & \frac{\mathcal{L}C}{2(1-\beta_{1,\max})^{2} \sqrt{t+1}} (G^{2} + \delta^{2})  \\
\end{align}
The proof is now complete.
\end{proof}

\section{Classical Approximation Assumptions}
\label{sec:assumption}

Assumption \ref{as:quasi} indicates that the dynamical system is in equilibrium near minima but not necessarily near saddle points. It means that $\frac{\partial P(\theta, t)}{\partial t} \approx  0 $ holds near minima, but not necessarily holds near saddle point $b$. Quasi-Equilibrium Assumption is actually weaker but more useful than the conventional stationary assumption for deep learning \citep{welling2011bayesian,mandt2017stochastic}. Under Assumption \ref{as:quasi}, the probability density $P$ can behave like a stationary distribution only inside valleys, but density transportation through saddle points can be busy. Quasi-Equilibrium is more like: stable lakes (loss valleys) is connected by rapid Rivers (escape paths). In contrast, the stationary assumption requires strictly zero flux between lakes (loss valleys). Little knowledge about density motion can be obtained under the stationary assumption.

Low Temperature Assumption is common \citep{van1992stochastic,zhou2010rate,berglund2013kramers,jastrzkebski2017three}, and is always justified when $\frac{\eta}{B}$ is small. Under Assumption \ref{as:low}, the probability densities will concentrate around minima and MPPs. Numerically, the 6-sigma rule may often provide good approximation for a Gaussian distribution. Assumption \ref{as:low} will make the second order Taylor approximation, Assumption \ref{as:taylor}, even more reasonable in SGD diffusion. 

Here, we try to provide a more intuitive explanation about Low Temperature Assumption in the domain of deep learning. Without loss of generality, we discuss it in one-dimensional dynamics. The temperature can be interpreted as a real number $D$. In SGD, we have the temperature as $D = \frac{\eta}{2B} H$. In statistical physics, if $\frac{\Delta L}{D}$ is large, then we call it Low Temperature Approximation. Note that $\frac{\Delta L}{D}$ appears insides an exponential function in the theoretical analysis. People usually believe that, numerically, $\frac{\Delta L}{D} > 6$ can make a good approximation, for a similar reason of the 6-sigma rule in statistics. In the final training phase of deep networks, a common setting is $\eta=0.01$ and $B=128$. Thus, we may safely apply Assumption \ref{as:low} to the loss valleys which satisfy the very mild condition $\frac{\Delta L}{H} > 2.3 \times 10^{-4}$. Empirically, the condition $\frac{\Delta L}{H} > 2.3 \times 10^{-4}$ holds well in SGD dynamics.

\section{Stochastic Gradient Noise Analysis}
\label{sec:noise}

In this section, we empirically discussed the covariance of stochastic gradient noise (SGN) and why SGN is approximately Gaussian in common settings.

In Figure \ref{fig:DCH}, following \citet{xie2020diffusion}, we again validated the relation between gradient noise covariance and the Hessian. We particularly choose a randomly initialized mode so that the model is not near critical points. We display all elements $H_{(i,j)} \in [1e-4, 0.5]$ of the Hessian matrix and the corresponding elements $C_{(i,j)}$ of gradient covariance matrix in the space spanned by the eigenvectors of Hessian. Even if the model is far from critical points, the SGN covariance is still approximately proportional to the Hessian and inverse to the batch size $B$. The correlation is especially high along the directions with small-magnitude eigenvalues of the Hessian. Fortunately, small-magnitude eigenvalues of the Hessian indicate the flat directions which we care most in saddle-point escaping.

We also note that the SGN we study is introduced by minibatch training, $ \frac{\partial L(\theta_{t})}{\partial \theta_{t}} - \frac{\partial \hat{L}(\theta_{t})}{\partial \theta_{t}}$, which is the difference between gradient descent and stochastic gradient descent. 

In Figure \ref{fig:noisetype} of the Appendix, \citet{xie2020diffusion} empirically verified that SGN is highly similar to Gaussian noise instead of heavy-tailed L\'evy noise. \citet{xie2020diffusion} recovered the experiment of \citet{simsekli2019tail} to show that gradient noise is approximately L\'evy noise only if it is computed across parameters. Figure \ref{fig:noisetype} of the Appendix actually suggests that the contradicted observations are from the different formulations of gradient noise. \citet{simsekli2019tail} computed ``SGN'' across $n$ model parameters and regarded ``SGN'' as $n$ samples drawn from a {\it single-variant} distribution. In \citet{xie2020diffusion}, SGN computed over different minibatches obeys a {\it $N$-variant} Gaussian distribution, which can be {\it $\theta$-dependent and anisotropic}. \citet{simsekli2019tail} studied the distribution of SGN as a single-variant distribution, while \citet{xie2020diffusion} relaxed it as a $n$-variant distribution. Figure \ref{fig:noisetype} holds well at least when the batch size $B$ is larger than $16$, which is common in practice.

\begin{figure}
\centering
\subfigure[\small{Gradient Noise of one minibatch across parameters}]{\includegraphics[width =0.35\columnwidth ]{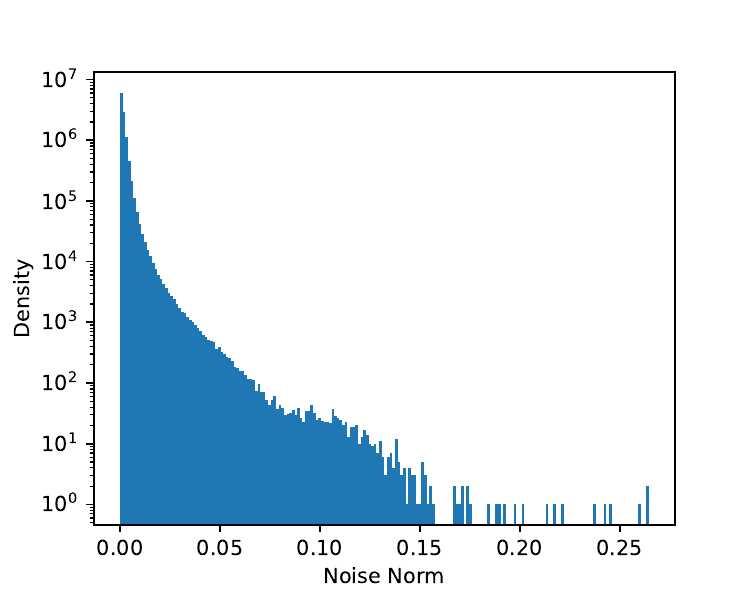}} 
\subfigure[\small{L\'evy noise}]{\includegraphics[width =0.35\columnwidth ]{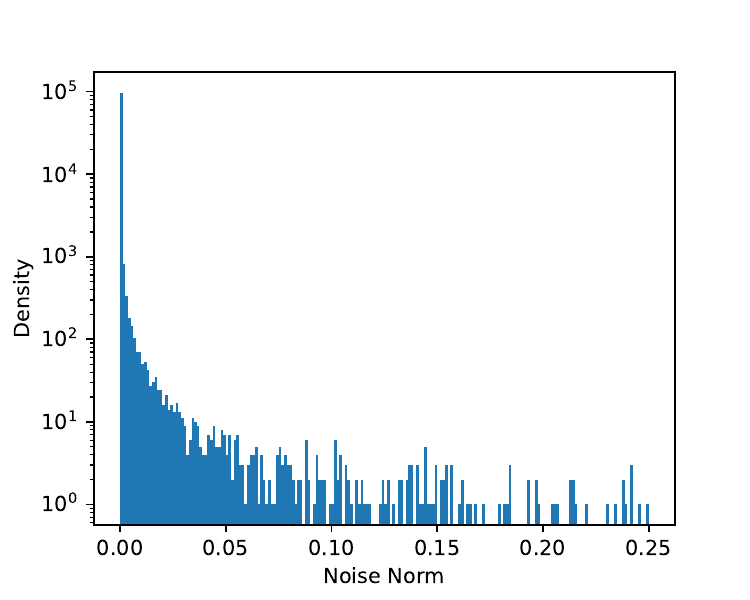}}  \\
\subfigure[\small{Gradient Noise of one parameter across minibatches}]{\includegraphics[width =0.35\columnwidth ]{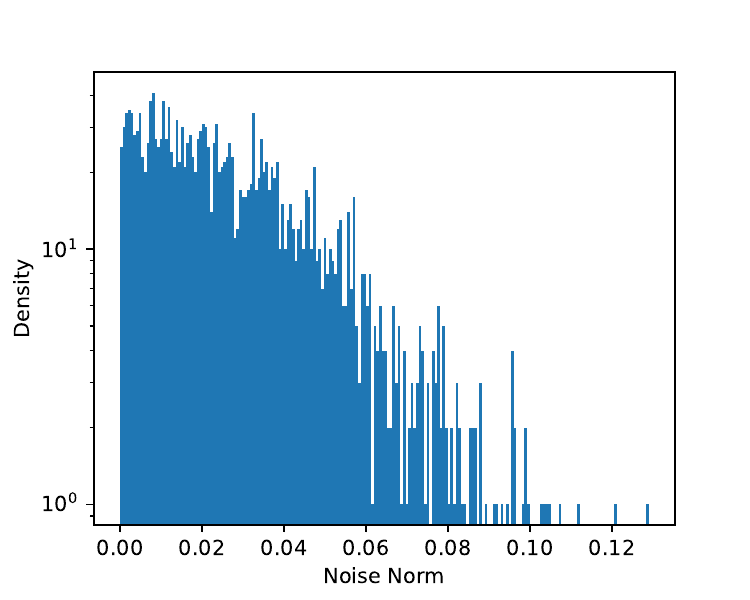}} 
\subfigure[\small{Gaussian noise}]{\includegraphics[width =0.35\columnwidth ]{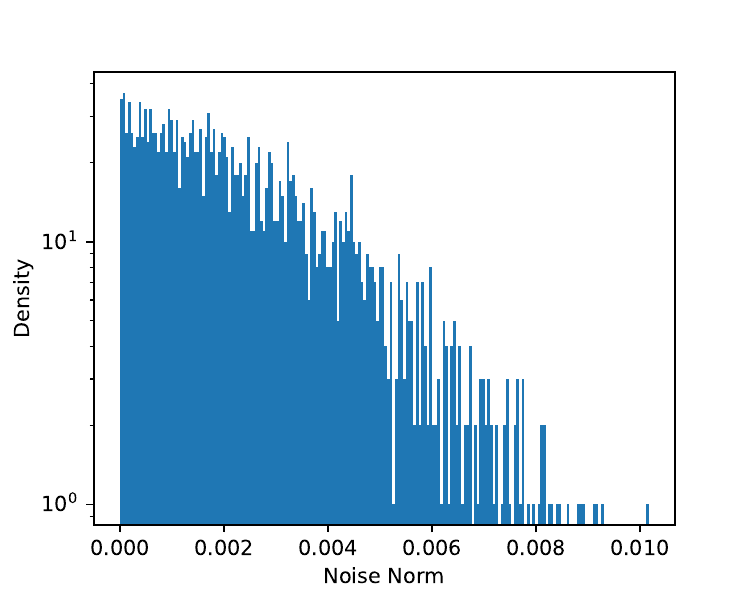}}  
\caption{The Stochastic Gradient Noise Analysis \citep{xie2020diffusion}. The histogram of the norm of the gradient noises computed with ResNet18 on CIFAR-10. Subfigure (a) follows \citet{simsekli2019tail} and computes ``stochastic gradient noise'' across parameters. Subfigure (c) follows the usual definition and computes stochastic gradient noise across minibatches. Obviously, SGN computed over minibatches is more like Gaussian noise rather than L\'evy noise.}
\label{fig:noisetype}
\end{figure}

\section{Experimental Details}
\label{sec:appexperiment}

\textbf{Computational environment.} The experiments are conducted on a computing cluster with GPUs of NVIDIA\textsuperscript{\textregistered} Tesla\textsuperscript{\texttrademark} P100 16GB and CPUs of Intel\textsuperscript{\textregistered} Xeon\textsuperscript{\textregistered} CPU E5-2640 v3 @ 2.60GHz.

\subsection{Image Classification}

\textbf{Optimizers:} SGD Momentum, Adam \citep{kingma2014adam}, AMSGrad \citep{reddi2019convergence}, AdamW \citep{loshchilov2018decoupled}, AdaBound \citep{luo2019adaptive}, Padam \citep{chen2018closing}, Yogi \citep{zaheer2018adaptive}, RAdam \citep{liu2019variance}, and AdaBelief \citep{zhuang2020adabelief}.

\textbf{Data Preprocessing:} For CIFAR-10/CIFAR-100, we perform the per-pixel zero-mean unit-variance normalization, horizontal random flip, and $32 \times 32$ random crops after padding with $4$ pixels on each side. For ImageNet, we perform the per-pixel zero-mean unit-variance normalization, horizontal random flip, and the resized random crops where the random size (of 0.08 to 1.0) of the original size and a random aspect ratio (of $\frac{3}{4}$ to $\frac{4}{3}$) of the original aspect ratio is made. 

\textbf{Hyperparameter Settings for CIFAR-10 and CIFAR-100:} We select the optimal learning rate for each experiment from $\{0.00001, 0.0001, 0.001, 0.01, 0.1, 1, 10\}$ for non-adaptive gradient methods and use the default learning rate in original papers for adaptive gradient methods. The settings of learning rates: $\eta=1$ for Adai;$\eta=0.1$ for SGD with Momentum and AdaiW; $\eta=0.001$ for Adam, AMSGrad, AdamW, AdaBound, Yogi, RAdam, and AdaBelief; $\eta=0.01$ for Padam. For the learning rate schedule, the learning rate is divided by 10 at the epoch of $\{80,160\}$ for CIFAR-10 and $\{100,150\}$ for CIFAR-100. The batch size is set to $128$ for CIFAR-10 and CIFAR-100. The $L_{2}$ regularization hyperparameter is set to $\lambda = 0.0005$ for CIFAR-10 and CIFAR-100. Considering the linear scaling rule of decoupled weight decay and initial learning rates \citep{loshchilov2018decoupled}, we chose decoupled weight decay hyperparameters as: $\lambda = 0.5$ for AdamW on CIFAR-10 and CIFAR-100; $\lambda = 0.005$ for AdaiW on CIFAR-10 and CIFAR-100. We set the momentum hyperparameter $\beta_{1} = 0.9$ for SGD with Momentum. As for other optimizer hyperparameters, we apply the default hyperparameter settings directly. 

\textbf{Hyperparameter Settings for ImageNet:} We select the optimal learning rate for each experiment from $\{0.00001, 0.0001, 0.001, 0.01, 0.1, 1, 10\}$ for Adai, SGD with Momentum, and Adam. The settings of learning rates: $\eta=1$ for Adai;$\eta=0.1$ for SGD with Momentum; $\eta=0.0001$ for Adam. For the learning rate schedule, the learning rate is divided by 10 at the epoch of $\{30,60, 90\}$. The batch size is set to $256$. The $L_{2}$ regularization hyperparameter is set to $\lambda = 0.0001$. We set the momentum hyperparameter $\beta_{1} = 0.9$ for SGD with Momentum. As for other training hyperparameters, we apply the default hyperparameter settings directly. 

Some papers often choose $\lambda = 0.0001$ as the default weight decay setting for CIFAR-10 and CIFAR-100. We study the weight decay setting in Appendix \ref{sec:additional}.

\subsection{Language Modeling}

Moreover, we present the learning curves for language modeling experiments in Figure \ref{fig:lstmadai}. We empirically compare three base optimizers, including Adai, SGD with Momentum, and Adam, for language modeling experiments. We use a classical language model, Long Short-Term Memory (LSTM) \citep{hochreiter1997long} with 2 layers, 512 embedding dimensions, and 512 hidden dimensions, which has $14$ million model parameters and is similar to ``medium LSTM'' in \citet{zaremba2014recurrent}. The benchmark task is the word-level Penn TreeBank \citep{marcus1993building}. 

\textbf{Hyperparameter Settings.} Batch Size: $B=20$. BPTT Size: $bptt=35$. Weight Decay: $\lambda = 0.00005$. Learning Rate: $\eta=0.001$. The dropout probability is set to $0.5$. We clipped gradient norm to $1$. We select optimal learning rates from $\{10, 1, 0.1, 0.01, 0.001, 0.0001, 0.00001\}$ for each optimizer.

\section{The Mean Escape Time Analysis}
\label{sec:escape}

\begin{figure}
\center
 \subfigure[\small{1-Dimensional Escape}]{\includegraphics[width =0.32\columnwidth ]{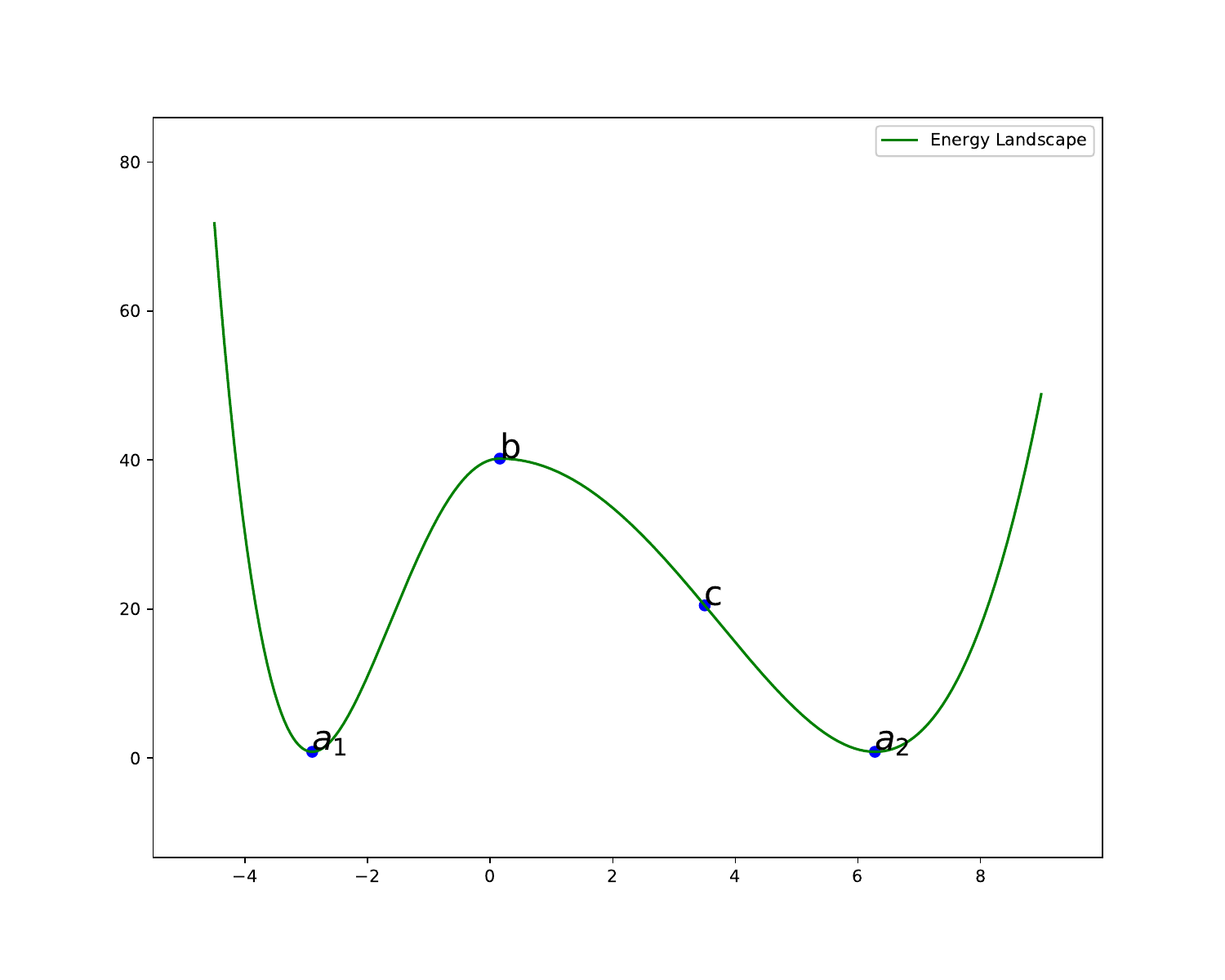}} %
 \subfigure[\small{High-Dimensional Escape}]{\includegraphics[width =0.32\columnwidth ]{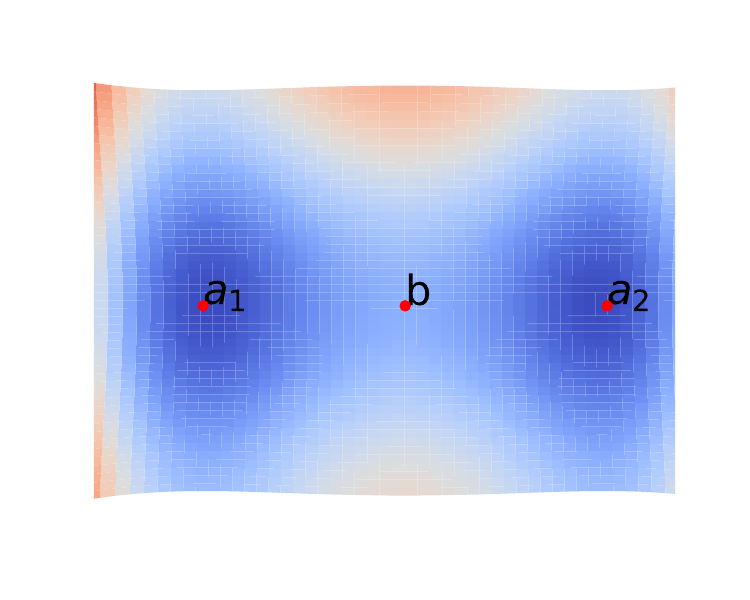}}
\caption{The illustration of Kramers Escape Problems \citep{xie2020diffusion}. Assume there are two valleys, Sharp Valley $a_{1}$ and Flat Valley $a_{2}$. Also Col b is the boundary between two valleys. $a_{1}$ and $a_{a}$ are minima of two neighboring valleys. $b$ is the saddle point separating the two valleys. $c$ locates outside of Valley $a_{1}$. }
\label{fig:landscape}
\end{figure}

\textbf{Mean Escape Time:} The mean escape time is the expected time for a particle governed by Equation \ref{eq:GLD} to escape from Sharp Valley $a_{1}$ to Flat Valley $a_{2}$, seen in Figure \ref{fig:landscape}. The mean escape time is widely used in related statistical physics and stochastic process \citep{van1992stochastic,nguyen2019first}. Related machine learning papers \citep{xie2020diffusion,hu2019diffusion,nguyen2019first} also studied how SGD selects minima by using the concept of the mean escape time.

\textbf{Data Set:} We generate 50000 Gaussian samples as the training data set, where $x \sim \mathcal{N}(0,4I)$. 

\textbf{Hyperparameters}: The batch size is set $10$. No weight decay. The learning rate: $0.001$ for Adai, $0.0001$ for Momentum (with $\beta=0.9$), and $0.03$ for Adam.

\textbf{Test Function:} 
Styblinski-Tang Function is a commonly used function in nonconvex optimization, written as
\begin{align*}
f(\theta) &= \frac{1}{2}  \sum_{i=1}^{N}(\theta_{i}^{4} - 16\theta_{i}^{2} + 5\theta_{i}) .
\end{align*}
We use 10-dimensional Styblinski-Tang Function as the test function, and Gaussian samples as training data.
\begin{align*}
L(\theta) = f(\theta - x),
\end{align*}
where data samples $x \sim  \mathcal{N}(0,4I)$. The one-dimensional Styblinski-Tang Function has one global minimum located at $a = -2.903534$, one local minimum located at $d$, and one saddle point $b=0.156731$ as the boundary separating Valley $a_{1}$ and Valley $a_{2}$. For a n-dimensional Styblinski-Tang Function, we initialize parameters $\theta_{t=0}  = \frac{1}{\sqrt{k}} (-2.903534, \ldots, -2.903534)$, and set the valley's boundary as $\theta_{i} <   \frac{1}{\sqrt{k}}  0.156731$, where $i$ is the dimension index. We record the number of iterations required to escape from the valley to the outside of valley. 

\textbf{Observation}: we observe the number of iterations from the initialized position to the terminated position. As we are more interested in the number of iterations than ``dynamical time'' in practice, we use the number of iterations to denote the mean escape time and ignore the time unit $\eta$ in ``dynamical time''. We repeat experiments 100 times to estimate the escape rate $\Gamma$ and the mean escape time $\tau$. As the escape time is approximately a random variable obeying an exponential distribution, $t \sim Exponential(\Gamma)$, the estimated escape rate can be written as
\begin{align}
\Gamma = \frac{100 - 2}{\sum_{i=1}^{100}t_{i}}.
\end{align}
The $95\%$ confidence interval of this estimator is 
\begin{align}
\Gamma (1 - \frac{1.96}{\sqrt{100}}) \leq \Gamma \leq \Gamma (1 + \frac{1.96}{\sqrt{100}}).
\end{align}

\section{Supplementary Empirical Results}
\label{sec:additional}

Some papers \citep{luo2019adaptive,chen2018closing} argued that their proposed Adam variants may generalize as well as SGD. But we found that this argument is contracted with our comparative experimental results, such as Table \ref{table:test}. The main problem may lie in weight decay. SGD with weight decay $\lambda = 0.0001$, a common setting in related papers, is not a good baseline on CIFAR-10 and CIFAR-100, as $\lambda = 0.0005$ often shows better generalization, seen in Figures \ref{fig:wdcifar}. We also conduct comparative experiments with $\lambda = 0.0001$, seen in Table \ref{table:testwd}. While some Adam variants under this setting sometimes may compare with SGD due to the lower baseline performance of SGD, Adai and SGD with fair weight decay still show superior test performance. 

\begin{figure}
    \centering
       \includegraphics[width=0.32\textwidth]{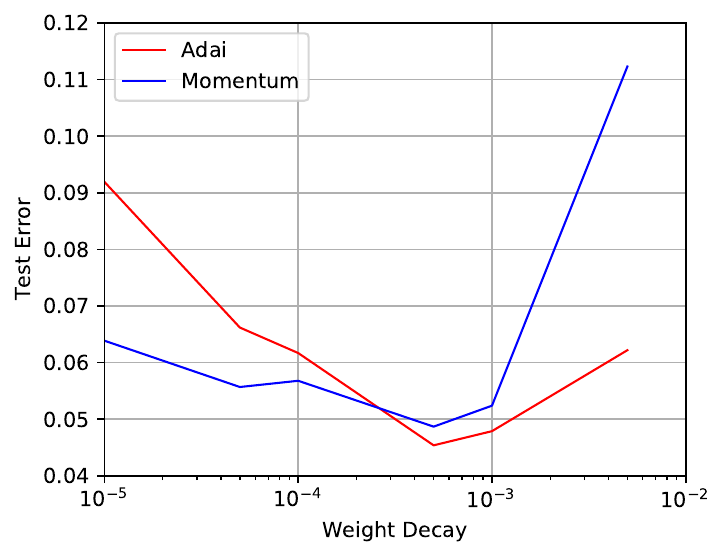} 
       \includegraphics[width=0.32\textwidth]{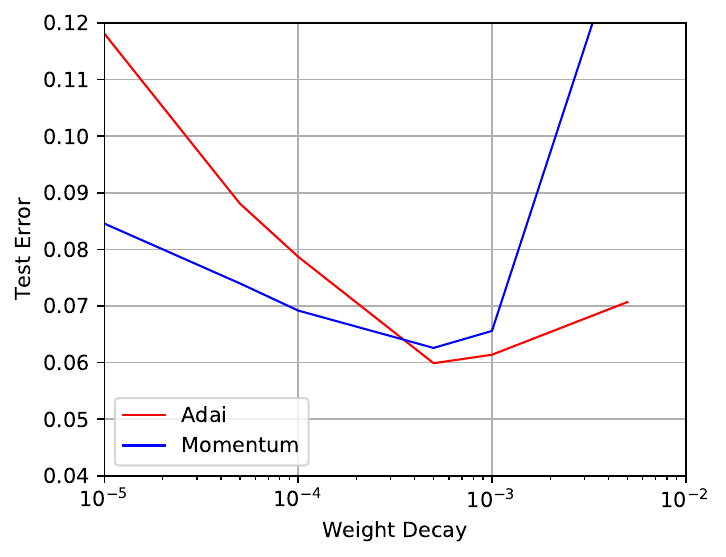} 
        \caption{The test errors of ResNet18 on CIFAR-10 and VGG16 on CIFAR-10 under various weight decay. Left: ResNet18. Right: VGG16. The optimal test performance corresponds to $\lambda = 0.0005$. Obviously, Adai has better optimal test performance than SGD with Momentum.}
 \label{fig:wdcifar}
\end{figure}

 \begin{table*}[t]
\caption{Test performance comparison of optimizers with the weight decay hyperparmeter $\lambda = 0.0001$. In this setting, some Adam variants may compare with SGD mainly because the baseline performance of SGD is lower than the baseline performance in Table \ref{table:test}. The test errors of AdaiW, Adai, and Momentum in middle columns is the original results in Table \ref{table:test}.}
\label{table:testwd}
\begin{center}
\begin{small}
\begin{sc}
\resizebox{\textwidth}{!}{%
\begin{tabular}{ll | lll | lllllllll}
\toprule
Dataset & Model  & AdaiW & Adai & SGD M  &  SGD M &Adam  &AMSGrad & AdamW & AdaBound & Padam & Yogi & RAdam  \\
\midrule
CIFAR-10   &ResNet18        & $\mathbf{4.59}_{0.16}$  & $4.74_{0.14}$ &   $5.01_{0.03}$    & $5.58$  & $6.08 $ & $5.72$ &  $5.33$ &  $6.87$ & $5.83$ & $5.43$ & $5.81$ \\
                   &VGG16            & $\mathbf{5.81}_{0.07}$  & $6.00_{0.09}$ &   $6.42_{0.02}$   & $6.92$  & $7.04$  & $6.68$  & $6.45$ & $7.33$ & $6.74$ & $6.69$ & $6.73$ \\
CIFAR-100  &ResNet34      & $21.05_{0.10}$ & $\mathbf{20.79}_{0.22}$ &  $21.52_{0.37}$ & $24.92$    & $25.56$  & $24.74$ &  $23.61$ & $25.67$ & $25.39$ & $23.72$ & $25.65$ \\
	           &DenseNet121   &  $\mathbf{19.44}_{0.21}$ & $19.59_{0.38}$ &  $19.81_{0.33}$  & $20.98$  & $24.39 $  & $22.80$ &  $22.23$ & $24.23$ & $22.26$ & $22.40$ & $22.40$ \\
                   &GoogLeNet    & $\mathbf{20.50}_{0.25}$ & $20.55_{0.32}$ &   $21.21_{0.29}$   & $21.89$    & $24.60$  & $24.05$ &  $21.71$ & $25.03$ & $ 26.69 $ & $22.56$ & $22.35$ \\
                  
\bottomrule
\end{tabular}
}
\end{sc}
\end{small}
\end{center}
\end{table*}

\begin{figure}
\centering
\subfigure[\small{ResNet18}]{\includegraphics[width =0.24\columnwidth ]{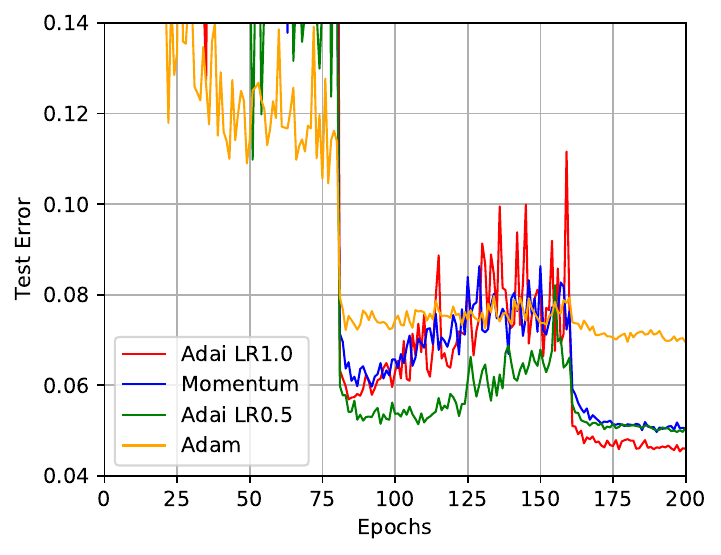}} 
\subfigure[\small{VGG16}]{\includegraphics[width =0.24\columnwidth ]{Pictures/CIFAR10_error_vgg16.pdf}}  
\subfigure[\small{DenseNet121}]{\includegraphics[width =0.24\columnwidth ]{Pictures/CIFAR100_error_densenet121.pdf}} 
\subfigure[\small{GoogLeNet}]{\includegraphics[width =0.24\columnwidth ]{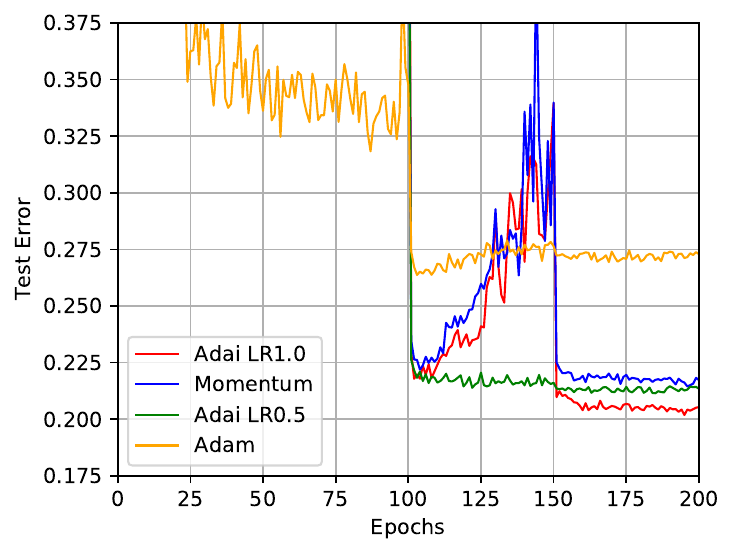}}  \\
\subfigure{\includegraphics[width =0.24\columnwidth ]{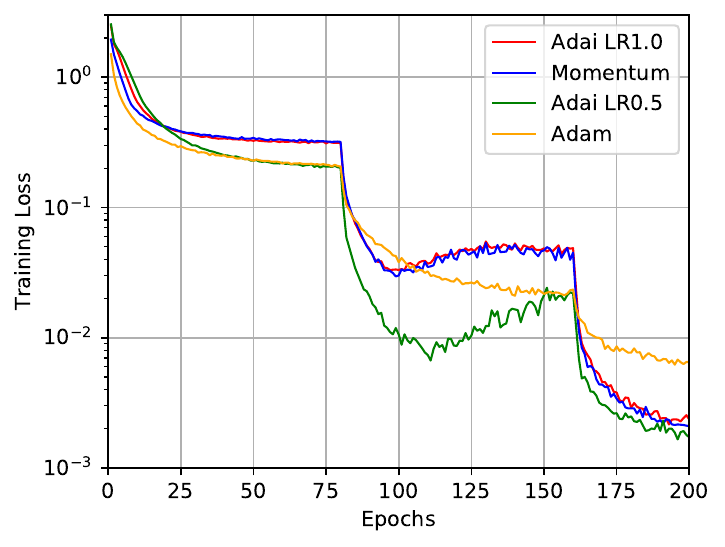}} 
\subfigure{\includegraphics[width =0.24\columnwidth ]{Pictures/CIFAR10_loss_vgg16.pdf}} 
\subfigure{\includegraphics[width =0.24\columnwidth ]{Pictures/CIFAR100_loss_densenet121.pdf}}  
\subfigure{\includegraphics[width =0.24\columnwidth ]{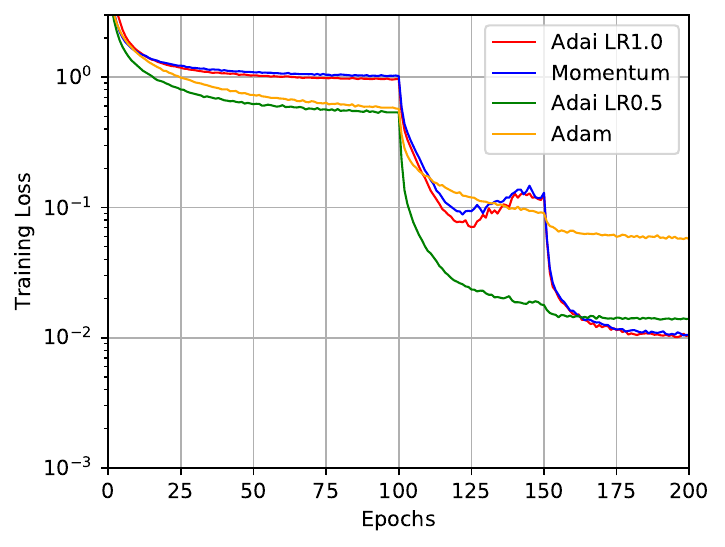}}  
\caption{Generalization and Convergence Comparison. Subfigures (a)-(b): ResNet18 and VGG16 on CIFAR-10. Subfigures (c)-(d): DenseNet121 and GoogLeNet on CIFAR-100. Top Row: Test curves. Bottom Row: Training curves. Adai with $\eta=1$ and $\eta=0.5$ converge similarly fast to SGD with Momentum and Adam, respectively, and Adai generalizes significantly better than SGD with Momentum and Adam.}
 \label{fig:appcifar}
\end{figure}

\begin{figure}
\centering
\includegraphics[width =0.4\columnwidth ]{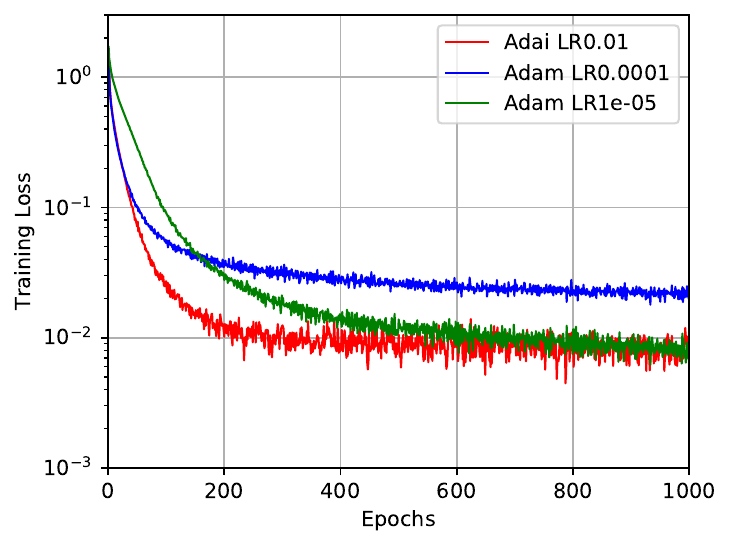}
  \caption{Convergence comparison by training VGG16 on CIFAR-10 for 1000 epochs with the fixed learning rate. When they converge similarly fast, Adai converges in a lower training loss in the end. When they converge in a similarly low training loss, Adai converges faster during training.}
  \label{fig:converge}
\end{figure}

\begin{figure}
\centering
\subfigure[Test Perplexity]{\includegraphics[width =0.24\columnwidth ]{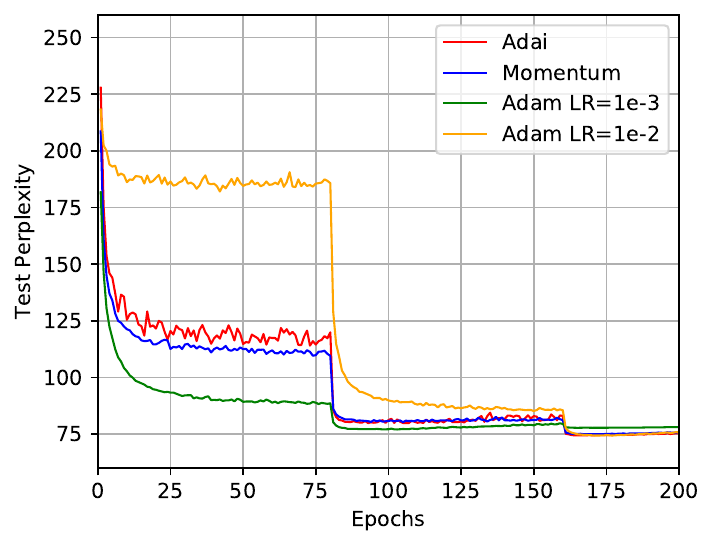}}  
\subfigure[Training Perplexity]{\includegraphics[width =0.24\columnwidth ]{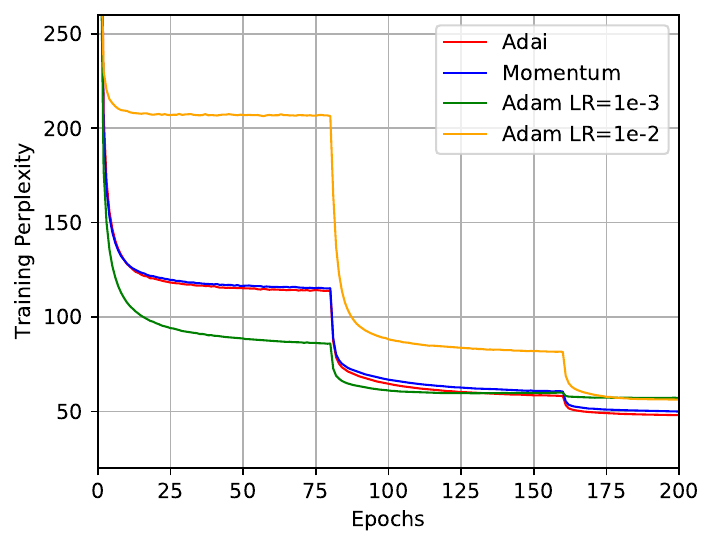}} 
\caption{Language Modeling. The learning curves of Adai, SGD (with Momentum), and Adam for LSTM on Penn TreeBank. The optimal test perplexity of Adai, SGD, and Adam are $74.3$, $74.9$, and $74.3$, respectively. Adai and Adam outperform SGD, while Adai may lead to a lower training loss than Adam and SGD. }
 \label{fig:lstmadai}
\end{figure}

We display all learning curves of Adai, SGD, and Adam on CIFAR-10 and CIFAR-100 in Figure \ref{fig:appcifar}. We further compare convergence of Adai and Adam with the fixed learning rate scheduler for 1000 epochs in Figure \ref{fig:converge}. 

For Language Modeling, we display the results of LSTM in Figure \ref{fig:lstmadai}.
 
Finally, we use the measure of the expected minima sharpness proposed by \citet{neyshabur2017exploring} to compare the sharpness of minima learned by Adai, Momentum, and Adam. The expected minima sharpness is defined as $\mathbb{E}_{\zeta }[L(\theta^{\star} + \zeta) - L(\theta^{\star})]$, where $\zeta$ is Gaussian noise and $\theta^{\star}$ is the empirical minimizer learned by a training algorithm. If the loss landscape near $\theta^{\star}$ is sharp, the weight-perturbed loss $\mathbb{E}_{\zeta}[L(\theta^{\star} + \zeta)]$ will be much larger than $L(\theta^{\star})$. Figure \ref{fig:weightnoise} empirically supports that Adai and Momentum can learn significantly flatter minima than Adam.

\begin{figure}
\center
\subfigure{\includegraphics[width =0.4\columnwidth ]{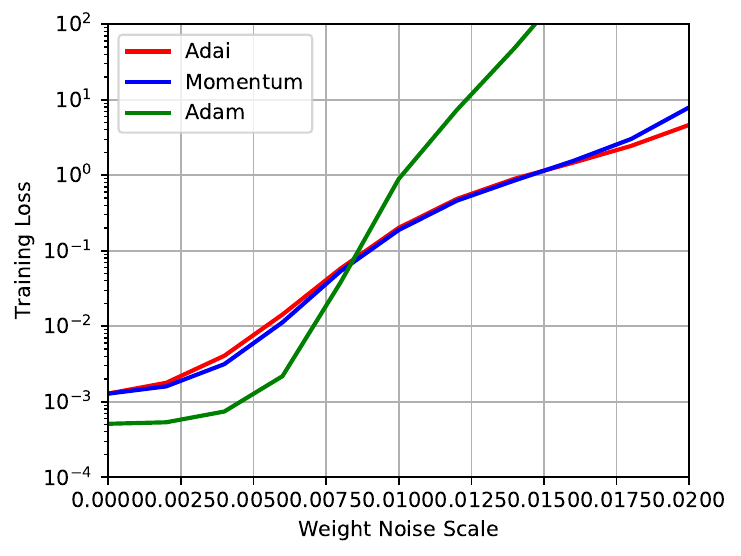}}
  \caption{ The expected minima sharpness analysis of the weight-perturbed training loss landscape of ResNet18 on CIFAR-10. The weight noise scale is the standard deviation of the injected Gaussian noise. The minima learned by Adai and SGD are more robust to weight noise. Obviously, Adai and Momentum can learn much flatter minima than Adam in terms of the expected minima sharpness.}
 \label{fig:weightnoise}
\end{figure}

\section{Adai with Stable/Decoupled Weight Decay}
\label{sec:appAdaiW}

\begin{algorithm}[H]
 \label{algo:AdaiW}
 \caption{AdaiS/AdaiW}
       $ g_{t} = \nabla \hat{L}(\theta_{t-1})$\; \\
       $ v_{t} = \beta_{2} v_{t-1} + (1-\beta_{2}) g_{t}^{2} $\; \\
       $ \hat{v}_{t} = \frac{v_{t}}{1 - \beta_{2}^{t}} $\; \\
       $ \bar{v}_{t} = mean(\hat{v}_{t}) $\; \\
       $ \beta_{1,t} =  (1 - \frac{\beta_{0}}{\bar{v}_{t} }\hat{v}_{t}).Clip(0,1-\epsilon)$\; \\
       $ m_{t} = \beta_{1,t} m_{t-1} + (1-\beta_{1,t}) g_{t} $\; \\
       $ \hat{m}_{t} = \frac{m_{t}}{1 - \prod_{z=1}^{t} \beta_{1,z}} $\; \\
       $ \theta_{t} = \theta_{t-1} - \eta \hat{m}_{t}  \color{blue} - \lambda \eta \theta_{t-1}$\; \\
\end{algorithm}

\section{Expressions of Adam Dynamics}
\label{sec:appAdam}

In this section, we discuss why Adam dynamics can also be expressed as Equation \eqref{eq:MomentumLD} similarly to Momentum dynamics. 

The derivation that generalizes Momentum dynamics to Adam dynamics is trivial. We only need to replace $\eta$ by the adaptive $\hat{\eta}$ in Equations \eqref{eq:m-rules}, \eqref{eq:motion-equation},  and \eqref{eq:MomentumMotion}. We write the updating rule of Adam as
\begin{align}
\label{eq:adam-rules}
\begin{cases}
      & m_{t} = \beta_{1} m_{t-1} + \beta_{3} g_{t}, \\
      & \theta_{t+1} = \theta_{t} -  \hat{\eta} m_{t},
 \end{cases}  
\end{align}
where $\beta_{3} = 1 -\beta_{1}$, $\beta_{1}$ is the hyperparameter, and $\hat{\eta}$ is the adaptive learning rate. For simplicity, it is fine to consider the updating rules as element-wise operation. We can also write the Newtonian motion equations of Adam with the mass $M$ and the damping coefficient $\gamma$ as
\begin{align}
\label{eq:adammotion-equation}
\begin{cases}
      &  r_{t} =  (1 -  \gamma  dt ) r_{t-1} + \frac{F}{M} dt \\
      & \theta_{t+1} = \theta_{t} +  r_{t} dt,
 \end{cases}  
\end{align}
where $r_{t} = - m_{t}$, $F = g_{t}$, $dt = \hat{\eta}$, $1 - \gamma dt = \beta_{1}$, and $\frac{dt}{M} = \beta_{3} = 1 -\beta_{1}$.
Thus, we obtain the differential-form motion equation of Adam as
\begin{align}
M \ddot{\theta}  = - \gamma M \dot{\theta} + F, 
\label{eq:AdamMotion}
\end{align}
where the mass $M = \frac{\hat{\eta}}{1-\beta_{1}} $ and the damping coefficient $\gamma = \frac{1-\beta_{1}}{\hat{\eta}} $.

This shows that the analysis can be applied to both Adam dynamics and Momentum dynamics. This is not surprising, because the Newtonian motion equations, including Equations \eqref{eq:motion-equation} and \eqref{eq:MomentumMotion}, are universal. The basic updating rules, Equations \eqref{eq:m-rules}, generally holds for any optimizer that uses Momentum, including Adam. As the terms in Equations \eqref{eq:m-rules} can correspond to the terms in Equations \eqref{eq:motion-equation} one by one, any dynamics governed by Equations \eqref{eq:m-rules} can also corresponds to a differential-form Equation \eqref{eq:MomentumMotion}, where $M$ and $\gamma$ may have different expressions. 

We note that Equations \eqref{eq:motion-equation} in our paper is not contradictory to the SDEs of Adam in \citet{zhou2020towards}, as long as we let the expressions of $M$ and $\gamma$ follow Adam dynamics. In fact, the updating rule of $v_{t}$ of the SDEs in \citet{zhou2020towards} can be incorporated into the expressions of $M$ and $\gamma$. Equation \eqref{eq:motion-equation} in our work, which is a single stochastic differential equation, may be more concise for analyzing optimization dynamics than the SDEs in \citet{zhou2020towards}.

We also point out that the most important property of using Momentum is to employ the phase-time dynamics for training of deep neural networks. If we let $\beta_{1} = 0$ in Adam, Adam will be reduced to RMSprop. Similarly SGD, RMSprop has no the momentum drift effect on saddle-point escaping but has an isotropic diffusion effect. Our theoretical analysis about Momentum and Adam helps us understand how manipulate element-wise momentum and learning rate separately to improve saddle-point escaping as we want.

\end{document}